\documentclass{article}

\PassOptionsToPackage{numbers,square,sort&compress}{natbib}
\usepackage[preprint]{neurips_2026}

\usepackage[utf8]{inputenc}
\usepackage[T1]{fontenc}
\usepackage{amsmath,amssymb,amsthm}
\usepackage{mathtools}
\usepackage{graphicx}
\usepackage{booktabs}
\usepackage{multirow}
\usepackage{longtable}
\usepackage{algorithm,algorithmic}
\usepackage{xcolor}
\usepackage{listings}
\usepackage{subcaption}
\usepackage{float}
\usepackage{microtype}
\usepackage{placeins}
\usepackage{url}
\usepackage{nicefrac}
\usepackage{titletoc}
\usepackage{hyperref}
\usepackage[capitalize]{cleveref}
\hypersetup{
  colorlinks=true,
  linkcolor=blue,
  citecolor=blue,
  urlcolor=blue
}

\definecolor{runcommandbg}{RGB}{248,249,250}
\definecolor{runcommandframe}{RGB}{210,214,219}
\definecolor{runcommandcomment}{RGB}{93,98,107}
\lstdefinestyle{runcommands}{
  basicstyle=\ttfamily\small,
  backgroundcolor=\color{runcommandbg},
  breaklines=true,
  columns=fullflexible,
  commentstyle=\color{runcommandcomment},
  frame=single,
  framexleftmargin=6pt,
  framexrightmargin=6pt,
  framextopmargin=6pt,
  framexbottommargin=6pt,
  framerule=0.4pt,
  keepspaces=true,
  morecomment=[l]{\#},
  rulecolor=\color{runcommandframe},
  showstringspaces=false,
  xleftmargin=0.02\textwidth,
  xrightmargin=0.02\textwidth
}

\newtheorem{theorem}{Theorem}
\newtheorem{proposition}[theorem]{Proposition}
\newtheorem{lemma}[theorem]{Lemma}
\newtheorem{corollary}[theorem]{Corollary}
\theoremstyle{definition}
\newtheorem{definition}{Definition}
\newtheorem{example}{Example}
\newtheorem{assumption}{Assumption}
\theoremstyle{remark}
\newtheorem{remark}{Remark}

\newcommand{\R}{\mathbb{R}}
\newcommand{\N}{\mathbb{N}}
\newcommand{\Z}{\mathbb{Z}}
\newcommand{\E}{\mathrm{E}}
\newcommand{\Lop}{\mathcal{L}}
\newcommand{\G}{\mathcal{G}}

\newcommand{\bxi}{\boldsymbol{\xi}}
\newcommand{\bv}{\mathbf{v}}
\newcommand{\pr}{\mathrm{pr}}

\title{EqOD: Symmetry-Informed Stability Selection \\ for PDE Identification}

\author{%
  Gnankan Landry Regis N'guessan$^{1,2,3}$ \\
  \And
  Bum Jun Kim$^{4,\ast}$ \\
  \AND
  \begin{minipage}{0.95\textwidth}
  \centering\normalfont\small
  $^{1}$Axiom Research Group \\
  $^{2}$Department of Applied Mathematics and Computational Science,
  NM-AIST, Tanzania \\
  $^{3}$African Institute for Mathematical Sciences (AIMS),
  Research and Innovation Centre, Rwanda \\
  $^{4}$The University of Tokyo, Japan \\
  $^{\ast}$Corresponding author:
  \href{mailto:bumjun.kim@weblab.t.u-tokyo.ac.jp}{\texttt{bumjun.kim@weblab.t.u-tokyo.ac.jp}}
  \end{minipage}
}

\begin{document}

\maketitle

\begin{abstract}
Data-driven identification of partial differential equations (PDEs) relies on sparse regression over a candidate library of differential operators, where larger libraries inflate false positives under observation noise and smaller libraries risk missing true terms. We introduce Equivariant Operator Discovery (EqOD), a fully automatic method combining two library reduction mechanisms. When Galilean invariance is detected from trajectory data via a weak-form structural test, EqOD uses the symmetry-reduced library, eliminating terms that our Galilean exclusion result proves to be absent from the governing equation. Otherwise, it applies randomized LASSO stability selection guided by classical false-positive bounds. A residual-based fallback prevents degradation below the full-library baseline. On 8 PDEs at 4 noise levels, EqOD attains $F_1 = 1.000 \pm 0.000$ on Heat at $20\%$ noise, where WF-LASSO obtains $0.475 \pm 0.181$, official PySINDy 2.0 obtains $0.000$, and the WSINDy reimplementation obtains $0.789$. Under the strict criterion that the mean F1 difference exceeds the larger of the two standard deviations, EqOD wins 7 of 32 cells. WF-LASSO wins none, and the remaining 25 cells are ties. Across all 32 cells, EqOD outperforms PySINDy 2.0.0 in 23 of 32 cells, and all 5 PySINDy wins occur on reaction PDEs. External validation on WeakIdent and PINN-SR datasets gives $F_1 = 1.000$ on all 5 clean benchmarks. NLS, 2D, coupled-system, and cylinder-wake extensions are reported. The Galilean library reduction is proved under explicit autonomy and library assumptions. The stability-selection step is motivated by classical false-positive bounds, while formal guarantees for correlated PDE design matrices remain open.
\end{abstract}

\begin{figure}[t!]
\centering
\includegraphics[width=\textwidth]{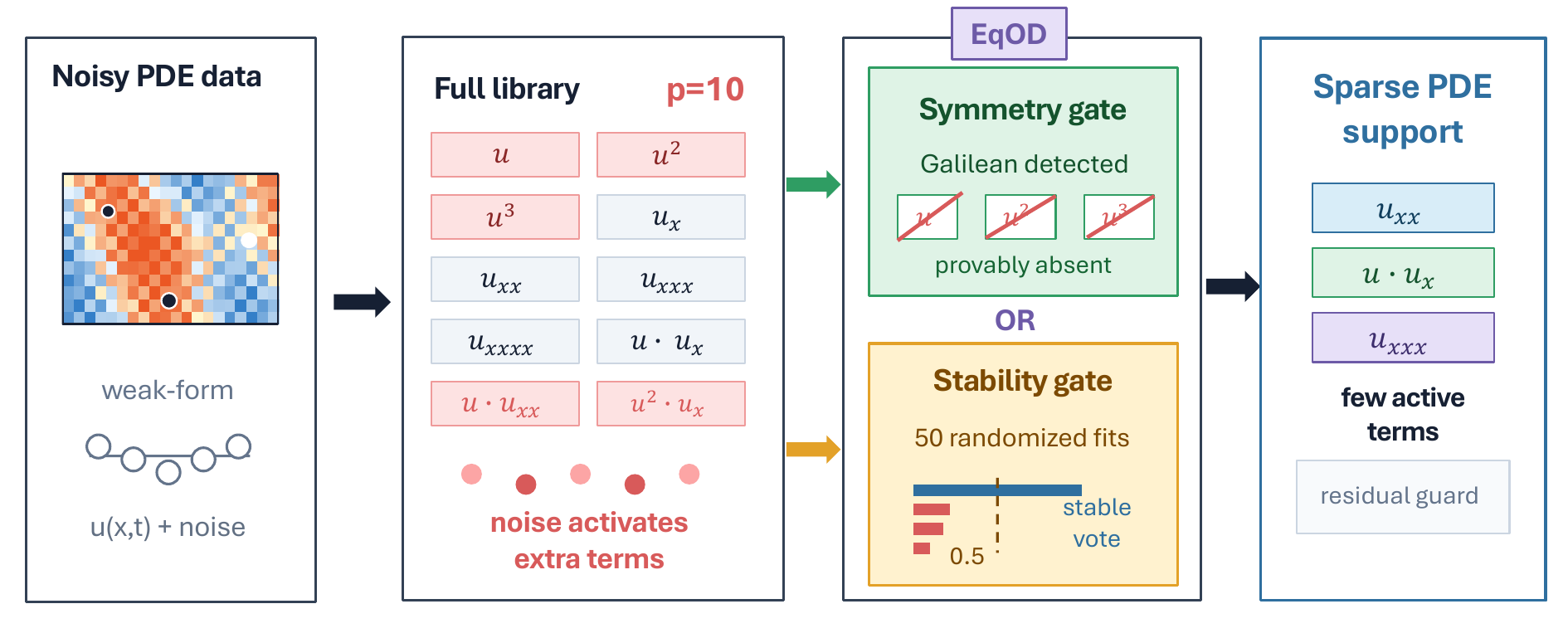}
\caption{High-level EqOD overview. Noisy trajectory data are converted into a weak-form regression system on the full 10-term library. When Galilean invariance is detected, EqOD uses a symmetry gate to remove provably absent reaction terms. Otherwise, it uses a stability gate based on 50 randomized fits to retain consistently selected terms. A residual guard protects the final sparse equation support.}
\label{fig:eqod_overview}
\end{figure}

\section{Introduction}
\label{sec:intro}

\paragraph{The partial differential equation (PDE) identification problem}
Data-driven discovery of PDEs from spatiotemporal trajectory data is a central problem in scientific machine learning. Given noisy measurements $\{u(x_i, t_j) + \epsilon_{ij}\}$ of an unknown physical system on a spatiotemporal grid, the goal is to identify the governing PDE
\begin{equation}\label{eq:pde_general}
u_t = \mathcal{N}[u] = \sum_{j=1}^{s} \xi_j^* \theta_j(u, u_x, u_{xx}, \ldots),
\end{equation}
where $\theta_j$ are the library terms, $\xi_j^*$ are unknown coefficients, and $s$ is the unknown and small number of active terms.

The dominant paradigm, originating with sparse identification of nonlinear dynamics (SINDy) \citep{brunton2016sindy} and later adapted to PDEs through sparse-optimization methods and partial differential equation functional identification of nonlinear dynamics (PDE-FIND) \citep{schaeffer2017learningpde,rudy2017pdefind}, casts PDE identification as sparse regression over a candidate operator library. Specifically, one constructs a candidate library $\Theta = [\theta_1(u), \ldots, \theta_p(u)]$ of $p$ candidate terms and solves
\begin{equation}\label{eq:sindy}
u_t \approx \Theta \bxi, \quad \|\bxi\|_0 \to \min,
\end{equation}
seeking the sparsest coefficient vector $\bxi$ that explains the data. The $\ell_0$ problem is typically relaxed to least absolute shrinkage and selection operator (LASSO) regularization with an $\ell_1$ penalty \citep{tibshirani1996lasso} or solved by sequential thresholded least squares (STLSQ).

\paragraph{The library size problem}
The quality of identification depends critically on the candidate library. Consider the standard polynomial library for PDEs in one spatial and one temporal dimension (1+1D), with up to cubic nonlinearity and fourth-order spatial derivatives:
\begin{equation}\label{eq:library}
\Lop_{\mathrm{std}} = \{u, u^2, u^3, u_x, u_{xx}, u_{xxx}, u_{xxxx}, u \cdot u_x, u \cdot u_{xx}, u^2 \cdot u_x\}.
\end{equation}
This library has $p = 10$ terms. For the Burgers equation, $u_t = -u \cdot u_x + \nu u_{xx}$, only $s = 2$ are active. The remaining $p - s = 8$ are zero-coefficient candidates that observation noise can spuriously activate.

To quantify the false-positive risk, suppose each inactive term has an independent probability $\alpha$ of spurious selection, which is the per-term false-positive rate. The probability of observing at least one false positive is $P(\text{at least one false positive}) = 1 - (1 - \alpha)^{p - s}$.
For $p = 10$, $s = 2$, and $\alpha = 0.05$, we obtain $1 - 0.95^{8} = 0.337$. Reducing the library to $p = 7$ after removing 3 terms via symmetry gives $1 - 0.95^{5} = 0.226$, a 33\% reduction. Further reducing to $p = s$, so that only the true support remains, gives zero probability under this simplified model.
This false-positive calculation leaves the central challenge of reducing the library without knowing the correct answer in advance.

\paragraph{Two complementary reduction mechanisms}
We propose two mechanisms, unified within a single pipeline. The physics-based reduction described in Section~\ref{sec:symmetry_detection} uses Lie point symmetry. When the PDE has Galilean invariance, represented by $u \to u + c$ and $x \to x + ct$, certain library terms are provably absent from any PDE with that symmetry. Detecting this symmetry from trajectory data enables physics-based library pruning with zero false negatives.

The statistics-based reduction in Section~\ref{sec:stability_selection} handles the cases without symmetry. The randomized LASSO stability selection method of \citet{meinshausen2010stability} identifies terms that are consistently selected across random subsamples, and \citet{maddu2022stability} showed that stability-based model selection can improve noisy PDE inference. In favorable regimes, true terms remain consistently selected despite perturbations, whereas noise-correlated terms are selected only under specific conditions. Classical and complementary-pairs stability-selection theory provides finite-sample false-positive bounds under exchangeability or unimodality assumptions \citep{meinshausen2010stability,shah2013stability}. For correlated PDE libraries, EqOD uses these bounds as design guidance and relies on empirical stability profiles for support evidence. The key design principle is to trust the physics when strong structural information is available through detected Galilean symmetry and to trust the statistics otherwise.

A detailed related-work discussion is provided in Appendix~\ref{sec:prior_work}. EqOD's distinguishing gap is the automatic chain from data to symmetry detection, library reduction, and PDE identification.

\subsection{Contributions}

\paragraph{Automatic Lie symmetry detection from data.}
We introduce a weak-form structural test in Algorithm~\ref{alg:galilean} that identifies Galilean invariance from trajectory data with 100\% accuracy, F1 = 1.000, at a threshold $\tau = 0.05$ across 75 test cases generated from 5 PDEs, 3 noise levels, and 5 seeds (Table~\ref{tab:threshold}). The test uses a 6-term targeted regression to disambiguate the convective term $u \cdot u_x$ from reaction terms $u$, $u^2$, and $u^3$.

\paragraph{Provable library reduction.}
The determining equations of the Galilean generator exclude 5 of 10 standard library terms and fix the convective coefficient to $-1$ (Theorem~\ref{thm:reduction}). The implementation conservatively removes 3 terms, while the downstream LASSO handles the remaining 2 exclusions. The worst-case false-positive count drops by 62.5\% for Burgers (Corollary~\ref{cor:fp_symmetry}).

\paragraph{Stability-selected support estimation.}
We use randomized LASSO stability selection as a non-Galilean library-pruning gate inside EqOD, building upon stability-selection theory and prior PDE stability-selection work \citep{meinshausen2010stability,maddu2022stability}. The novelty here is not in stability selection itself but in its role in an automatic mode-selection pipeline that combines symmetry detection, path-dependent library reduction, and residual fallback.

\paragraph{Unified pipeline with automatic mode selection.}
Algorithm~\ref{alg:eqod} automatically selects the physics-based path when Galilean symmetry is detected and the statistics-based path otherwise, with a residual-based fallback to prevent degradation. The branch choice determines the library used in the final weak-form LASSO (WF-LASSO) stage, so mode selection affects the estimated support rather than only the reported method label.

\paragraph{Comprehensive empirical validation across 15 PDE systems.}
The primary benchmark contains 8 scalar one-dimensional (1D) PDEs spanning diffusion, convection, dispersion, reaction, and chaos, with 32 noise conditions, 10 seeds each, one official Python SINDy package (PySINDy) baseline, and two additional baselines. Extensions cover the nonlinear Schr\"odinger equation (NLS) with coupled real and imaginary components, 3 systematic two-dimensional (2D) benchmarks for Heat, advection-diffusion, and Navier--Stokes (NS) vorticity, 2 coupled reaction-diffusion systems, and real cylinder wake data. External validation uses 5 official datasets from the WeakIdent and Physics-Informed Neural Network Sparse Regression (PINN-SR) benchmarks. All extension experiments include WF-LASSO and PySINDy baselines.

The rest of the paper is organized as follows. Section~\ref{sec:method} presents the EqOD pipeline. Section~\ref{sec:experiments} reports the primary 1D benchmark, which contains 8 PDEs, 32 conditions, and 3 baselines. Section~\ref{sec:conclusion} concludes. The appendices provide the technical foundations, proofs, implementation and reproducibility details, and extended empirical evidence, including full benchmark tables, baseline comparisons, ablations, external validations, limitations, failure cases, and supplementary figures (Appendices~\ref{app:notation}--\ref{app:figures}).

\section{Proposed Method: Equivariant Operator Discovery}
\label{sec:method}


EqOD is a four-stage pipeline, shown in Algorithm~\ref{alg:eqod}. It takes trajectory data $\{U^{(m)}\}_{m=1}^{M}$ on a spatial grid $x = (x_1, \ldots, x_{N_x})$ and a temporal grid $t = (t_1, \ldots, t_{N_t})$ and returns the identified PDE with coefficients.

\begin{algorithm}[t!]
\caption{EqOD algorithm}
\label{alg:eqod}
\begin{algorithmic}[1]
\REQUIRE Trajectory data $\{U^{(m)}\}_{m=1}^M$, grids $x$, $t$
\ENSURE Identified PDE: $u_t = \sum_j \hat{\xi}_j \theta_j(u)$
\STATE Stage 1, Lie symmetry detection (Section~\ref{sec:symmetry_detection})
\STATE Run 5 symmetry tests: translation ($\partial_x$, $\partial_t$), Galilean ($t\partial_x + \partial_u$), scaling, reflection
\STATE Record detected generators $\G_{\mathrm{det}} \subseteq \{\bv_1, \ldots, \bv_5\}$
\STATE Stage 2, adaptive library selection (Section~\ref{sec:library_selection})
\IF{Galilean symmetry detected ($t\partial_x + \partial_u \in \G_{\mathrm{det}}$)}
    \STATE $\Lop_{\mathrm{red}} \leftarrow \Lop_{\mathrm{std}} \setminus \{u, u^2, u^3\}$ \quad {\small 7 terms, physics-based}
    \IF{odd reflection is also detected ($-x\partial_x - u\partial_u \in \G_{\mathrm{det}}$)}
        \STATE $\Lop_{\mathrm{red}} \leftarrow \Lop_{\mathrm{red}} \setminus \{u \cdot u_{xx}\}$ {\small 6 terms}
    \ENDIF
    \STATE mode $\leftarrow$ \texttt{symmetry}
\ELSE
    \STATE Build a weak-form system $(\Theta, \mathbf{b})$ on $\Lop_{\mathrm{std}}$, using an odd-reflection-pruned library if odd reflection is detected
    \STATE Run stability selection: $B = 50$ randomized LASSO runs
    \STATE $\hat{\pi}_j \leftarrow$ selection probability for each term $j$
    \STATE $\Lop_{\mathrm{red}} \leftarrow \{j : \hat{\pi}_j > 0.5\}$
    \IF{$|\Lop_{\mathrm{red}}| < 1$}
        \STATE $\Lop_{\mathrm{red}} \leftarrow \Lop_{\mathrm{std}}$ {\small full-library fallback}
    \ENDIF
    \STATE mode $\leftarrow$ \texttt{stability}
\ENDIF
\STATE Stage 3, sparse identification (Appendix~\ref{sec:sparse_id})
\STATE $\hat{\bxi} \leftarrow$ WF-LASSO with 5-fold cross-validation (CV) on $\Lop_{\mathrm{red}}$
\STATE Two rounds of adaptive thresholding + ordinary least squares (OLS) debiasing
\STATE Stage 4, residual-based fallback (Appendix~\ref{sec:fallback})
\STATE $\gamma \leftarrow 1.5$ if mode = \texttt{symmetry}, else $1.2$
\IF{$\|r_{\mathrm{EqOD}}\|^2 > \gamma \cdot \|r_{\text{WF-LASSO}}\|^2$}
    \STATE Use the full-library WF-LASSO result instead
\ENDIF
\RETURN $\hat{\bxi}$, identified equation, mode
\end{algorithmic}
\end{algorithm}

\subsection{Stage 1: Lie symmetry detection from data}
\label{sec:symmetry_detection}

The symmetry detector runs 5 independent tests on the trajectory data. Each test returns a Boolean detection result and a score quantifying the strength of evidence. We summarize the tests here, with the full Galilean detector details in Appendix~\ref{app:galilean_detector_details}.

\paragraph{Galilean detection via weak-form structural test.}
The central detection challenge is Galilean invariance. EqOD tests whether the nonlinear convective term $u \cdot u_x$ contributes significantly to the PDE dynamics by solving a six-term weak-form regression and computing its energy fraction. We set $\tau = 0.05$, which gives F1 = 1.000 on the threshold sweep in Table~\ref{tab:threshold}. The detailed motivation, full procedure, basis choice, and Algorithm~\ref{alg:galilean} are reported in Appendix~\ref{app:galilean_detector_details}. The remaining catalog symmetry tests use lightweight Fourier or parity diagnostics.

\paragraph{Galilean detection accuracy.}
Table~\ref{tab:threshold} reports the detection accuracy as a function of threshold $\tau$.

\begin{table}[t!]
\centering
\caption{Galilean detection: precision, recall, and F1 as functions of threshold $\tau$ across 5 PDEs, 3 noise levels $\{0, 5\%, 10\%\}$, and 5 seeds, for 75 test cases.}
\label{tab:threshold}
\small
\begin{tabular}{@{}lcccccc@{}}
\toprule
$\tau$ & 0.01 & 0.02 & \textbf{0.05} & 0.10 & 0.15 & 0.20 \\
\midrule
Precision & 0.698 & 0.938 & \textbf{1.000} & 1.000 & 1.000 & 1.000 \\
Recall & 1.000 & 1.000 & \textbf{1.000} & 0.867 & 0.700 & 0.500 \\
F1 & 0.822 & 0.968 & \textbf{1.000} & 0.929 & 0.824 & 0.667 \\
\bottomrule
\end{tabular}
\end{table}

The threshold $\tau = 0.05$ achieves perfect classification. At $\tau = 0.01$, there are 13 false positives from reaction PDEs misclassified as Galilean. At $\tau = 0.10$, there are 4 false negatives from Galilean PDEs missed at high noise. The gap between the largest non-Galilean energy fraction, $\approx 0.03$, and the smallest Galilean energy fraction, $\approx 0.08$, provides a robust margin. The threshold was selected on a subset of 5 benchmark PDEs used in the main experiments, so it is an in-sample tuning choice. The wide plateau, where any $\tau \in [0.03, 0.07]$ gives F1 = 1.000, suggests the threshold is not overfit, but held-out validation on PDEs outside the benchmark, such as Sine--Gordon, Benjamin--Bona--Mahony, and Allen--Cahn, would provide stronger evidence. The sweep covers 30 Galilean true-positive cases from Burgers and KdV and 45 non-Galilean true-negative cases from Heat, Fisher-KPP, and Adv-Diff; the remaining benchmark PDEs, KS, KdV--Burgers, and React-Diff, were correctly classified in the main benchmark.

\paragraph{Spatial translation.}
\label{sec:translation_test}
On periodic domains, a spatial shift $u(x, t) \to u(x + a, t)$ only changes the phase of $\hat{u}(k)$, which cancels in the estimated symbol $\hat{\sigma}(k) = \sum_i \overline{\hat{u}(k, t_i)} \hat{u}_t(k, t_i) / \sum_i |\hat{u}(k, t_i)|^2$. We compare $\hat{\sigma}$ before and after spatial shifts of $N_x/8$, $N_x/4$, and $N_x/3$ points. The detection threshold is $\delta_{\mathrm{trans}} < 0.05$.

\paragraph{Temporal translation.}
For autonomous PDEs, the dissipative structure $\mathrm{Re}[\hat{\sigma}(k)]$ is time-independent, while the imaginary part varies for nonlinear PDEs due to mean-flow interaction. We compare $\mathrm{Re}[\hat{\sigma}]$ estimated from the first and second halves of the trajectory, using adaptive windowing to handle strongly dissipative PDEs where signal decays to zero. The detection threshold is $\delta_t < 0.4$.

\paragraph{Scaling.}
\label{sec:scaling_test}
If the PDE has scaling symmetry, $\mathrm{Re}[\hat{\sigma}(k)]$ follows a monomial power law: $\mathrm{Re}[\hat{\sigma}] \propto |k|^\beta$. We perform weighted log-log regression comparing $\log|\mathrm{Re}[\hat{\sigma}]|$ with $\log|k|$ on reliable modes whose power exceeds 5\% of the maximum. The detection criterion is $R^2 > 0.90$.

\paragraph{Reflection.}
\label{sec:reflection_test}
Even reflection is detected by $\|U - U_{\mathrm{flip}}\|^2 / \|U\|^2 < 0.1$, where $u(-x) = u(x)$. Odd reflection is detected by $\|U + U_{\mathrm{flip}}\|^2 / \|U\|^2 < 0.1$, where $u(-x) = -u(x)$. These tests are initial-condition-dependent, so EqOD uses reflection only as an auxiliary signal: odd reflection can further prune the library when paired with stronger structural evidence, but standalone reflection detections are not treated as the main reduction mechanism.

\subsection{Stage 2: Adaptive library selection}
\label{sec:library_selection}

\subsubsection{Physics-based path: symmetry reduction}
\label{sec:physics_path}

When Galilean symmetry is detected, Proposition~\ref{prop:invariant_terms} guarantees that the terms $\{u, u^2, u^3\}$ cannot appear in the PDE because they are not differential invariants and do not appear in any invariant combination beyond $u \cdot u_x$. The reduced library is:
\begin{equation}\label{eq:reduced_library}
\Lop_G = \{u_x, u_{xx}, u_{xxx}, u_{xxxx}, u \cdot u_x, u \cdot u_{xx}, u^2 \cdot u_x\}, \quad |\Lop_G| = 7.
\end{equation}

If odd reflection is additionally detected, EqOD applies a parity prune to the Galilean implementation library. The pure-power term $u^2$ has already been removed by Galilean pruning, so the additional retained-library removal is $u \cdot u_{xx}$, giving $|\Lop_{G,\mathrm{odd}}| = 6$. Figure~\ref{fig:library_reduction} summarizes the path-dependent library sizes and the Heat stability-selection example.

\begin{figure}[t!]
\centering
\includegraphics[width=0.99\textwidth]{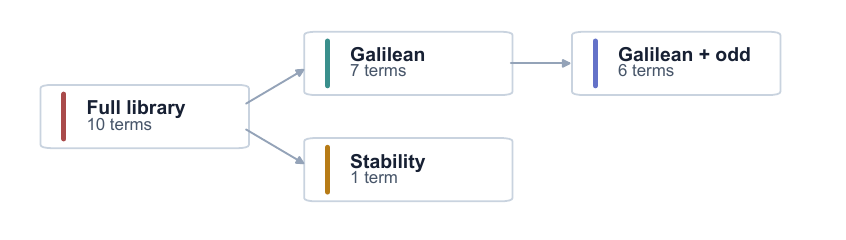}
\caption{Library reduction by EqOD pathway. The standard library has 10 candidate operators. The Galilean path removes the reaction terms $\{u, u^2, u^3\}$, retaining 7 terms. Adding odd reflection removes the remaining parity-incompatible retained term $u \cdot u_{xx}$ and retains 6 terms. In the Heat 10\% stability-selection example, the stable support collapses to $u_{xx}$ only.}
\label{fig:library_reduction}
\end{figure}

\paragraph{Avoiding an extra stability pass after symmetry reduction.}
We tested stability selection on the already-reduced library of 7 terms. It was too aggressive at high noise: on Korteweg--de Vries (KdV) and Kuramoto--Sivashinsky (KS), it sometimes removed true terms, such as $u_{xxx}$ for KdV at 5\% noise, giving F1 = 0.850 instead of 1.000. The physics-based reduction is sufficient because it already removes the reaction-like terms that cause false positives. Adding statistical pruning on top provides no benefit and introduces risk.

\subsubsection{Statistics-based path: stability selection}
\label{sec:stability_selection}

When no Galilean symmetry is detected, we use randomized LASSO stability selection \citep{meinshausen2010stability,buhlmann2011statistics}. The procedure runs $B = 50$ iterations:

\begin{algorithm}[t!]
\caption{Randomized LASSO stability selection}
\label{alg:stability}
\begin{algorithmic}[1]
\REQUIRE Weak-form system $(\Theta, \mathbf{b})$, terms $\Lop$, $B = 50$, $\hat{\pi}_{\mathrm{thr}} = 0.5$
\ENSURE Selection probabilities $\hat{\pi} \in [0, 1]^p$, stable terms $S = \{j : \hat{\pi}_j > 0.5\}$
\STATE Initialize selection counts: $c_j = 0$ for $j = 1, \ldots, p$
\STATE Normalize: $\tilde{\Theta} \leftarrow \Theta / \|\Theta_{\cdot,j}\|$, $\tilde{\mathbf{b}} \leftarrow \mathbf{b} / \|\mathbf{b}\|$
\FOR{$b = 1$ to $B$}
    \STATE Draw subsample: $I_b \subset \{1, \ldots, n\}$, $|I_b| = \lfloor n/2 \rfloor$
    \STATE Draw random penalty weights: $w_j \sim \mathrm{Uniform}(0.5, 1.0)$ for each $j$
    \STATE Form weighted design: $\tilde{\Theta}^{(b)} = \tilde{\Theta}_{I_b} \cdot \mathrm{diag}(w^{-1})$
    \STATE Solve LASSO: $\hat{\bxi}^{(b)} = \arg\min_{\bxi} \lVert\tilde{\mathbf{b}}_{I_b} - \tilde{\Theta}^{(b)} \bxi\rVert_2^2 + \lambda \lVert\bxi\rVert_1$
    \STATE $c_j \leftarrow c_j + \mathbf{1}[|\hat{\xi}_j^{(b)}| > 10^{-6}]$ for each $j$
\ENDFOR
\STATE $\hat{\pi}_j \leftarrow c_j / B$
\RETURN $\hat{\pi}$, $S = \{j : \hat{\pi}_j > 0.5\}$
\end{algorithmic}
\end{algorithm}

\paragraph{Randomized penalty weights.}
Following randomized stability-selection practice \citep{meinshausen2010stability,buhlmann2011statistics}, we use randomized penalty weights $w_j \sim \mathrm{Uniform}(0.5, 1.0)$ rather than uniform penalties. This randomization creates diversity in the LASSO path: true terms with large coefficients are selected regardless of the penalty weight, while noise-correlated terms with small pseudo-coefficients are selected only when their penalty happens to be low. The combined effect of subsampling and random penalties provides a strong stability signal.

\paragraph{Test function density.}
The stability selection path uses a denser grid of test functions, $8 \times 10 = 80$ per trajectory, compared to the Stage 3 pipeline with $5 \times 7 = 35$. This denser grid provides more equations for subsampling and improves the statistical resolution of the selection probabilities.

\subsection{Stages 3 and 4: Sparse identification and fallback}
After library selection, EqOD runs WF-LASSO using the selected library with CV-selected regularization, adaptive thresholding, and OLS debiasing. It then compares the reduced-library residual with the full-library WF-LASSO residual and reverts when the residual ratio exceeds a path-dependent threshold. Full details are in Appendix~\ref{app:sparse_fallback}.

\section{Experiments}
\label{sec:experiments}

\begin{figure}[t!]
\centering
\includegraphics[width=\textwidth]{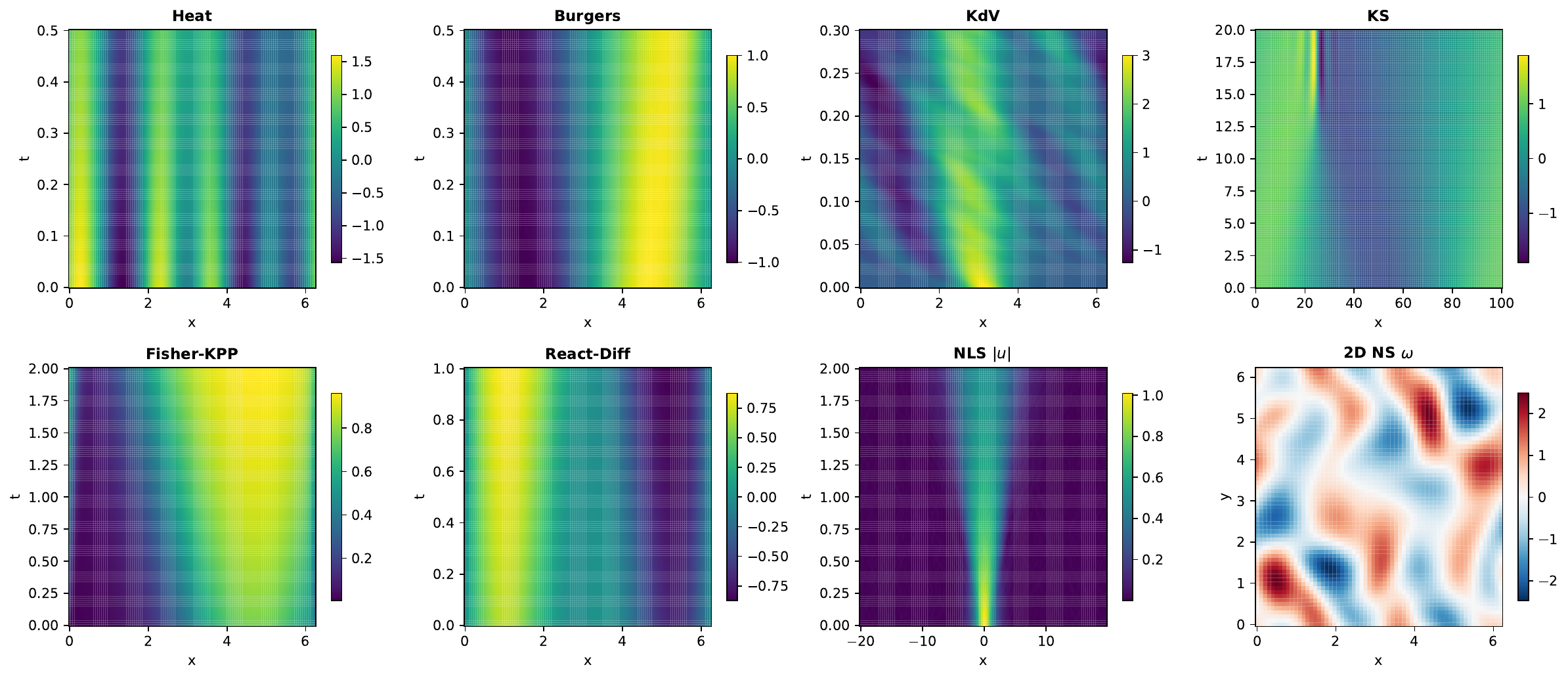}
\caption{Solution snapshots for representative PDEs: the 1D Heat equation, Burgers shock, KdV soliton, KS chaos, Fisher--Kolmogorov--Petrovsky--Piskunov (Fisher-KPP) wavefront, reaction-diffusion, NLS soliton modulus $|u|$, and 2D NS vorticity field at $t_{\mathrm{mid}}$.}
\label{fig:solutions}
\end{figure}

\subsection{Experimental setup}

\paragraph{Benchmark PDEs.}
We evaluate 8 representative 1+1D PDEs spanning convective, dispersive, diffusive, and reaction dynamics. The suite includes classical model equations from turbulence, shallow-water waves, reaction-diffusion fronts, and flame-front chaos, including Burgers \citep{burgers1948mathematical}, KdV \citep{korteweg1895change}, Fisher-KPP \citep{fisher1937wave,kolmogorov1937study}, and KS dynamics \citep{kuramoto1976persistent,sivashinsky1977nonlinear}. We abbreviate advection-diffusion (Adv-Diff) and reaction-diffusion (React-Diff) in tables. Table~\ref{tab:pdes} lists the equations, their true coefficients, Galilean symmetry status, and support size.

\begin{table}[t!]
\centering
\caption{Benchmark PDEs with true equations, Galilean symmetry status, and support size. All PDEs are solved on periodic spatial domains with spectral methods.}
\label{tab:pdes}
\small
\begin{tabular}{@{}llccc@{}}
\toprule
PDE & Equation $u_t = \ldots$ & Galilean & $|S^*|$ & Domain \\
\midrule
Heat & $0.1 u_{xx}$ & No & 1 & $[0, 2\pi]$ \\
Burgers & $-u \cdot u_x + 0.1 u_{xx}$ & Yes & 2 & $[0, 2\pi]$ \\
KdV & $-u \cdot u_x - u_{xxx}$ & Yes & 2 & $[0, 2\pi]$ \\
Fisher-KPP & $0.01 u_{xx} + u - u^2$ & No & 3 & $[0, 2\pi]$ \\
Adv-Diff & $-u_x + 0.05 u_{xx}$ & No & 2 & $[0, 2\pi]$ \\
KS & $-u \cdot u_x - u_{xx} - u_{xxxx}$ & Yes & 3 & $[0, 32\pi]$ \\
KdV--Burgers & $-u \cdot u_x + 0.05 u_{xx} - u_{xxx}$ & Yes & 3 & $[0, 2\pi]$ \\
React-Diff & $0.1 u_{xx} + u - u^3$ & No & 3 & $[0, 2\pi]$ \\
\bottomrule
\end{tabular}
\end{table}

\paragraph{Data and baselines.}
The primary benchmark uses $N_x = 128$, $N_t = 128$, $M = 3$ trajectories per PDE and seed, and noise levels $\sigma_{\mathrm{noise}} \in \{0, 0.05, 0.10, 0.20\}$. Data-generation details, initial conditions, and baseline configurations are reported in Appendices~\ref{sec:data_generation} and~\ref{sec:baseline_details}.

\paragraph{Evaluation metric.}
We use the term-level F1 score to measure identification accuracy:
\begin{equation}
\mathrm{F1} = \frac{2 \cdot \mathrm{Precision} \cdot \mathrm{Recall}}{\mathrm{Precision} + \mathrm{Recall}}, \quad \mathrm{Precision} = \frac{|\hat{S} \cap S^*|}{|\hat{S}|}, \quad \mathrm{Recall} = \frac{|\hat{S} \cap S^*|}{|S^*|},
\end{equation}
where $\hat{S} = \{j : |\hat{\xi}_j| > 10^{-3}\}$ is the identified support and $S^* = \{j : \xi_j^* \neq 0\}$ is the true support.

\subsection{Main results}
\label{sec:main_results}

Table~\ref{tab:main_results} reports the F1 scores for EqOD and WF-LASSO across all 8 PDEs and 4 noise levels, mean $\pm$ std over 10 seeds. The seed flow is documented in Section~\ref{sec:reproducibility}: each seed varies the data trajectories, the noise realization, the bootstrap subsamples in stability selection, and the CV fold permutation in WF-LASSO.

\begin{table}[t!]
\centering
\caption{Main benchmark F1 scores, reported as mean $\pm$ std over 10 seeds, comparing full-library WF-LASSO with EqOD. Bold indicates the better mean. Mode uses S for the symmetry path, SS for the stability selection path, and $\dagger$ for residual fallback to WF-LASSO.}
\label{tab:main_results}
\small
\begin{tabular}{@{}ll cc c c@{}}
\toprule
PDE & Noise & WF-LASSO & EqOD & $\Delta$ & Mode \\
\midrule
\multirow{4}{*}{Heat}
 & 0\%  & $1.000 \pm 0.000$ & $1.000 \pm 0.000$ & $0$ & SS \\
 & 5\%  & $0.733 \pm 0.251$ & $\mathbf{1.000 \pm 0.000}$ & $+0.267$ & SS \\
 & 10\% & $0.554 \pm 0.222$ & $\mathbf{1.000 \pm 0.000}$ & $+0.446$ & SS \\
 & 20\% & $0.475 \pm 0.181$ & $\mathbf{1.000 \pm 0.000}$ & $+0.525$ & SS \\
\midrule
\multirow{4}{*}{Burgers}
 & 0\%  & $1.000 \pm 0.000$ & $1.000 \pm 0.000$ & $0$ & S \\
 & 5\%  & $0.944 \pm 0.176$ & $\mathbf{1.000 \pm 0.000}$ & $+0.056$ & S \\
 & 10\% & $0.700 \pm 0.258$ & $\mathbf{1.000 \pm 0.000}$ & $+0.300$ & S \\
 & 20\% & $0.600 \pm 0.194$ & $\mathbf{0.920 \pm 0.103}$ & $+0.320$ & S \\
\midrule
\multirow{4}{*}{KdV}
 & 0\%  & $1.000 \pm 0.000$ & $1.000 \pm 0.000$ & $0$ & S \\
 & 5\%  & $0.960 \pm 0.084$ & $\mathbf{0.967 \pm 0.105}$ & $+0.007$ & S \\
 & 10\% & $\mathbf{0.960 \pm 0.084}$ & $0.920 \pm 0.103$ & $-0.040$ & S \\
 & 20\% & $\mathbf{0.887 \pm 0.126}$ & $0.787 \pm 0.129$ & $-0.100$ & S \\
\midrule
\multirow{4}{*}{Fisher-KPP}
 & 0\%  & $0.444 \pm 0.000$ & $0.444 \pm 0.000$ & $0$ & SS \\
 & 5\%  & $0.500 \pm 0.000$ & $0.500 \pm 0.000$ & $0$ & SS \\
 & 10\% & $0.494 \pm 0.018$ & $0.494 \pm 0.018$ & $0$ & SS \\
 & 20\% & $0.467 \pm 0.068$ & $0.467 \pm 0.068$ & $0$ & SS \\
\midrule
\multirow{4}{*}{Adv-Diff}
 & 0\%  & $1.000 \pm 0.000$ & $1.000 \pm 0.000$ & $0$ & SS \\
 & 5\%  & $1.000 \pm 0.000$ & $1.000 \pm 0.000$ & $0$ & SS \\
 & 10\% & $1.000 \pm 0.000$ & $1.000 \pm 0.000$ & $0$ & SS \\
 & 20\% & $0.980 \pm 0.063$ & $0.980 \pm 0.063$ & $0$ & SS \\
\midrule
\multirow{4}{*}{KS}
 & 0\%  & $1.000 \pm 0.000$ & $1.000 \pm 0.000$ & $0$ & S \\
 & 5\%  & $0.957 \pm 0.069$ & $0.957 \pm 0.069$ & $0$ & S \\
 & 10\% & $0.914 \pm 0.074$ & $0.914 \pm 0.074$ & $0$ & S \\
 & 20\% & $0.857 \pm 0.080$ & $\mathbf{0.914 \pm 0.074}$ & $+0.057$ & S \\
\midrule
\multirow{4}{*}{KdV--Burgers}
 & 0\%  & $0.779 \pm 0.137$ & $\mathbf{0.957 \pm 0.069}$ & $+0.178$ & S \\
 & 5\%  & $0.464 \pm 0.065$ & $\mathbf{0.613 \pm 0.107}$ & $+0.149$ & S \\
 & 10\% & $0.440 \pm 0.075$ & $0.440 \pm 0.075$ & $0$ & SS$^\dagger$ \\
 & 20\% & $\mathbf{0.427 \pm 0.109}$ & $0.383 \pm 0.173$ & $-0.044$ & SS$^\dagger$ \\
\midrule
\multirow{4}{*}{React-Diff}
 & 0\%  & $0.900 \pm 0.161$ & $0.900 \pm 0.161$ & $0$ & SS \\
 & 5\%  & $1.000 \pm 0.000$ & $1.000 \pm 0.000$ & $0$ & SS \\
 & 10\% & $0.986 \pm 0.045$ & $0.986 \pm 0.045$ & $0$ & SS \\
 & 20\% & $0.914 \pm 0.074$ & $0.904 \pm 0.089$ & $-0.010$ & SS \\
\bottomrule
\end{tabular}
\end{table}

\begin{figure}[t!]
\centering
\includegraphics[width=0.99\textwidth]{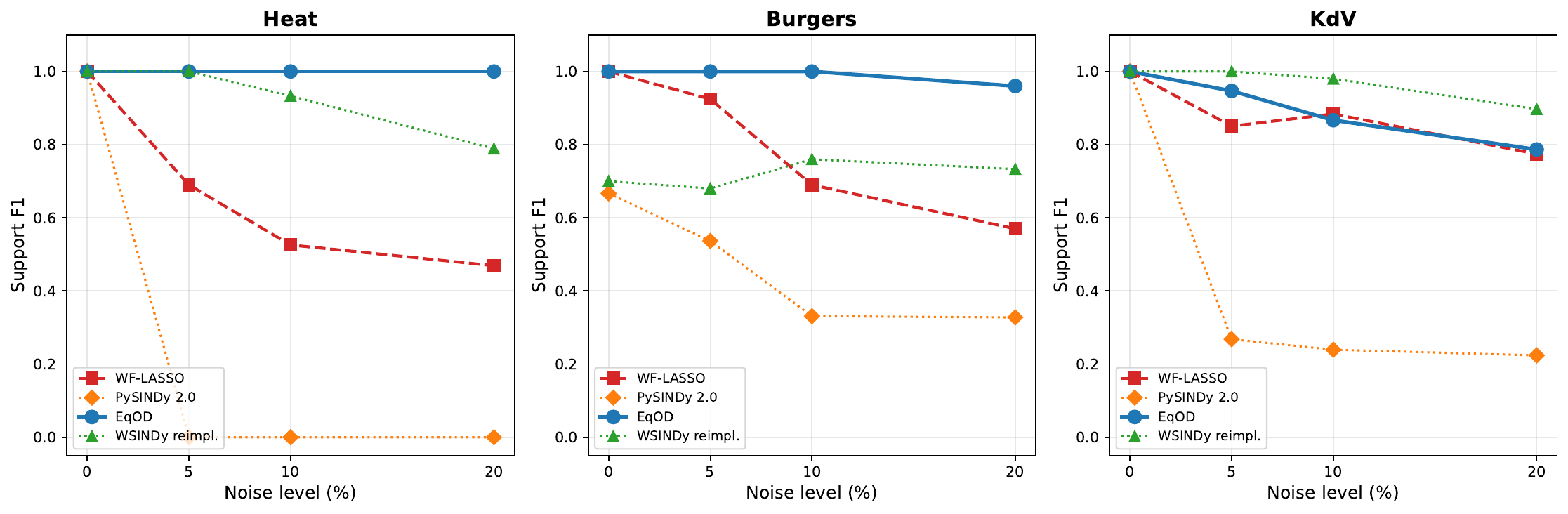}
\caption{F1 scores as a function of noise level for Heat, Burgers, and KdV across 10 seeds.}
\label{fig:heat_burgers}
\end{figure}

The key observations from Table~\ref{tab:main_results} are as follows.
\paragraph{Heat.}
Table~\ref{tab:main_results} and Figure~\ref{fig:heat_burgers} show that on the Heat equation, EqOD achieves F1 $= 1.000 \pm 0.000$ at all noise levels, while WF-LASSO degrades from $1.000$ on clean data to $0.475 \pm 0.181$ at $20\%$ noise. The improvement is due to stability selection isolating $u_{xx}$ as the only stably selected term across all 50 bootstrap subsamples. The standard deviation of $0.000$ for EqOD across the 10 seeds means every bootstrap random-number-generator (RNG) draw and every CV fold permutation produced the same support.

\paragraph{Galilean PDEs.}
For the Galilean PDEs in Table~\ref{tab:main_results}, namely Burgers, KdV, KS, and KdV--Burgers, EqOD uses the symmetry-based path. On Burgers, EqOD wins at $10\%$ and $20\%$ noise. At $0\%$, both methods tie at $F_1 = 1.000$, and at $5\%$, EqOD's mean is higher, 1.000 compared with 0.944, but the gap of 0.056 is smaller than WF-LASSO's standard deviation of 0.176, so the cell is a statistical tie. KS is tied across all four noise levels. The largest gap is at $20\%$, where EqOD's mean of 0.914 exceeds WF-LASSO's 0.857 by 0.057, still below WF-LASSO's standard deviation of 0.080. KdV is also tied across all four noise levels under the patched pipeline. WF-LASSO has a higher mean than EqOD at $10\%$, 0.960 compared with 0.920, and at $20\%$, 0.887 compared with 0.787, but in both cells, the gap is below the larger of the two standard deviations, 0.103 and 0.129, respectively, so neither is a statistically meaningful win for WF-LASSO. We discuss the KdV high-noise regime in Appendix~\ref{sec:limitations}. A directional advantage that survives larger seed counts would still motivate adapting the residual fallback to that regime.

\paragraph{Non-Galilean and reaction PDEs.}
Table~\ref{tab:main_results} also shows that on non-Galilean PDEs without reaction, represented by Adv-Diff, EqOD matches WF-LASSO exactly because stability selection correctly retains the full 2-term support, and no symmetry reduction is applicable. On reaction PDEs, represented by Fisher-KPP and React-Diff, EqOD also matches WF-LASSO. Fisher-KPP is a hard case for all methods because the diffusion coefficient $D = 0.01$ is below the detection threshold given the bump radius and spatial resolution.

\paragraph{Aggregate summary.}
Aggregating Table~\ref{tab:main_results}, counting cells where the mean F1 differs by more than the larger of the two standard deviations, EqOD wins $7$ cells. These are Heat at 5\%, 10\%, and 20\%, Burgers at 10\% and 20\%, and KdV--Burgers at 0\% and 5\%. There are no statistically meaningful WF-LASSO wins, and the remaining $25$ cells are within statistical noise of a tie. The Stage 4 residual fallback is triggered for KdV--Burgers at $10\%$ and $20\%$ noise, which is exactly the regime where the stability path would otherwise fail catastrophically. The strict criterion used here treats both methods as random variables. Under a less conservative criterion using a mean difference greater than half the larger standard deviation, EqOD would additionally win Burgers 5\% and KS 20\%, while retaining the KdV--Burgers 0\% win with additional margin, but we report the conservative count.

The seed-plumbing note for the patched benchmark is reported in Appendix~\ref{sec:seed_plumbing_note}. Clean-data identified equations and coefficient-error details are reported in Appendix~\ref{sec:identified_equations}. Detailed PySINDy and weak-form SINDy (WSINDy) comparisons, ablations and sensitivity analyses, external validation and extensions, and success and failure analysis, including limitations and explicit failure cases, are reported in Appendices~\ref{app:additional_experiments},~\ref{sec:extensions}, and~\ref{sec:analysis}.

\section{Conclusion}
\label{sec:conclusion}

We have introduced EqOD, a fully automatic PDE identification pipeline that combines Lie symmetry detection, stability selection, and residual fallback. On the 1D benchmark with 8 PDEs, 32 noise conditions, and 10 seeds, EqOD records 23/5/4 wins/losses/ties against PySINDy 2.0.0, 16/12/4 against the WSINDy reimplementation, and 10/4/18 against WF-LASSO by mean F1 (Table~\ref{tab:wins}). Under the stricter criterion that the mean gap exceeds the larger of the two standard deviations, EqOD has 7 wins against WF-LASSO, WF-LASSO has none, and 25 cells are ties (Table~\ref{tab:main_results}). On Heat at $20\%$ noise, EqOD reaches F1 $= 1.000 \pm 0.000$, while PySINDy obtains 0.000 and WF-LASSO obtains $0.475 \pm 0.181$ (Tables~\ref{tab:main_results} and~\ref{tab:pysindy}). EqOD also reaches F1 = 1.000 on all 5 external clean-data benchmarks (Table~\ref{tab:external}).

Beyond scalar 1D PDEs, the NLS extension reaches F1 = 1.000 at all noise levels up to 10\%, while WF-LASSO drops to 0.893, and PySINDy fails with an F1 = 0.571 (Table~\ref{tab:nls}). For 2D and coupled systems, EqOD matches WF-LASSO through residual fallback in most cases, and neither method is clearly superior in these regimes (Tables~\ref{tab:2d} and~\ref{tab:coupled}). On real cylinder wake data, the weak-form $R^2$ values are comparable: 0.747 for EqOD and 0.771 for WF-LASSO (Table~\ref{tab:real_data}).

\paragraph{Honest assessment.}
EqOD's gains are concentrated in the setting it was designed for: 1D scalar PDEs with low support ratios. Stability selection does not consistently improve over the full-library baseline on coupled and 2D systems, and the residual fallback can prevent large degradations but cannot create a reliable signal where the reduced library is statistically ambiguous. The Fisher-KPP remains unsolved by all methods because of the small-coefficient blind spot, and WSINDy remains stronger on the KdV at high noise. Detailed limitations and failure cases are provided in Appendix~\ref{sec:limitations}. Future work should tune stability selection for coupled and 2D systems, replacing the 5-test catalog with learned Lie generators \citep{ko2024symmetry}, and extend to nonperiodic domains and nonlocal or fractional operators.

\clearpage
\bibliographystyle{plainnat}
\bibliography{eqod}

\clearpage
\appendix

\startcontents[apx]
\section*{Appendix Table of Contents}
\printcontents[apx]{l}{1}{\setcounter{tocdepth}{2}}

\section{List of Notation}
\label{app:notation}

\begin{table}[t!]
\centering
\caption{Notation used throughout the paper.}
\label{tab:notation}
\begin{tabular}{@{}ll@{}}
\toprule
Symbol & Meaning \\
\midrule
$u(x,t)$ & scalar PDE state \\
$u_t, u_x, u_{(k)}$ & temporal and spatial derivatives \\
$x,t$ & space and time variables \\
$\R,\N,\Z$ & common number sets \\
$N_x,N_t$ & spatial and temporal grid sizes \\
$\Delta x,\Delta t$ & grid spacings \\
$M$ & number of trajectories \\
$U^{(m)}, \tilde{U}$ & clean and noisy trajectory data \\
$\sigma_{\mathrm{noise}}, \sigma$ & noise level \\
$F, \mathcal{N}[u]$ & PDE right-hand side \\
$\theta_j(u)$ & candidate library term \\
$\Theta, \mathbf{b}$ & weak-form design and response \\
$\bxi, \xi_j^*, \hat{\bxi}$ & coefficient vectors \\
$S^*, \hat{S}, s$ & true support, estimate, and size \\
$p$ (library) & number of library terms \\
$\Lop_{\mathrm{std}}$ & standard candidate library \\
$\Lop_{\mathrm{red}}$ & selected reduced library \\
$\Lop_G$ & Galilean-reduced implementation library used by EqOD \\
$\G_{\mathrm{det}}$ & detected symmetry generators \\
$G_\epsilon$ & one-parameter symmetry group \\
$\bv, \pr^{(n)}\bv$ & Lie generator and prolongation \\
$\xi,\tau,\phi$ & generator components \\
$J^{(n)}$ & $n$-th jet space \\
$L[u]$ & linear spatial operator \\
$c_j$ & theoretical library coefficient \\
$\phi_m,\psi$ & test function and bump profile \\
$r_t,r_x$ & test-function radii \\
$B,I_b,w_j$ & bootstrap count, subsample, weights \\
$\hat{\pi}_j,\hat{\pi}_{\mathrm{thr}}$ & selection probability and threshold \\
$\lambda,\lambda^*$ & LASSO regularization values \\
$\eta$ & adaptive coefficient threshold \\
$r_{\mathrm{EqOD}}, r_{\mathrm{std}}, \gamma$ & fallback residuals and threshold \\
$\hat{\sigma}(k), k_n$ & Fourier symbol and wavenumber \\
$f_{uu_x}, \tau$ (test) & Galilean energy fraction and threshold \\
$\mathrm{FP}, \E[\cdot], q$ & false-positive count, expectation, budget \\
$\nu, D$ & diffusion or viscosity coefficients \\
$p,q$ for NLS, $v$ & auxiliary extension fields \\
\bottomrule
\end{tabular}
\end{table}

\section{Related work}
\label{sec:prior_work}

\paragraph{Sparse identification of PDEs}
Automated equation discovery predates SINDy, including symbolic-regression searches for free-form physical laws \citep{schmidt2009distilling}. Earlier spatiotemporal system-identification work also fitted PDE structure or parameters directly from space-time data \citep{voss1998identification,bar1999fitting}. \citet{brunton2016sindy} introduced the SINDy framework for ordinary differential equation discovery, using STLSQ. \citet{schaeffer2017learningpde} and \citet{rudy2017pdefind} extended sparse-regression ideas to PDEs, computing spatial derivatives and applying sparse optimization over candidate libraries. These methods are effective on clean data but degrade under noise because pointwise derivatives amplify high-frequency noise content.

\citet{schaeffer2017sparse} introduced weak-form sparse identification, transferring derivatives from the noisy data to smooth, known test functions. \citet{gurevich2019robust} optimized weak sparse regression for nonlinear PDEs under noise, and \citet{reinbold2020noisy} extended weak-form recovery to noisy, incomplete, and latent-variable spatiotemporal systems. \citet{messenger2021wsindy} developed this idea systematically using compactly supported test functions and STLSQ thresholding.

Several SINDy variants address model selection or robustness from different angles. \citet{mangan2017modelselection} use sparse regression Pareto fronts with information criteria for principled model choice, \citet{kaheman2020sindypi} extend SINDy to implicit and rational dynamics, \citet{champion2019data} combine autoencoders with SINDy to discover coordinates and governing equations jointly, and \citet{fasel2022ensemble} use ensemble bagging to estimate inclusion probabilities under low-data and high-noise regimes. Unified sparse-optimization frameworks broaden the optimizer family available for parsimonious model discovery \citep{champion2020unified}. \citet{desilva2020pysindy} provide the PySINDy library, and the later comprehensive PySINDy release adds PDE, implicit, integral, constrained, and ensemble functionality \citep{kaptanoglu2022pysindy}. PySINDy is the most widely used tool for data-driven PDE identification.

Probabilistic, Bayesian, neural, and evolutionary alternatives include hidden physics models based on Gaussian processes \citep{raissi2018hidden}, threshold sparse Bayesian regression with error bars \citep{zhang2018robust}, PDE-Net and PDE-Net 2.0 for learning differential operators and symbolic response functions from data \citep{long2018pdenet,long2019pdenet2}, physics-informed neural networks for forward and inverse PDE problems \citep{raissi2019pinn}, neural-surrogate PDE discovery in complex datasets or noisy, sparse settings \citep{berg2019complex,xu2021dlpde}, a deep-learning genetic-algorithm PDE method combining neural surrogates with genetic search over incomplete libraries \citep{xu2020dlgapde}, symbolic genetic open-form PDE discovery \citep{chen2022sga}, integral-form deep-learning discovery for sparse and noisy data \citep{xu2021integral}, physics-informed information criteria for noisy and sparse PDE model selection \citep{xu2023pic}, and DeepMoD, which combines neural function approximation with sparse model discovery \citep{both2021deepmod}. These methods address representation, uncertainty, derivative estimation, model selection, or open-library challenges, but they do not automatically detect symmetries and use them to prune the sparse library.

Across this literature, library handling is usually either fixed, open-form search, or model-complexity selection. EqOD addresses the narrower gap of data-adaptive pruning of a sparse PDE library using detected symmetry when available and stability profiles otherwise.

\paragraph{Symmetry-informed identification}
\citet{yang2024equivsindy} constrain the SINDy library to a pre-specified symmetry group, but their EquivSINDy method is restricted to ordinary differential equations and requires the symmetry to be known a priori. Two recent differential-invariant approaches are closest in spirit to our method: symbolic equation discovery with symmetry invariants by \citet{yang2025si} and differential-invariant SINDy by \citet{hu2025di}. However, both assume the symmetry group is known a priori. The symmetry-invariant approach states that the symmetry group must already be supplied and that the current framework cannot be applied when symmetry is unknown. EqOD closes this gap by detecting symmetries from data automatically.

\citet{ko2024symmetry} learn Lie generators from data but do not use them for equation discovery. Their method can discover arbitrary continuous symmetries but is not connected to the PDE identification pipeline. \citet{liegan2023} train generative adversarial networks to discover symmetry generators but again do not apply this to equation discovery. \citet{liu2021aipoincare} and \citet{noethers2024} learn conserved quantities from trajectories. Conservation laws are related to symmetries via Noether's theorem, but the connection to sparse identification of the PDE itself is not made.

\paragraph{Stability selection for regression}
\citet{meinshausen2010stability} introduced stability selection for high-dimensional regression. The method runs LASSO on many random subsamples and retains only terms selected in more than a fraction $\hat{\pi}_{\mathrm{thr}}$ of runs. It provides a finite-sample bound on the expected number of false positives. The approach of \citet{maddu2022stability} applies stability-based model selection to differential-equation learning from limited noisy data, and Ensemble SINDy also uses resampling to produce candidate-term inclusion probabilities \citep{fasel2022ensemble}. EqOD differs by using stability selection only as the non-Galilean branch of an automatic library-pruning stage, paired with a separate symmetry-detection path and a residual fallback.

\paragraph{Gap addressed by EqOD}
No existing method closes the full loop: data $\to$ detect symmetries automatically $\to$ reduce library $\to$ identify PDE. EqOD is the first to do so. The automatic detection capability is not a minor distinction: in practice, the user rarely knows the symmetry group of the PDE they are trying to discover, so automatic detection from data closes this practical gap.

\section{Mathematical Background}
\label{sec:background}

\paragraph{Status note.}
All definitions, lemmas, and theorems in this section are imported from the classical Lie symmetry literature \citep{bluman1989symmetries,olver1993lie, fels1999moving} and are \texttt{verified facts}. We restate them for self-containedness. No new results are claimed until Appendix~\ref{sec:theory}.

\subsection{Lie point symmetries of evolution PDEs}
\label{sec:lie_background}

We consider evolution PDEs of the form
\begin{equation}\label{eq:evolution_pde}
u_t = F(x, t, u, u_x, u_{xx}, \ldots, u_{(n)}),
\end{equation}
where $u = u(x, t)$ is a scalar field on a 1D spatial domain, $u_{(k)} = \partial^k u / \partial x^k$ denotes the $k$-th spatial derivative, and $F$ is a smooth function on the jet space $J^{(n)}$.

\begin{definition}[Lie point symmetry]
A one-parameter Lie group $G_\epsilon$ acting on the space of independent and dependent variables $(x, t, u)$ via
\begin{equation}
(x, t, u) \mapsto (\tilde{x}(x, t, u; \epsilon), \tilde{t}(x, t, u; \epsilon), \tilde{u}(x, t, u; \epsilon))
\end{equation}
is a symmetry of the PDE in Eq.~\ref{eq:evolution_pde} if it maps every solution $u(x,t)$ to another solution $\tilde{u}(\tilde{x}, \tilde{t})$.
\end{definition}

The infinitesimal generator of $G_\epsilon$ is the vector field
\begin{equation}\label{eq:generator}
\bv = \xi(x,t,u) \partial_x + \tau(x,t,u) \partial_t + \phi(x,t,u) \partial_u,
\end{equation}
where $\xi = d\tilde{x}/d\epsilon|_{\epsilon=0}$, $\tau = d\tilde{t}/d\epsilon|_{\epsilon=0}$, and $\phi = d\tilde{u}/d\epsilon|_{\epsilon=0}$.

\begin{definition}[$n$-th prolongation]
The $n$-th prolongation of $\bv$ to the jet space $J^{(n)}$ is
\begin{equation}\label{eq:prolongation}
\pr^{(n)} \bv = \bv + \phi^x \partial_{u_x} + \phi^{xx} \partial_{u_{xx}} + \cdots + \phi^{(n)} \partial_{u_{(n)}},
\end{equation}
where $\phi^{x} = D_x \phi - u_x D_x \xi - u_t D_x \tau$, and higher-order terms follow the recursive prolongation formula \citep{olver1993lie}.
\end{definition}

\begin{theorem}[Lie's infinitesimal criterion {\citep[Theorem 2.31]{olver1993lie}}]\label{thm:lie_criterion}
$G_\epsilon$ is a symmetry of Eq.~\ref{eq:evolution_pde} if and only if
\begin{equation}\label{eq:lie_condition}
\pr^{(n)} \bv \left[ u_t - F \right] = 0 \quad \text{whenever} \quad u_t = F.
\end{equation}
\end{theorem}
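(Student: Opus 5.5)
The plan is to prove the criterion by passing through the geometric reformulation of the PDE in jet space. There, the equation in Eq.~\ref{eq:evolution_pde} is the subvariety $\mathcal{S}_\Delta=\{\Delta=0\}\subset J^{(n)}$ with $\Delta = u_t - F$. There are three steps: (i) show that $G_\epsilon$ is a symmetry if and only if its prolonged action $\pr^{(n)}G_\epsilon$ leaves $\mathcal{S}_\Delta$ invariant; (ii) show that invariance of $\mathcal{S}_\Delta$ under a connected one-parameter group is equivalent to tangency of its generator $\pr^{(n)}\bv$ to $\mathcal{S}_\Delta$; (iii) show that tangency is exactly the condition in Eq.~\ref{eq:lie_condition}.

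I would do step (iii) first, since it is the cleanest. The key observation is that $\Delta$ has maximal rank on $\mathcal{S}_\Delta$, because $\partial\Delta/\partial u_t = 1$, so $d\Delta \neq 0$ everywhere. By the standard lemma for maximal-rank level sets, a smooth function $Q$ vanishing on $\mathcal{S}_\Delta$ can be written locally as $Q = \lambda\,\Delta$ for a smooth $\lambda$ (Hadamard's lemma, after using the implicit function theorem to make $u_t - F$ a coordinate). Hence the following are equivalent:
\begin{itemize}
\item $\pr^{(n)}\bv$ is tangent to $\mathcal{S}_\Delta$;
\item $\pr^{(n)}\bv[\Delta]=0$ on $\mathcal{S}_\Delta$, which is Eq.~\ref{eq:lie_condition}.
\end{itemize}

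For step (ii), the necessity direction is immediate. If the flow $\exp(\epsilon\,\pr^{(n)}\bv)$ preserves $\mathcal{S}_\Delta$, then $\Delta(\pr^{(n)}g_\epsilon\cdot z)=0$ for every $z\in\mathcal{S}_\Delta$. Differentiating at $\epsilon=0$ gives $\pr^{(n)}\bv[\Delta](z)=0$. For sufficiency, I would use the maximal-rank representation $\pr^{(n)}\bv[\Delta]=\lambda\Delta$. Along an orbit $z(\epsilon)$ starting on $\mathcal{S}_\Delta$, the function $h(\epsilon)=\Delta(z(\epsilon))$ then satisfies the linear ODE $h'=\lambda(z(\epsilon))\,h$ with $h(0)=0$. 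By uniqueness of solutions, $h\equiv 0$, so the orbit stays in $\mathcal{S}_\Delta$.

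Step (i) is where I expect the main obstacle. The difficulty is linking invariance of the jet-space subvariety to the statement that the group maps solutions to solutions. The prolonged action is defined exactly so that it sends the $n$-jet of the graph of $u$ to the $n$-jet of the graph of the transformed function $\tilde u$. For sufficiency, this means that if the jet of $u$ lies in $\mathcal{S}_\Delta$, then so does the jet of $\tilde u$, at least for small $\epsilon$ where $\tilde u$ is still a graph.

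For necessity (symmetry implies invariance), I need every point of $\mathcal{S}_\Delta$ to be the jet of some actual solution. This is local solvability. For an evolution equation solved for $u_t$, I would argue it via Cauchy--Kovalevskaya, at least for analytic $F$, or else simply adopt it as the standing hypothesis, as Olver does. The tangency condition then follows, which closes the equivalence.

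Finally, I would note that the argument is local and holds for $\epsilon$ near $0$. Global statements follow by composing group elements.
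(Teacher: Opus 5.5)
Your argument is correct and is essentially the standard proof in Olver that the paper relies on; the paper gives no proof of its own, only the citation. The three ingredients are invariance of $\mathcal{S}_\Delta$ under the prolonged group, the maximal-rank infinitesimal invariance criterion, and local solvability for the ``only if'' direction, which you rightly flag since Olver's Theorem~2.31 as stated gives only the ``if'' direction.

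One correction: Cauchy--Kovalevskaya does not apply to $u_t=F$ with $t$ as the Cauchy variable once $n\ge 2$ (Kovalevskaya's example $u_t=u_{xx}$, $u(x,0)=1/(1+x^2)$, has no analytic solution). For analytic $F$ you should instead work in the jet space of the true order of $F$, solve for $u_{(n)}$ where $\partial F/\partial u_{(n)}\neq 0$, and pose polynomial Cauchy data on the noncharacteristic line $x=x_0$ to realize every point of $\mathcal{S}_\Delta$ as a solution jet; alternatively, simply assume local solvability as Olver does.
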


For 1+1D evolution PDEs, the relevant point symmetries are spatial translation $\bv_1 = \partial_x$, where the PDE has no explicit dependence on $x$, temporal translation $\bv_2 = \partial_t$, where the PDE is autonomous, Galilean boost $\bv_3 = t \partial_x + \partial_u$, the key symmetry for library reduction, scaling $\bv_4 = a u \partial_u + b x \partial_x + c t \partial_t$, which characterizes monomial operators, and reflection $\bv_5 = -x \partial_x$ for even parity or $\bv_5 = -x \partial_x - u \partial_u$ for odd parity.

\subsection{Galilean invariance: the key structural property}
\label{sec:galilean_theory}

\begin{lemma}[Galilean invariance of convective PDEs]\label{lem:galilean}
Any PDE of the form
\begin{equation}\label{eq:galilean_pde}
u_t + u \cdot u_x = L[u], \quad L[u] = \sum_{k \geq 1} c_k \frac{\partial^k u}{\partial x^k},
\end{equation}
where $L$ is a linear spatial operator with constant coefficients, is invariant under the Galilean boost $(x, t, u) \mapsto (x + ct, t, u + c)$ for all $c \in \R$.
\end{lemma}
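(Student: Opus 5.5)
The plan is to verify the claim directly at the level of the finite transformation, and then to confirm it infinitesimally with Lie's criterion (Theorem~\ref{thm:lie_criterion}).

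\emph{Direct verification.} Let $u$ be a solution of Eq.~\ref{eq:galilean_pde} and fix $c \in \R$. The boost maps the graph of $u$ to the graph of the function
\begin{equation*}
\tilde{u}(\tilde{x}, \tilde{t}) = u(\tilde{x} - c\tilde{t}, \tilde{t}) + c .
\end{equation*}
We need to show that $\tilde{u}$ satisfies the same equation in the variables $(\tilde{x},\tilde{t})$. Write $y = \tilde{x} - c\tilde{t}$. The chain rule then gives
\begin{equation*}
\tilde{u}_{\tilde{t}} = u_t(y,\tilde{t}) - c\, u_x(y,\tilde{t}), \qquad \partial_{\tilde{x}}^k \tilde{u} = \partial_x^k u(y,\tilde{t}) \quad (k \geq 1).
\end{equation*}
The second identity holds because the additive constant $c$ disappears after one differentiation, and the shift in the argument does not change spatial derivatives.

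Substituting these into the equation gives
\begin{equation*}
\tilde{u}_{\tilde{t}} + \tilde{u}\,\tilde{u}_{\tilde{x}} = u_t - c u_x + (u + c) u_x = u_t + u u_x = L[u] = L[\tilde{u}],
\end{equation*}
with every quantity evaluated at $(y,\tilde{t})$. The $-c u_x$ term produced by the time derivative cancels exactly against the $+c u_x$ term produced by the shift in $u$. The last equality uses two facts: $L$ has constant coefficients and contains only derivatives of order $k \geq 1$. Constant coefficients make $L$ commute with translation in $x$. The absence of a $k=0$ term means $L$ annihilates the constant $c$. This establishes invariance for every $c$.

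\emph{Infinitesimal check.} As a consistency check, take the generator $\bv_3 = t\partial_x + \partial_u$, so $\xi = t$, $\tau = 0$, $\phi = 1$. The prolongation formula gives $\phi^x = D_x\phi - u_x D_x \xi = 0$. All higher $\phi^{(k)}$ vanish as well, since $\xi$ depends only on $t$. For the time component, $\phi^t = D_t\phi - u_x D_t\xi = -u_x$.

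Apply the prolongation to $u_t + u u_x - L[u]$. This yields $\phi^t + \phi\, u_x = -u_x + u_x = 0$, identically. Hence Eq.~\ref{eq:lie_condition} holds.

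\emph{Main obstacle.} The only delicate point is the hypotheses on $L$. Two things can go wrong:
\begin{itemize}
\item a zeroth-order term $c_0 u$ would contribute $c_0 c$ (finite form) or $c_0\phi = c_0$ (infinitesimal form), which is nonzero;
\item $x$- or $t$-dependent coefficients would break the commutation with the shift $x \mapsto x - ct$.
\end{itemize}
Both are excluded by the statement, since $L$ has constant coefficients and $k \geq 1$. The proof should state these two uses explicitly, because they are precisely what the later reduction theorem exploits to exclude the reaction terms $u, u^2, u^3$.
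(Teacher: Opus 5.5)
Your proof is correct and is essentially the paper's own argument: the same substitution $\tilde{u}(\tilde{x},\tilde{t}) = u(\tilde{x}-c\tilde{t},\tilde{t}) + c$, the cancellation of the $\pm c\,u_x$ terms, and $L[u+c]=L[u]$, with your infinitesimal check as a harmless extra that mirrors the prolongation computation in the proof of Theorem~\ref{thm:reduction}. One small slip in your closing remark is that purely $t$-dependent coefficients $c_k(t)$ would not break the invariance, since at each fixed $t$ the boost acts as a spatial translation, which still commutes with $\sum_k c_k(t)\,\partial_x^k$; only $x$-dependent coefficients and a zeroth-order term are genuinely excluded.
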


\begin{example}[Galilean-invariant PDEs in our benchmark]
The following PDEs have the form in Eq.~\ref{eq:galilean_pde}. Burgers uses $u_t = -u \cdot u_x + \nu u_{xx}$ with $L = \nu \partial_{xx}$. KdV uses $u_t = -u \cdot u_x - u_{xxx}$ with $L = -\partial_{xxx}$. KS uses $u_t = -u \cdot u_x - u_{xx} - u_{xxxx}$ with $L = -\partial_{xx} - \partial_{xxxx}$. KdV--Burgers uses $u_t = -u \cdot u_x + \nu u_{xx} - u_{xxx}$ with $L = \nu\partial_{xx} - \partial_{xxx}$.
\end{example}

\begin{example}[Non-Galilean PDEs]
The following PDEs in our benchmark are not Galilean invariant. Heat, $u_t = \nu u_{xx}$, lacks the $u \cdot u_x$ term. Fisher-KPP, $u_t = D u_{xx} + u - u^2$, has reaction terms $u$ and $u^2$ that break Galilean symmetry. Advection-diffusion, $u_t = -u_x + D u_{xx}$, has linear advection $u_x$, which is not equivalent to $u \cdot u_x$. Reaction-diffusion, $u_t = D u_{xx} + u - u^3$, has reaction terms $u$ and $u^3$ that break Galilean symmetry.
\end{example}

\subsection{Differential invariants and the Cartan--Kuranishi theorem}
\label{sec:invariants_background}

\begin{definition}[Differential invariant]
A smooth function $I: J^{(n)} \to \R$ on the jet space is a differential invariant of the group $G$ if $\pr^{(n)} \bv [I] = 0$ for all generators $\bv$ of $G$.
\end{definition}

The Cartan--Kuranishi theorem \citep{olver1993lie, fels1999moving} guarantees that any $G$-invariant PDE can be written purely in terms of differential invariants. Thus, terms that are not differential invariants and do not appear in invariant combinations cannot appear in a $G$-invariant PDE.

\begin{proposition}[Galilean-invariant terms]\label{prop:invariant_terms}
For the Galilean generator $\bv = t \partial_x + \partial_u$, pure spatial derivatives $u_{(k)} = \partial^k u / \partial x^k$ for $k \geq 1$ are differential invariants because $\pr^{(k)} \bv[u_{(k)}] = 0$. Pure powers of $u$ are not invariants, since $\bv[u] = 1 \neq 0$, $\bv[u^2] = 2u \neq 0$, and $\bv[u^3] = 3u^2 \neq 0$. The product $u \cdot u_x$ is not a differential invariant because $\pr^{(1)}\bv[u \cdot u_x] = u_x \neq 0$, but it appears in Galilean-invariant PDEs as part of the jointly invariant expression $u_t + u \cdot u_x$ (Lemma~\ref{lem:galilean}).
\end{proposition}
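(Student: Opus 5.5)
The plan is to compute the prolongation of $\bv = t\partial_x + \partial_u$ explicitly and then evaluate it on each of the listed jet functions. The generator components are $\xi = t$, $\tau = 0$, $\phi = 1$. From the first-order formula in the definition of the prolongation,
\[
\phi^x = D_x\phi - u_x D_x\xi - u_t D_x\tau = 0 - u_x\cdot 0 - 0 = 0 .
\]
The $t$-component needs care. The standard prolongation formula gives
\[
\phi^t = D_t\phi - u_x D_t\xi - u_t D_t\tau = -u_x ,
\]
since $D_t\xi = 1$. This is not needed for the spatial invariants, but it is what will make $u_t + u\,u_x$ invariant.

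\textbf{Spatial derivatives.} Next I would show by induction on $k$ that $\phi^{(k)} = 0$ for all $k \ge 1$. The recursion is $\phi^{(k+1)} = D_x\phi^{(k)} - u_{(k+1)} D_x\xi - u_{(k)t} D_x\tau$. Here $D_x\xi = D_x t = 0$ and $\tau = 0$, so $\phi^{(k+1)} = D_x\phi^{(k)}$. The base case $\phi^{(1)} = 0$ then gives $\phi^{(k)} = 0$ for every $k \ge 1$.

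Because $u_{(k)}$ depends only on the coordinate $u_{(k)}$, we get $\pr^{(k)}\bv[u_{(k)}] = \phi^{(k)} = 0$. Hence each pure spatial derivative is a differential invariant.

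\textbf{Pure powers of $u$.} These depend only on $u$, so the prolongation acts as $\phi\,\partial_u = \partial_u$. This gives $\bv[u^m] = m u^{m-1}$, so $\bv[u] = 1$, $\bv[u^2] = 2u$ and $\bv[u^3] = 3u^2$, none of which vanishes identically.

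\textbf{The convective product.} By the Leibniz rule,
\[
\pr^{(1)}\bv[u\,u_x] = \phi\,u_x + u\,\phi^x = u_x + 0 = u_x \neq 0 ,
\]
so $u\,u_x$ is not a differential invariant.

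\textbf{The combination $u_t + u\,u_x$.} For the joint invariance statement, $\pr\bv[u_t + u\,u_x] = \phi^t + u_x = -u_x + u_x = 0$. This is consistent with Lemma~\ref{lem:galilean}, which can also be checked directly by substituting $\tilde u(x,t) = u(x - ct, t) + c$ into the equation.

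\textbf{Main obstacle.} The computations themselves are routine. The delicate point is the $\phi^t$ term, because the displayed first-order formula in the paper only covers $\phi^x$. I would state the general prolongation formula $\phi^J = D_J(\phi - \xi u_x - \tau u_t) + \xi u_{J,x} + \tau u_{J,t}$ from \citet{olver1993lie} and read off both $\phi^t$ and $\phi^{(k)}$ from it. Stated this way, the cancellation behind the invariance of $u_t + u\,u_x$ is transparent. It also avoids the sign ambiguity that arises if one works with the recursive formula for mixed derivatives.
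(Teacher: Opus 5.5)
Your proposal is correct and follows the same route as the paper's proof. Like the paper, you compute $\phi^x = 0$ from $\xi = t$, $\tau = 0$, $\phi = 1$, obtain $\phi^{(k)} = 0$ from the recursive prolongation formula, use the chain rule for $u^m$, and use the Leibniz rule for $u\,u_x$. Your extra computation $\phi^t = -u_x$, which gives $\pr\bv[u_t + u\,u_x] = 0$ directly, is a small addition; the paper only cites Lemma~\ref{lem:galilean} for the joint-invariance clause.
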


\subsection{Weak-form sparse identification}
\label{sec:weak_form_background}

The weak-form approach \citep{schaeffer2017sparse,gurevich2019robust,reinbold2020noisy,messenger2021wsindy} addresses noise sensitivity by replacing pointwise evaluation with integration against test functions. Given a compactly supported test function $\phi \in C_c^\infty(\R^2)$, multiply both sides of $u_t = \sum_j \xi_j \theta_j$ by $\phi$ and integrate:
\begin{equation}\label{eq:weak_form}
\int \int u_t \phi dx dt = \sum_{j=1}^{p} \xi_j \int \int \theta_j(u) \phi dx dt.
\end{equation}
Using the compact support of $\phi$, integration by parts in time gives:
\begin{equation}\label{eq:weak_ibp}
-\int \int u \phi_t dx dt = \sum_{j=1}^{p} \xi_j \int \int \theta_j(u) \phi dx dt.
\end{equation}
The time derivative has been transferred from the noisy data $u$ to the smooth, known test function $\phi_t$. This transfer acts as a natural low-pass filter: integration suppresses high-frequency noise.

\paragraph{Bump test functions.}
We use compactly supported bump functions of the form
\begin{equation}\label{eq:bump}
\psi(r) =
\begin{cases}
\exp\left(\frac{-1}{1 - r^2}\right), & |r| < 1,\\
0, & |r| \geq 1,
\end{cases}
\end{equation}
centered at grid points $(t_c, x_c)$ with radii $(r_t, r_x)$. The 2D test function is the tensor product $\phi_m(t, x) = \psi((t - t_c)/r_t) \cdot \psi((x - x_c)/r_x)$.

Using a grid of $n_t \times n_x$ test function centers yields $N_{\phi} = n_t \cdot n_x$ weak-form equations, indexed by $m = 1,\ldots,N_{\phi}$:
\begin{equation}
b_m = -\int u (\phi_m)_t dx dt, \quad \Theta_{mj} = \int \theta_j(u) \phi_m dx dt.
\end{equation}
The resulting linear system $\mathbf{b} = \Theta \bxi$ is solved via sparse regression.

\paragraph{Spatial derivatives via Fourier spectral method.}
On periodic domains, spatial derivatives are computed via the Fourier pseudospectral method:
\begin{equation}
\widehat{u_{(k)}}(n) = (ik_n)^k \hat{u}(n), \quad k_n = \frac{2\pi n}{L},
\end{equation}
where $\hat{u}$ is the discrete Fourier transform and $L$ is the domain length. The Fourier pseudospectral method is spectrally accurate for smooth solutions and avoids the noise amplification of finite-difference derivatives.

\section{Theoretical Analysis}
\label{sec:theory}

\subsection{Determining equations and library reduction}

The central theoretical result is that Galilean symmetry provably excludes specific library terms. We derive the exclusion result from the determining equations of the symmetry, not merely from the differential invariant criterion.

\begin{assumption}[Autonomy]\label{ass:autonomy}
The right-hand side $F$ has no explicit dependence on $x$ or $t$: $F = F(u, u_x, u_{xx}, u_{xxx}, u_{xxxx})$. This autonomy property holds for every term in $\Lop_{\mathrm{std}}$. Additionally, the functions $\{1, u, u^2, u_x, u_{xx}, u \cdot u_x\}$ are linearly independent on the jet space $J^{(2)}$.
\end{assumption}

\begin{theorem}[Galilean exclusion]\label{thm:reduction}
Under Assumption~\ref{ass:autonomy}, let $u_t = F(u, u_x, u_{xx}, u_{xxx}, u_{xxxx})$ be a 1+1D evolution PDE admitting the Galilean generator $\bv = t \partial_x + \partial_u$. Write $F$ as a linear combination of the standard library in Eq.~\ref{eq:library}:
\begin{equation}
F = c_1 u + c_2 u^2 + c_3 u^3 + c_4 u_x + c_5 u_{xx} + c_6 u_{xxx} + c_7 u_{xxxx} + c_8 u \cdot u_x + c_9 u \cdot u_{xx} + c_{10} u^2 \cdot u_x.
\end{equation}
Then $c_1 = c_2 = c_3 = c_9 = c_{10} = 0$ and $c_8 = -1$. The Galilean-admissible sublibrary is:
\begin{equation}
\Lop_G^{\mathrm{theory}} = \{u_x, u_{xx}, u_{xxx}, u_{xxxx}, u \cdot u_x\}, \quad |\Lop_G^{\mathrm{theory}}| = 5,
\end{equation}
with the convective term $u \cdot u_x$ having fixed coefficient $-1$. The effective number of free parameters is 4.
\end{theorem}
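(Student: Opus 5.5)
The plan is a direct application of Lie's infinitesimal criterion (Theorem~\ref{thm:lie_criterion}) to the generator $\bv = t\partial_x + \partial_u$. From this we derive the determining equation for $F$ and then read off the coefficients using the linear independence in Assumption~\ref{ass:autonomy}. The components of the generator are $\xi = t$, $\tau = 0$ and $\phi = 1$.

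\emph{Step 1: prolong the generator.} Using $\phi^{x} = D_x\phi - u_x D_x\xi - u_t D_x\tau$ and its analogue $\phi^{t} = D_t\phi - u_x D_t\xi - u_t D_t\tau$, we get
\begin{equation*}
\phi^{x} = 0 - u_x\cdot 0 - 0 = 0, \qquad \phi^{t} = 0 - u_x \cdot 1 - 0 = -u_x .
\end{equation*}
The recursion $\phi^{(k+1)} = D_x\phi^{(k)} - u_{(k+1)}D_x\xi - u_{x,(k)}D_x\tau$ then gives $\phi^{(k)} = 0$ for all $k\ge 1$ by induction, because $D_x\xi = D_x\tau = 0$. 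This is consistent with Proposition~\ref{prop:invariant_terms}. The prolongation is therefore
\begin{equation*}
\pr\bv = t\partial_x + \partial_u - u_x\partial_{u_t}.
\end{equation*}

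\emph{Step 2: apply the criterion.} Autonomy gives $\partial_x F = 0$, so the $t\partial_x$ part annihilates $u_t - F$. Applying $\pr\bv$ to $u_t - F$ leaves
\begin{equation*}
-u_x - \partial_u F(u,u_x,\dots,u_{xxxx}) = 0
\end{equation*}
on solutions. We must argue that ``on solutions'' imposes no constraint here. The expression no longer contains $u_t$, and $u_t = F$ only eliminates $u_t$. The remaining jet coordinates $(u,u_x,\dots,u_{xxxx})$ can be prescribed arbitrarily at a point, since initial data are arbitrary for an evolution equation. Hence the determining equation is the identity
\begin{equation*}
\partial_u F = -u_x
\end{equation*}
on $J^{(4)}$. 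I expect this justification, together with getting the sign and form of $\phi^t$ right, to be the main delicate point. The rest is bookkeeping.

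\emph{Step 3: match coefficients.} Differentiating the library expansion gives
\begin{equation*}
\partial_u F = c_1 + 2c_2 u + 3c_3 u^2 + c_8 u_x + c_9 u_{xx} + 2c_{10} u u_x .
\end{equation*}
Setting this equal to $-u_x$ and using the linear independence of $\{1,u,u^2,u_x,u_{xx},u u_x\}$ from Assumption~\ref{ass:autonomy}, we compare coefficients. This yields $c_1=c_2=c_3=c_9=c_{10}=0$ and $c_8=-1$.

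The terms $u_x, u_{xx}, u_{xxx}, u_{xxxx}$ do not depend on $u$, so $c_4,\dots,c_7$ are unconstrained. This gives $\Lop_G^{\mathrm{theory}}$ with 5 terms and 4 free parameters. As a consistency check, Lemma~\ref{lem:galilean} confirms that every such $F$ does admit $\bv$.

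One caveat should be noted. The normalization $c_8=-1$ depends on the boost being exactly $t\partial_x+\partial_u$ rather than a rescaled $a t\partial_x + \partial_u$. Also, with this $\bv$ the hypothesis forces a nonzero convective term, so no Galilean-admitting $F$ with $c_8=0$ exists.
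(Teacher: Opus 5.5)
Your proposal is correct and follows essentially the same route as the paper's proof. Both prolong $\bv = t\partial_x + \partial_u$ to get $\phi^t = -u_x$ and $\phi^{(k)} = 0$, reduce Lie's criterion to $\partial_u F = -u_x$, and match coefficients using the linear independence in Assumption~\ref{ass:autonomy}. Your remarks that autonomy removes the $t\partial_x F$ contribution, and that restricting to $u_t = F$ places no constraint on $(u, u_x, \ldots, u_{xxxx})$, make explicit steps the paper's proof leaves implicit.
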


\begin{remark}[Implementation note]
EqOD uses a conservative 7-term implementation library $\Lop_G^{\mathrm{impl}} = \Lop_{\mathrm{std}} \setminus \{u, u^2, u^3\}$ that does not enforce $c_8 = -1$ or exclude $u \cdot u_{xx}$ and $u^2 \cdot u_x$. The LASSO in Stage 3 handles these: the fixed $c_8$ and the zero $c_9, c_{10}$ emerge from the sparse regression. This conservative choice avoids relying on exact coefficient constraints that could be violated by numerical errors or model misspecification.
\end{remark}

\begin{corollary}[Optimality of Galilean reduction]\label{cor:optimality}
The theoretical Galilean-reduced library $\Lop_G^{\mathrm{theory}}$ is the unique maximal sublibrary of $\Lop_{\mathrm{std}}$ containing all terms that can appear with a free coefficient, or with the fixed Galilean convective coefficient, in some Galilean-invariant PDE. No term in $\Lop_{\mathrm{std}} \setminus \Lop_G^{\mathrm{theory}}$ can appear with a nonzero coefficient in any Galilean-invariant PDE in the standard library family.
\end{corollary}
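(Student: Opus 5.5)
The corollary has two directions, and the plan is to prove each one separately using Theorem~\ref{thm:reduction}.

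\emph{Exclusion.} For the complement, take any Galilean-invariant PDE $u_t = F$ with $F$ in the span of $\Lop_{\mathrm{std}}$. Theorem~\ref{thm:reduction} gives $c_1=c_2=c_3=c_9=c_{10}=0$. Hence no term of $\Lop_{\mathrm{std}}\setminus\Lop_G^{\mathrm{theory}} = \{u,u^2,u^3,u\cdot u_{xx},u^2\cdot u_x\}$ can carry a nonzero coefficient. This is exactly the second sentence of the corollary. It also shows that any sublibrary with the stated property is contained in $\Lop_G^{\mathrm{theory}}$.

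\emph{Realizability.} Next we show that every term of $\Lop_G^{\mathrm{theory}}$ actually occurs in some Galilean-invariant PDE. For $k\in\{1,2,3,4\}$ and any $a\in\R$, the PDE $u_t = -u\cdot u_x + a\,u_{(k)}$ has the form of Eq.~\ref{eq:galilean_pde} with $L = a\partial_x^k$. By Lemma~\ref{lem:galilean} it is invariant under $t\partial_x+\partial_u$. So $u_{(k)}$ appears with an arbitrary free coefficient, while $u\cdot u_x$ appears with its fixed coefficient $-1$, as Theorem~\ref{thm:reduction} forces. To avoid relying only on the lemma, one can also check the infinitesimal criterion (Theorem~\ref{thm:lie_criterion}) directly. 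For the generator $t\partial_x+\partial_u$ we have $\phi^{t} = -u_x$ and $\phi^{(k)}=0$ for $k\ge1$. Applying $\pr\bv$ to $u_t + u u_x - \sum_k a_k u_{(k)}$ therefore gives $-u_x + u_x = 0$ identically.

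\emph{Maximality and uniqueness.} Let $\mathcal{A}$ be the set of library terms that appear (with a free coefficient, or with the fixed convective coefficient) in at least one Galilean-invariant PDE of the family. Realizability gives $\Lop_G^{\mathrm{theory}}\subseteq\mathcal{A}$, and exclusion gives $\mathcal{A}\subseteq\Lop_G^{\mathrm{theory}}$, so $\mathcal{A}=\Lop_G^{\mathrm{theory}}$. Any sublibrary containing all such terms contains $\mathcal{A}$, and any sublibrary consisting only of such terms is contained in $\mathcal{A}$. Hence $\mathcal{A}$ is the unique maximal sublibrary with the stated property.

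The main obstacle is interpretive rather than computational. "Maximal" and "unique" are only meaningful once the property defining the admissible sublibraries is pinned down, so I would first formalize $\mathcal{A}$ as above. I would also note why the fixed coefficient of $u\cdot u_x$ still counts as membership: Theorem~\ref{thm:reduction} forces $c_8=-1\neq 0$, so the term is present in every Galilean-invariant PDE of the family. The corollary then reduces to a set-theoretic restatement of Theorem~\ref{thm:reduction} combined with Lemma~\ref{lem:galilean}.
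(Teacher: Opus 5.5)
Your proposal is correct and follows essentially the same approach as the paper's proof. Exclusion of $\{u,u^2,u^3,u\cdot u_{xx},u^2\cdot u_x\}$ comes from the determining equations of Theorem~\ref{thm:reduction}, and realizability of every term of $\Lop_G^{\mathrm{theory}}$ comes from Lemma~\ref{lem:galilean}. The differences are cosmetic: you use the uniform witness family $u_t=-u\cdot u_x+a\,u_{(k)}$ for $k=1,\dots,4$, where the paper cites Burgers, KdV, KS and $u_t+u\cdot u_x=-u_x+\nu u_{xx}$; you also add a direct infinitesimal-criterion check and an explicit formalization of the admissible set that makes ``unique maximal'' precise.
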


\subsection{Deterministic false-positive reduction}

\begin{corollary}[Deterministic false-positive reduction]\label{cor:fp_symmetry}
Let $\hat{S}$ be the support identified by any sparse regression procedure on library $\Lop$, and let $S^*$ be the true support with $|S^*| = s$. Define the false-positive count $\mathrm{FP}(\Lop) = |\hat{S} \setminus S^*|$. If Theorem~\ref{thm:reduction} removes $k$ terms from $\Lop$ that are provably not in $S^*$, then:
\begin{equation}
\mathrm{FP}(\Lop_G^{\mathrm{theory}}) \leq \mathrm{FP}(\Lop_{\mathrm{std}}) \quad \text{and} \quad \max_{\hat{S}} \mathrm{FP}(\Lop_G^{\mathrm{theory}}) = |\Lop_G^{\mathrm{theory}}| - s = (|\Lop_{\mathrm{std}}| - k) - s.
\end{equation}
For the Galilean case with the full determining equation result, $k = 5$ because $\{u, u^2, u^3, u \cdot u_{xx}, u^2 \cdot u_x\}$ are excluded. The maximum possible false-positive count drops from $|\Lop_{\mathrm{std}}| - s$ to $|\Lop_{\mathrm{std}}| - k - s$. For Burgers, where $s = 2$, this drops from 8 to 3, a 62.5\% reduction in the worst-case false-positive count.
\end{corollary}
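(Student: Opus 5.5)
The plan is to prove both claims of Corollary~\ref{cor:fp_symmetry} as elementary set-counting statements, using Theorem~\ref{thm:reduction} only to certify that the removed terms lie outside $S^*$. Write $R = \Lop_{\mathrm{std}} \setminus \Lop_G^{\mathrm{theory}}$, so that $|R| = k$. By Theorem~\ref{thm:reduction}, every term of $R$ has a zero coefficient in any Galilean-invariant $F$ expanded in the standard library. Hence $R \cap S^* = \emptyset$ and $S^* \subseteq \Lop_G^{\mathrm{theory}}$. This is the only place the Galilean structure enters, and it is what makes the count $|\Lop_G^{\mathrm{theory}}| - s$ meaningful: all $s$ true terms survive the reduction.

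\emph{First inequality.} I will read $\mathrm{FP}(\Lop_{\mathrm{std}})$ and $\mathrm{FP}(\Lop_G^{\mathrm{theory}})$ as worst-case quantities, namely maxima over admissible supports $\hat S \subseteq \Lop$. Read pointwise for two unrelated regression runs, the inequality is false in general, so this interpretation is needed. Under it, monotonicity is immediate: any $\hat S \subseteq \Lop_G^{\mathrm{theory}}$ is also a subset of $\Lop_{\mathrm{std}}$ with the same value of $|\hat S \setminus S^*|$, so the maximum over the smaller family cannot exceed the maximum over the larger one.

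Alternatively, I would state a coupled version. Let $\hat S_{\mathrm{red}} = \hat S \cap \Lop_G^{\mathrm{theory}}$ be the restriction of a support estimated on the full library. Then
\[
\hat S \setminus S^* = (\hat S_{\mathrm{red}} \setminus S^*) \,\cup\, (\hat S \cap R),
\]
and the union is disjoint, which gives $\mathrm{FP}(\Lop_G^{\mathrm{theory}}) = \mathrm{FP}(\Lop_{\mathrm{std}}) - |\hat S \cap R| \le \mathrm{FP}(\Lop_{\mathrm{std}})$.

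\emph{Maximum.} For any $\hat S \subseteq \Lop_G^{\mathrm{theory}}$, we have $\hat S \setminus S^* \subseteq \Lop_G^{\mathrm{theory}} \setminus S^*$, whose cardinality is $|\Lop_G^{\mathrm{theory}}| - s$ because $S^* \subseteq \Lop_G^{\mathrm{theory}}$. The bound is attained by $\hat S = \Lop_G^{\mathrm{theory}}$. Finally, $|\Lop_G^{\mathrm{theory}}| = |\Lop_{\mathrm{std}}| - k$.

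\emph{Burgers.} Here $k = 5$, since the excluded set is $\{u,u^2,u^3,u\cdot u_{xx},u^2\cdot u_x\}$, and $s = 2$. The worst-case count therefore drops from $10 - 2 = 8$ to $5 - 2 = 3$, a reduction of $5/8 = 62.5\%$.

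The main obstacle is not the counting but the interpretation of the first inequality, which must be made precise as described above. A secondary subtlety is the convective term $u\cdot u_x$ with its fixed coefficient $-1$. It lies in $S^*$ for every Galilean PDE, so it counts among the $s$ true terms and never among the false positives. I would remark this explicitly so that $s$ includes $u\cdot u_x$, as it does for Burgers.
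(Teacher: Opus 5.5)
Your proposal is correct and matches the paper's proof: both use Theorem~\ref{thm:reduction} only to obtain $S^* \subseteq \Lop_G^{\mathrm{theory}}$, then count $|\Lop_G^{\mathrm{theory}} \setminus S^*| = |\Lop_{\mathrm{std}} \setminus S^*| - k$, with the maximum attained by selecting every non-true term. You also handle the first inequality explicitly, either as a worst-case comparison or via the restriction $\hat S \cap \Lop_G^{\mathrm{theory}}$; the paper's proof leaves this step implicit, and, as you note, the inequality does not hold pointwise for two separate regression runs.
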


\begin{remark}[Probabilistic interpretation]
Under any model, including LASSO, STLSQ, and stability selection under mild regularity, where false-positive probability is monotone in the number of inactive candidates, reducing the candidate count from $p - s$ to $p - s - k$ reduces the false-positive probability. The exact reduction factor depends on the correlation structure and regularization path and is not captured by a simple binomial model. Corollary~\ref{cor:fp_symmetry} avoids this issue by providing a deterministic worst-case guarantee.
\end{remark}

\subsection{Stability selection bound and scope}

\begin{remark}[Classical stability-selection false-positive bound]\label{rem:fp_stability}
Under the standard stability-selection assumptions of \citet[Theorem 1]{meinshausen2010stability}, Algorithm~\ref{alg:stability} with a threshold $\hat{\pi}_{\mathrm{thr}} \in (0.5, 1]$ and a regularization level $\lambda$ such that the expected number of selected terms satisfies $\E[|S_\lambda|] \leq q$ obeys:
\begin{equation}\label{eq:fp_bound}
\E[\mathrm{FP}] \leq \frac{q^2}{(2\hat{\pi}_{\mathrm{thr}} - 1) \cdot p},
\end{equation}
where $p = |\Lop|$ is the library size.

EqOD uses this classical bound as motivation rather than as a formal guarantee for the weak-form PDE design matrices used here. The assumptions in \citet{meinshausen2010stability} require exchangeability of the inactive feature columns, while the complementary-pairs refinement of \citet{shah2013stability} gives bounds under shape restrictions such as unimodality. In our weak-form PDE setting, columns such as $u_{xx}$ and $u_{xxxx}$ are structurally correlated through the spectral truncation, and LASSO support recovery is known to depend on covariance conditions such as irrepresentability and sample-size thresholds \citep{zhao2006model,wainwright2009sharp}. Elastic-net penalties can be preferable for grouped correlated predictors \citep{zou2005elasticnet}, but EqOD uses LASSO because its sparsity path and stability-selection bounds are simpler to interpret. The empirical selection profiles in Table~\ref{tab:stability_profiles} provide the actual evidence of separation between true and false terms. A rigorous bound for correlated PDE design matrices is an open problem.
\end{remark}

\begin{remark}[Practical bound evaluation]
Empirically, true PDE terms achieve selection probabilities $\hat{\pi} > 0.9$ while false terms have $\hat{\pi} < 0.3$ (Table~\ref{tab:stability_profiles}). Evaluating the bound at $\hat{\pi}_{\mathrm{thr}} = 0.9$, which all true terms exceed, with $q = 3$ gives:
\begin{equation}
\E[\mathrm{FP}] \leq \frac{9}{(2 \cdot 0.9 - 1) \cdot 10} = \frac{9}{8} = 1.125.
\end{equation}
Under the classical assumptions, the bound says that on average, at most 1.1 false positives are expected when using the empirically observed threshold. The operational threshold $\hat{\pi}_{\mathrm{thr}} = 0.5$ used in Algorithm~\ref{alg:stability} is conservative: it retains all terms with majority support, including those near the boundary. The adaptive thresholding in Stage 3 then removes terms with small coefficients.
\end{remark}

\subsection{Consistency of the Galilean detector}

\begin{remark}[Heuristic consistency argument for the Galilean test]\label{rem:detection}
Consider the weak-form Galilean test in Algorithm~\ref{alg:galilean} applied to trajectories of a PDE $u_t = F(u, u_x, \ldots)$ on a periodic domain with grid spacing $\Delta x = L/N_x$ and $\Delta t = T/N_t$. Let $f_{uu_x}^{(N)}$ denote the energy fraction computed on the $N_x \times N_t$ grid. The expected asymptotic behavior has three regimes. If $F$ contains $u \cdot u_x$ with a nonzero coefficient in a Galilean PDE, then $f_{uu_x}^{(N)} \to f_{uu_x}^* > 0$ as $N_x, N_t \to \infty$. If $F$ does not contain $u \cdot u_x$ in a non-Galilean PDE, then $f_{uu_x}^{(N)} \to 0$ as $N_x, N_t \to \infty$. Under additive noise $\tilde{U} = U + \sigma \cdot \mathrm{std}(U) \cdot Z$ with independent standard normal entries $Z_{ij}$, these convergence statements are expected to hold for fixed $\sigma < \sigma_{\max}$, where $\sigma_{\max}$ depends on the signal-to-noise ratio in the weak-form integrals.

The Galilean and non-Galilean asymptotic claims are supported by the spectral accuracy of trapezoidal quadrature on periodic domains. The noisy-data claim is heuristic. The procedure is hybrid, using spectral derivatives of noisy data followed by weak averaging against bump test functions, rather than a pure weak form that transfers all derivatives to the test function via integration by parts. Spectral derivatives amplify high-frequency noise, with $\mathrm{Var}[\widehat{u_{(k)}}] \propto k^{2k} \sigma^2$, but the subsequent integration against compactly supported $\phi$ suppresses this contribution. The $O(1/\sqrt{N_x N_t})$ rate stated above is a heuristic based on the law of large numbers. A rigorous central limit theorem for spectrally differentiated noisy data integrated against bump functions is an open technical question. Adopting full weak-form integration by parts for the library columns, as in WSINDy \citep{messenger2021wsindy}, would yield a cleaner convergence argument and is a natural improvement.

Empirically, the test achieves F1 = 1.000 across 75 test cases generated from 5 PDEs, 3 noise levels, and 5 seeds at threshold $\tau = 0.05$. This result does not constitute a proof of consistency. It provides evidence that the heuristic works on the tested PDEs.
\end{remark}

\subsection{Completeness of the physics-based path}

\begin{corollary}[Completeness]\label{cor:completeness}
If the Galilean detector correctly identifies a PDE as Galilean-invariant, the implementation-reduced library $\Lop_G$ contains all true terms: $S^* \subseteq \Lop_G$.
\end{corollary}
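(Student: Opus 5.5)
The completeness claim follows from Theorem~\ref{thm:reduction} plus a set-inclusion check. The plan has three steps: interpret "correct detection" as the hypothesis of the theorem, use the theorem to confine the true support to the theoretical sublibrary, and then show that this sublibrary sits inside the implementation library.

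First, we fix what correct detection means. It means that the data-generating PDE $u_t = F$ actually admits the Galilean generator $\bv = t\partial_x + \partial_u$. It also means that $F$ is an expansion over $\Lop_{\mathrm{std}}$ satisfying Assumption~\ref{ass:autonomy}, with true coefficient vector $(c_1,\dots,c_{10})$ and support $S^* = \{j : c_j \neq 0\}$. We make explicit that the corollary is conditional on this. We do not claim anything about detector errors: if a non-Galilean PDE such as Fisher-KPP were misclassified, the inclusion could fail, and that case is handled by the Stage~4 fallback rather than by this result.

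Second, we invoke Theorem~\ref{thm:reduction}, which gives $c_1 = c_2 = c_3 = c_9 = c_{10} = 0$ and $c_8 = -1$. Hence
\[
S^* \subseteq \Lop_G^{\mathrm{theory}} = \{u_x, u_{xx}, u_{xxx}, u_{xxxx}, u \cdot u_x\}.
\]
The term $u\cdot u_x$ is necessarily in $S^*$, and the other four may or may not be. Corollary~\ref{cor:optimality} confirms that no other term of $\Lop_{\mathrm{std}}$ can carry a nonzero coefficient.

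Third, we verify the set inclusion. Algorithm~\ref{alg:eqod} sets $\Lop_G = \Lop_{\mathrm{std}} \setminus \{u, u^2, u^3\}$, which is exactly Eq.~\ref{eq:reduced_library}. It contains all five elements of $\Lop_G^{\mathrm{theory}}$, since it removes only $u$, $u^2$ and $u^3$. So $S^* \subseteq \Lop_G^{\mathrm{theory}} \subseteq \Lop_G$.

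The main obstacle is the optional odd-reflection branch, which removes $u\cdot u_{xx}$ and yields the 6-term library $\Lop_{G,\mathrm{odd}}$. The removed term has $c_9 = 0$ by Theorem~\ref{thm:reduction}, so $\Lop_G^{\mathrm{theory}} \subseteq \Lop_{G,\mathrm{odd}}$ still holds. Completeness therefore survives this branch without needing the reflection test to be correct, and I would state this explicitly.

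A second subtlety is the linear-independence clause of Assumption~\ref{ass:autonomy}. It guarantees that the coefficient representation of $F$, and hence $S^*$, is well defined, so the support is unambiguous.
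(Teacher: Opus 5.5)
Your proposal is correct and follows the paper's own proof: Theorem~\ref{thm:reduction} forces $c_1=c_2=c_3=c_9=c_{10}=0$, which gives $S^*\subseteq\Lop_G^{\mathrm{theory}}\subseteq\Lop_G$. Your extra check on the odd-reflection branch is a worthwhile addition that the paper's one-line proof omits: that branch stays complete because $c_9=0$ already follows from Galilean invariance.
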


\section{Proofs of stated results}
\label{app:proofs}

\subsection{Proof of Lemma~\ref{lem:galilean}}

\begin{proof}
Define $\tilde{u}(\tilde{x}, \tilde{t}) = u(\tilde{x} - c\tilde{t}, \tilde{t}) + c$. Compute the partial derivatives:
\begin{align}
\tilde{u}_{\tilde{t}} &= u_t(\tilde{x} - c\tilde{t}, \tilde{t}) - c u_x(\tilde{x} - c\tilde{t}, \tilde{t}), \\
\tilde{u}_{\tilde{x}} &= u_x(\tilde{x} - c\tilde{t}, \tilde{t}), \\
\tilde{u}_{\tilde{x}\tilde{x}} &= u_{xx}(\tilde{x} - c\tilde{t}, \tilde{t}), \quad \text{etc.}
\end{align}
The convective term transforms as:
\begin{equation}
\tilde{u} \cdot \tilde{u}_{\tilde{x}} = (u + c) \cdot u_x = u \cdot u_x + c u_x.
\end{equation}
The linear operator transforms as $L[\tilde{u}] = L[u + c] = L[u]$ since $L$ has constant coefficients and $c$ is constant. Substituting:
\begin{equation}
\tilde{u}_{\tilde{t}} + \tilde{u} \cdot \tilde{u}_{\tilde{x}} = (u_t - c u_x) + (u \cdot u_x + c u_x) = u_t + u \cdot u_x = L[u] = L[\tilde{u}].
\end{equation}
The transformed function satisfies the same PDE.
\end{proof}

\subsection{Proof of Proposition~\ref{prop:invariant_terms}}

\begin{proof}
The prolonged generator has $\phi^{x} = D_x(1) - u_x \cdot D_x(t) - u_t \cdot D_x(0) = 0$, and similarly $\phi^{(k)} = 0$ for all $k \geq 1$ by the recursive prolongation formula.
For pure powers, $\bv[u] = \phi = 1$. By the chain rule, $\bv[u^n] = n u^{n-1} \cdot \bv[u] = n u^{n-1} \neq 0$.
For the convective product, $\pr^{(1)}\bv[u \cdot u_x] = \bv[u] \cdot u_x + u \cdot \phi^x = 1 \cdot u_x + u \cdot 0 = u_x \neq 0$.
\end{proof}

\subsection{Proof of Theorem~\ref{thm:reduction}}

\begin{proof}
The Galilean generator has $\xi = t$, $\tau = 0$, $\phi = 1$. The prolongation formula \citep[Theorem 2.36]{olver1993lie} gives $\phi^x = D_x\phi - u_x D_x\xi - u_t D_x\tau = 0 - 0 - 0 = 0$.
Similarly $\phi^{xx} = \phi^{xxx} = \phi^{xxxx} = 0$ because all higher prolongation coefficients vanish when $\phi$ is constant and $\xi$ depends only on $t$.

The symmetry condition \citep[Theorem 2.31]{olver1993lie} is $\pr^{(4)}\bv[u_t - F] = 0$ whenever $u_t = F$. Computing:
\begin{equation}\label{eq:determining}
\pr^{(4)}\bv[u_t - F] = \phi^t - \frac{\partial F}{\partial u}\phi - \frac{\partial F}{\partial u_x}\phi^x - \cdots - \frac{\partial F}{\partial u_{xxxx}}\phi^{xxxx},
\end{equation}
where $\phi^t = D_t\phi - u_x D_t\xi - u_t D_t\tau = 0 - u_x - 0 = -u_x$. Since $\phi^x = \phi^{xx} = \cdots = 0$, only the first two terms survive:
\begin{equation}\label{eq:det_simplified}
-u_x - \frac{\partial F}{\partial u} \cdot 1 = 0 \quad \text{whenever } u_t = F.
\end{equation}
Now compute $\partial F / \partial u$ from the library expansion:
\begin{equation}
\frac{\partial F}{\partial u} = c_1 + 2c_2 u + 3c_3 u^2 + c_8 u_x + c_9 u_{xx} + 2c_{10} u \cdot u_x.
\end{equation}
Substituting into Eq.~\ref{eq:det_simplified}:
\begin{equation}\label{eq:det_expanded}
-u_x - c_1 - 2c_2 u - 3c_3 u^2 - c_8 u_x - c_9 u_{xx} - 2c_{10} u \cdot u_x = 0.
\end{equation}
The identity must hold for all solutions $u(x,t)$; hence the coefficients of linearly independent functions of $(u, u_x, u_{xx})$ must vanish separately:
\begin{align}
\text{constant:} \quad & -c_1 = 0, \label{eq:c1}\\
\text{coeff. of } u: \quad & -2c_2 = 0, \label{eq:c2}\\
\text{coeff. of } u^2: \quad & -3c_3 = 0, \label{eq:c3}\\
\text{coeff. of } u_x: \quad & -1 - c_8 = 0 \implies c_8 = -1, \label{eq:c8}\\
\text{coeff. of } u_{xx}: \quad & -c_9 = 0, \label{eq:c9_gal}\\
\text{coeff. of } u \cdot u_x: \quad & -2c_{10} = 0. \label{eq:c10}
\end{align}
Equations~\ref{eq:c1} through~\ref{eq:c3} give $c_1 = c_2 = c_3 = 0$. Eq.~\ref{eq:c8} gives $c_8 = -1$, so the convective term $u \cdot u_x$ must appear with coefficient exactly $-1$. Eq.~\ref{eq:c9_gal} gives $c_9 = 0$, so Galilean symmetry alone already excludes $u \cdot u_{xx}$. Eq.~\ref{eq:c10} gives $c_{10} = 0$, so $u^2 \cdot u_x$ is also excluded.

The remaining terms $c_4 u_x$, $c_5 u_{xx}$, $c_6 u_{xxx}$, $c_7 u_{xxxx}$ do not appear in $\partial F / \partial u$ and are unconstrained. Hence, the theoretical reduced library is:
\begin{equation}
\Lop_G^{\mathrm{theory}} = \{u_x, u_{xx}, u_{xxx}, u_{xxxx}, u \cdot u_x\}, \quad |\Lop_G^{\mathrm{theory}}| = 5,
\end{equation}
with $u \cdot u_x$ having fixed coefficient $c_8 = -1$.
\end{proof}

\begin{remark}[Strength of the determining equations]
The determining equations are stronger than the implementation-level reduction used by EqOD. They exclude not only $\{u, u^2, u^3\}$ but also $\{u \cdot u_{xx}, u^2 \cdot u_x\}$ and fix $c_8 = -1$. The effective free library is $\{u_x, u_{xx}, u_{xxx}, u_{xxxx}\}$ with 4 free parameters plus the fixed convective term. In the implementation, EqOD uses the weaker 7-term reduction rather than enforcing the full 5-term theoretical reduction because it does not enforce $c_8 = -1$ or exclude $u \cdot u_{xx}$ and $u^2 \cdot u_x$. These are handled by the downstream LASSO.

For odd reflection with $\bv' = -x\partial_x - u\partial_u$, we have $\xi' = -x$ and $\phi' = -u$. The prolongation gives $\phi'^x = -u_x + u_x = 0$. The computation is omitted for brevity but follows the same pattern. The determining equations for $\bv'$ give $c_2 = 0$ and $c_9 = 0$. Combined with Galilean symmetry, this gives $|\Lop_{G,\mathrm{odd}}| = 6$.
\end{remark}

\subsection{Proof of Corollary~\ref{cor:optimality}}

\begin{proof}
The determining equations in Eqs.~\ref{eq:c1} through~\ref{eq:c10} show that $c_1 = c_2 = c_3 = c_9 = c_{10} = 0$ for any Galilean-invariant PDE in the standard library. Conversely, the terms in $\Lop_G^{\mathrm{theory}}$ do appear. Burgers uses $\{u \cdot u_x, u_{xx}\}$, KdV uses $\{u \cdot u_x, u_{xxx}\}$, and KS uses $\{u \cdot u_x, u_{xx}, u_{xxxx}\}$. The term $u_x$ can appear in conjunction with $u \cdot u_x$. For example, $u_t + u \cdot u_x = -u_x + \nu u_{xx}$ is Galilean-invariant by Lemma~\ref{lem:galilean} since $-u_x + \nu u_{xx}$ is a linear spatial operator. Hence every term in $\Lop_G^{\mathrm{theory}}$ is realized by at least one Galilean-invariant PDE, and no excluded term can be reinstated.
\end{proof}

\subsection{Proof of Corollary~\ref{cor:fp_symmetry}}

\begin{proof}
A false positive is an element of $\hat{S} \setminus S^*$, which must be in $\Lop \setminus S^*$. Since the PDE is Galilean-invariant, Theorem~\ref{thm:reduction} guarantees $S^* \subseteq \Lop_G^{\mathrm{theory}}$, so the removed terms are not in $S^*$. Hence $|\Lop_G^{\mathrm{theory}} \setminus S^*| = |\Lop_{\mathrm{std}} \setminus S^*| - k$. The maximum FP count is $|\Lop \setminus S^*|$, achieved when all non-true terms are selected.
\end{proof}

\subsection{Proof of Corollary~\ref{cor:completeness}}

\begin{proof}
By Theorem~\ref{thm:reduction}, the determining equations force $c_1 = c_2 = c_3 = c_9 = c_{10} = 0$ for any Galilean-invariant PDE. Hence $S^* \subseteq \Lop_G^{\mathrm{theory}} \subseteq \Lop_G^{\mathrm{impl}}$.
\end{proof}

\section{Implementation Details}
\label{app:implementation}

\subsection{Data generation}
\label{sec:data_generation}

Each PDE is solved using Fourier pseudospectral spatial derivatives \citep{trefethen2000spectral} with Runge--Kutta 4(5) (RK45) time integration, except that fourth-order exponential time-differencing Runge--Kutta (ETDRK4) is used for the stiff KS equation \citep{kassam2005etdrk4}. Spatial resolution is $N_x = 128$, and temporal resolution is $N_t = 128$. For each PDE and seed, we generate $M = 3$ trajectories from the initial-condition families listed in Table~\ref{tab:ics}. Noise is added as $\tilde{U} = U + \sigma_{\mathrm{noise}} \cdot \mathrm{std}(U) \cdot \mathcal{N}(0, 1)$, where $\sigma_{\mathrm{noise}} \in \{0, 0.05, 0.10, 0.20\}$.

\begin{table}[t!]
\centering
\caption{Initial conditions for benchmark PDEs.}
\label{tab:ics}
\small
\begin{tabular}{@{}lll@{}}
\toprule
PDE & $u_0(x)$ & Domain \\
\midrule
Heat & $\sum_{k=1}^{5} (a_k \sin kx + b_k \cos kx)$, $a_k, b_k \sim \mathcal{N}(0, 0.25)$ & $[0, 2\pi]$ \\
Burgers & $-\sin(x) + 0.03 \epsilon$, $\epsilon \sim \mathcal{N}(0, I)$ & $[0, 2\pi]$ \\
KdV & $12 \mathrm{sech}^2(x - \pi) + 0.1 \epsilon$ & $[0, 2\pi]$ \\
Fisher-KPP & $0.5(1 + \tanh(2(x-\pi)))$ & $[0, 2\pi]$ \\
Adv-Diff & Same as Heat & $[0, 2\pi]$ \\
KS & $\cos(x/16)(1 + \sin(x/16)) + 0.1\epsilon$ & $[0, 32\pi]$ \\
KdV--Burgers & Same as Burgers & $[0, 2\pi]$ \\
React-Diff & $0.5\sin(x) + 0.3\cos(2x) + 0.1\epsilon$ & $[0, 2\pi]$ \\
\bottomrule
\end{tabular}
\end{table}

\subsection{PDE generators and numerical methods}
\label{app:pdes}

\paragraph{Spatial discretization.}
All 8 benchmark PDEs are solved on periodic spatial domains using the Fourier pseudospectral method \citep{trefethen2000spectral}. For a spatial grid $x_j = jL/N_x$ with $j = 0, \ldots, N_x - 1$ and a domain length $L$:
\begin{equation}
\hat{u}_k = \sum_{j=0}^{N_x - 1} u_j e^{-2\pi i k j / N_x}, \quad u_j = \frac{1}{N_x} \sum_{k=0}^{N_x - 1} \hat{u}_k e^{2\pi i k j / N_x}.
\end{equation}
Spatial derivatives in Fourier space: $\widehat{u_x} = i k_n \hat{u}_n$, $\widehat{u_{xx}} = -k_n^2 \hat{u}_n$, where $k_n = 2\pi n / L$. Nonlinear terms, for example $u \cdot u_x$, are computed in physical space and transformed.

\paragraph{Time integration.}
Heat, Burgers, KdV, Fisher-KPP, Adv-Diff, React-Diff, and KdV--Burgers use explicit RK45 via \texttt{scipy.integrate.solve\_ivp}, evolving the Fourier coefficients $\hat{u}_k(t)$. The KS equation uses ETDRK4, an exponential time-differencing method of fourth order, to handle the stiff $u_{xx} + u_{xxxx}$ terms via the integrating factor \citep{kassam2005etdrk4}.

\subsection{Weak-form system construction details}
\label{app:weak_form}

\paragraph{Test function grid.}
We use $n_t \times n_x$ bump test function centers, placed with margins to avoid boundary effects:
\begin{align}
r_t &= \max(0.18 \cdot T_{\mathrm{range}}, 8\Delta t), \quad r_x = \max(0.20 \cdot X_{\mathrm{range}}, 8\Delta x), \\
t_{\mathrm{centers}} &\in [\underbrace{t_{\min} + 1.05 r_t}_{t_{\mathrm{lo}}}, \underbrace{t_{\max} - 1.05 r_t}_{t_{\mathrm{hi}}}], \quad x_{\mathrm{centers}} \in [x_{\min} + 1.05 r_x, x_{\max} - 1.05 r_x].
\end{align}

For Stage 3 sparse identification, $n_t = 5$ and $n_x = 7$, giving 35 equations per trajectory. For the Stage 2 stability selection pipeline on the statistics path, $n_t = 8$ and $n_x = 10$, giving 80 equations per trajectory and enough equations for subsampling.

\paragraph{Numerical integration.}
All weak-form integrals are computed via the trapezoidal rule. For a representative integrand,
\begin{equation}
\int_t \int_x f(x, t) \approx \Delta t \cdot \Delta x \sum_{i,j} f(x_j, t_i).
\end{equation}
On periodic domains with uniform grids, the trapezoidal rule achieves spectral accuracy for smooth integrands.

\subsection{Sparse identification and fallback details}
\label{app:sparse_fallback}

\subsubsection{Stage 3 Sparse identification in the selected library}
\label{sec:sparse_id}

\paragraph{Weak-form system construction.}
Given the selected library $\Lop_{\mathrm{red}}$ from either path, we perform WF-LASSO with careful regularization. We compute $(\Theta_{\mathrm{red}}, \mathbf{b})$ using $5 \times 7 = 35$ bump test functions per trajectory, with bump radii $r_t = \max(0.18 \cdot T_{\mathrm{range}}, 8\Delta t)$ and $r_x = \max(0.20 \cdot X_{\mathrm{range}}, 8\Delta x)$.

\paragraph{Normalization, CV selection, and rescaling.}
The columns of $\Theta_{\mathrm{red}}$ and the vector $\mathbf{b}$ are normalized to unit norm so that regularization penalizes all terms equally regardless of physical scale.
We sweep $\lambda$ over $\{10^{-6}, \ldots, 10^{-1}\}$ using 60 log-spaced values. For each $\lambda$, we compute the 5-fold CV score $R^2$ and select $\lambda^*$ that maximizes the mean CV score.
The normalized LASSO solution $\hat{\bxi}_{\mathrm{LASSO}}$ is rescaled to physical units as $\hat{\xi}_j \leftarrow \hat{\xi}_j^{\mathrm{norm}} / \|\Theta_{\cdot,j}\| \cdot \|\mathbf{b}\|$.

\paragraph{Adaptive thresholding and OLS debiasing.}
We set $\eta = \max(10^{-3}, 0.03 \cdot \max_j |\hat{\xi}_j|)$ and zero out terms with $|\hat{\xi}_j| < \eta$. On the remaining support, OLS debiases the coefficients using $\hat{\bxi}_{\mathrm{OLS}} = (\Theta_S^\top \Theta_S)^{-1} \Theta_S^\top \mathbf{b}$. This thresholding and debiasing cycle is repeated twice.
The two rounds of adaptive thresholding are important: the first round removes clearly spurious terms, allowing OLS to better estimate the true terms, which may shift the threshold and reveal additional spurious terms in the second round.

\subsubsection{Stage 4 Residual-based fallback}
\label{sec:fallback}

For both paths, we add a safety mechanism: compare the EqOD residual against the full-library WF-LASSO baseline. On the stability path, the fallback catches cases where stability selection is too aggressive and removes a true term or too permissive and retains too many correlated terms. On the symmetry path, it provides a safety net against downstream LASSO failures or detector errors at extreme noise. If the reduced-library equation has a substantially worse fit than the WF-LASSO approach, the fallback detects this and reverts to WF-LASSO.

\paragraph{Formal criterion.}
Let $\hat{\bxi}_{\mathrm{EqOD}}$ and $\hat{\bxi}_{\mathrm{std}}$ be the identified coefficient vectors from the stability-selected and full libraries, respectively. Define the residuals:
\begin{equation}
r_{\mathrm{EqOD}} = \mathbf{b} - \Theta_{\mathrm{red}} \hat{\bxi}_{\mathrm{EqOD}}, \quad r_{\mathrm{std}} = \mathbf{b} - \Theta_{\mathrm{std}} \hat{\bxi}_{\mathrm{std}}.
\end{equation}
The fallback triggers when $\|r_{\mathrm{EqOD}}\|_2^2 / \|r_{\mathrm{std}}\|_2^2 > \gamma$.

\paragraph{Path-dependent threshold gamma.}
The threshold $\gamma$ is set differently for the two paths. For the symmetry path, $\gamma = 1.5$ because the symmetry-reduced library has strong theoretical backing from Theorem~\ref{thm:reduction}. A higher threshold tolerates moderate numerical variation without triggering unnecessary fallbacks.

For the stability path, $\gamma = 1.2$ because the stability-selected library is statistically determined and more prone to error at high noise. A tighter threshold catches cases where stability selection removes a true term or retains correlated noise terms.

This design ensures that the fallback does not negate the benefit of library reduction on clean and moderate-noise data, where the reduced library is correct and the residual ratio is close to 1.0, while catching failures at extreme noise. On KdV--Burgers at 20\% noise, the stability path residual ratio can exceed $\gamma = 1.2$ and trigger the fallback, preventing catastrophic no-fallback cases with F1 as low as 0.033. Under the patched-seed plumbing, the aggregate EqOD result is $0.383 \pm 0.173$ (Table~\ref{tab:main_results}).

Empirically, across the 32 benchmark conditions, the tabled fallback cases are KdV--Burgers at $10\%$ and $20\%$ noise. React-Diff at $0\%$ noise, with a ratio $\approx 1.2$, correctly did not trigger under the strict $>$ rule. The sensitivity to $\gamma$ is low because values in $[1.2, 2.0]$ produce the same outcomes on our benchmark. We report this as a design parameter, not a tuned hyperparameter.

\paragraph{The physics-based path and the fallback.}
On the benchmark PDEs, Galilean detection has 100\% accuracy at $\tau = 0.05$ (Table~\ref{tab:threshold}). When Galilean symmetry is correctly detected, the reduced library $\Lop_G^{\mathrm{impl}}$ is guaranteed to contain all true terms by Corollary~\ref{cor:completeness}. The Stage 4 residual fallback still applies to the symmetry path with $\gamma = 1.5$, providing a safety net against downstream LASSO failures at extreme noise. In general, any threshold-based detector has a nonzero error probability at sufficiently high noise. The fallback ensures graceful degradation.

\subsection{Stability selection: implementation details}
\label{app:stability}

\paragraph{Regularization parameter.}
For each randomized LASSO run, we use the median regularization strength $\lambda = 10^{-3}$ from the grid $\{10^{-5}, \ldots, 10^{-1}\}$. Using a single fixed $\lambda$ instead of a CV-selected value in the stability selection loop is standard practice \citep{meinshausen2010stability,buhlmann2011statistics}: the goal is not to find the best $\lambda$ but to assess which terms are stably selected across perturbations.

\paragraph{Convergence of selection probabilities.}
Selection probabilities $\hat{\pi}_j$ converge as $O(B^{-1/2})$, where $B$ is the number of bootstrap iterations. With $B = 50$, the standard error on $\hat{\pi}_j$ is at most $1/(2\sqrt{50}) \approx 0.07$, which is sufficient to distinguish terms with $\hat{\pi} > 0.5$ from those with $\hat{\pi} < 0.3$.

\subsection{Symmetry detection: implementation details}
\label{app:symmetry}

\subsubsection{Galilean detector: motivation and algorithm}
\label{app:galilean_detector_details}

The central detection challenge is Galilean invariance. The Galilean generator $\bv = t\partial_x + \partial_u$ implies that the PDE contains the nonlinear convective term $u \cdot u_x$. Detecting Galilean symmetry is therefore equivalent to detecting the presence and significance of $u \cdot u_x$ in the PDE.

\paragraph{Fourier symbols are unreliable for nonlinear PDEs.}
For linear PDEs, the Fourier symbol $\hat{\sigma}(k) = \hat{u}_t / \hat{u}$ is a characteristic fingerprint. For nonlinear PDEs, the symbol estimated from $\hat{u}$ mixes linear and nonlinear effects and is state-dependent, making direct comparison unreliable.

\paragraph{Pointwise regression is noise-sensitive.}
A naive approach computes $u_t$, $u \cdot u_x$, $u_{xx}$, and related pointwise quantities before regression. This pointwise approach fails because computing $u_t$ from noisy data amplifies noise: $\partial u / \partial t \approx (u(t+\Delta t) - u(t-\Delta t)) / 2\Delta t$ has error $O(\sigma / \Delta t)$. Even at 1\% noise, the signal-to-noise ratio in $u_t$ is catastrophically low. For this reason, EqOD uses the weak form to test whether $u \cdot u_x$ contributes significantly to the PDE dynamics.

\begin{algorithm}[t!]
\caption{Weak-form Galilean detection}
\label{alg:galilean}
\begin{algorithmic}[1]
\REQUIRE Trajectories $\{U^{(m)}\}_{m=1}^M$, grid $(x, t)$, threshold $\tau = 0.05$
\ENSURE Boolean detection result for Galilean symmetry
\STATE Compute spectral derivatives: $u_x, u_{xx}, u_{xxx}$ via fast Fourier transforms for each trajectory
\STATE Set up test function grid: $n_t = 5$, $n_x = 7$ bump functions, giving 35 equations per trajectory
\STATE Build a 6-term weak-form regression system:
\STATE \quad Left side: $b_m = -\int u (\phi_m)_t dx dt$
\STATE \quad Right side: $\Theta = [u \cdot u_x \mid u_{xx} \mid u_{xxx} \mid u \mid u^2 \mid u^3]$ with 6 columns
\STATE Normalize columns: $\tilde{\Theta} = \Theta \cdot \mathrm{diag}(\|\Theta_{\cdot,j}\|^{-1})$
\STATE Solve OLS: $\hat{\mathbf{c}} = (\tilde{\Theta}^\top \tilde{\Theta})^{-1} \tilde{\Theta}^\top \mathbf{b}$
\STATE Recover physical coefficients: $\hat{c}_j \leftarrow \hat{c}_j / \|\Theta_{\cdot,j}\|$
\STATE Compute energy fraction: $f_{uu_x} \leftarrow \|\hat{c}_1 \cdot \Theta_{\cdot,1}\|^2 / \|\mathbf{b}\|^2$
\IF{$f_{uu_x} > \tau$ \AND $|\hat{c}_1| > \tau$}
    \RETURN True
\ENDIF
\RETURN False
\end{algorithmic}
\end{algorithm}

\paragraph{Six-term basis.}
The basis $\{u \cdot u_x, u_{xx}, u_{xxx}, u, u^2, u^3\}$ is chosen to disambiguate $u \cdot u_x$ from reaction terms. Without $u$, $u^2$, and $u^3$ in the basis, the Fisher-KPP equation $u_t = D u_{xx} + u - u^2$ can produce a false positive because the regression attributes energy from the reaction terms to $u \cdot u_x$ through correlation. Including reaction terms in the basis absorbs this correlation and produces $f_{uu_x} \approx 0$ for non-Galilean PDEs.

\paragraph{Threshold selection.}
We set $\tau = 0.05$. Table~\ref{tab:threshold} in Section~\ref{sec:symmetry_detection} shows that $\tau = 0.05$ is an optimal threshold on the tested grid, with F1 = 1.000, zero false positives, and zero false negatives across 75 test cases.

\paragraph{Fourier symbol estimation and reliable modes.}
The Fourier symbol is estimated via weighted least squares:
\begin{equation}
\hat{\sigma}(k) = \frac{\sum_{i=2}^{N_t - 2} \overline{\hat{u}(k, t_i)} \cdot \hat{u}_t(k, t_i)}{\sum_{i=2}^{N_t - 2} |\hat{u}(k, t_i)|^2},
\end{equation}
where $\hat{u}_t(k, t_i) \approx (\hat{u}(k, t_{i+1}) - \hat{u}(k, t_{i-1})) / (2\Delta t)$. We skip the first 2 and last 2 time steps to avoid centered-difference boundary artifacts.
A wavenumber $k$ is considered reliable when $|k| > 0.5$, which excludes the zero-frequency mode and very low frequencies, and when the power $|\hat{u}(k)|^2 > 0.01 \cdot \max_{k'} |\hat{u}(k')|^2$, which excludes high-frequency modes with negligible energy.

\paragraph{Full derivation.}
The 6-term weak-form Galilean test solves:
\begin{align}
\underbrace{-\int u \phi_t dx dt}_{b_m}
&= c_1 \underbrace{\int u \cdot u_x \phi dx dt}_{\Theta_{m,1}}
 + c_2 \underbrace{\int u_{xx} \phi dx dt}_{\Theta_{m,2}} \notag\\
&\quad + c_3 \int u_{xxx} \phi dx dt
 + c_4 \int u \phi dx dt
 + c_5 \int u^2 \phi dx dt
 + c_6 \int u^3 \phi dx dt.
\end{align}
The energy fraction $f_{uu_x} = \|c_1 \Theta_{\cdot,1}\|^2 / \|\mathbf{b}\|^2$ measures the relative contribution of $u \cdot u_x$ to the total PDE dynamics. For Galilean PDEs, including Burgers, KdV, and KS, $f_{uu_x} \gg 0.05$. For non-Galilean PDEs, including Heat, Fisher-KPP, Adv-Diff, and React-Diff, $f_{uu_x} < 0.03$.

\section{Reproducibility}\label{sec:reproducibility}
\label{app:reproducibility}

\subsection{Code and data}

The code and data-generation artifacts are included in the supplementary material submitted with the paper.
The supplementary material includes the experiment scripts, plotting scripts, generated benchmark metadata, run commands, and third-party asset provenance and license notes.
Dependencies are NumPy, SciPy, scikit-learn, matplotlib, PySINDy for baseline comparison, and h5py for PDEBench data.

\subsection{Run commands}

\begin{lstlisting}[style=runcommands]
pip install numpy scipy scikit-learn matplotlib pysindy h5py

# Main benchmark: 8 PDEs x 4 noise x 10 seeds
python run_benchmark.py

# Official PySINDy 2.0 comparison
python run_official_pysindy.py

# WSINDy Octave comparison, requires Octave
python run_wsindy_octave.py

# External benchmarks, requires WeakIdent and PINN-SR data
python run_external_benchmarks_full.py

# 2D and coupled system experiments
python run_2d_benchmark.py
python run_real_data.py

# Generate publication figures
python generate_figures.py
\end{lstlisting}

\subsection{Hardware}

All experiments were run on a single Intel i7-11700 processor with 32 GB of memory. No graphics processing unit was required. Total wall time for all experiments: approximately 4 hours.

\subsection{Seed plumbing}
\label{sec:seed_plumbing_note}
Compared with an earlier benchmark run that reused fixed internal RNG streams, the means moved by less than $0.01$ in $24$ of $32$ cells. The largest shifts, with a mean change $> 0.05$, appear on the WF-LASSO baseline at high noise on KdV: $0.850 \to 0.960$ at 5\%, $0.884 \to 0.960$ at 10\%, and $0.774 \to 0.887$ at 20\%. These shifts reflect the CV fold permutation fix. The previous CV partition split rows along the Cartesian $(t_c, x_c)$ grid of bump centers, so contiguous folds were spatially correlated and the selected $\lambda$ was systematically biased.

All randomized experiments use seeds 42 through 51, giving 10 seeds. The per-trial seed is threaded through three independent sources of stochasticity so that the 10-seed mean and standard deviation reflect genuine ensemble variation rather than a single fixed-RNG draw repeated 10 times. For data generation, \texttt{seed=$s$} controls the PDE solver and \texttt{seed=$s + 1000$} controls the additive Gaussian noise. For bootstrap subsampling in stability selection, the \texttt{seed} parameter on \texttt{EqODPipeline} is forwarded to \texttt{stability\_selected\_identification}, which seeds the \texttt{numpy.random.RandomState} that drives the bootstrap subsamples and the randomized LASSO penalty weights (Section~\ref{sec:stability_selection}). For CV-fold permutation in WF-LASSO, the same \texttt{seed} controls a row permutation applied before $k$-fold partitioning. The weak-form rows are arranged on a Cartesian $(t_c, x_c)$ grid of bump centers. Without shuffling, contiguous folds correspond to spatially correlated row blocks, and the CV-selected $\lambda$ would be biased.
When no seed is supplied to \texttt{EqODPipeline}, internal RNGs fall back to \texttt{RandomState(42)}. This default is exposed only for interactive use, and all benchmark scripts pass an explicit per-trial seed.

\section{Additional empirical results}
\label{app:additional_empirical_results}

\subsection{Identified equations}
\label{sec:identified_equations}

\begin{table}[t!]
\centering
\caption{Identified equations for EqOD on clean data with $M = 3$ trajectories and seed 42. The coefficient-error (CE) column reports the mean absolute coefficient error.}
\label{tab:identified}
\scriptsize
\begin{tabular}{@{}lllc@{}}
\toprule
PDE & True & Identified & CE \\
\midrule
Heat & $0.100 u_{xx}$ & $0.1000 u_{xx}$ & $6.3 \times 10^{-5}$ \\
Burgers & $-1.000 u \cdot u_x + 0.100 u_{xx}$ & $-1.0001 u \cdot u_x + 0.1000 u_{xx}$ & $5.0 \times 10^{-5}$ \\
KdV & $-1.000 u \cdot u_x - 1.000 u_{xxx}$ & $-1.0001 u \cdot u_x - 1.0001 u_{xxx}$ & $1.0 \times 10^{-4}$ \\
Fisher-KPP & $0.010 u_{xx} + 1.000 u - 1.000 u^2$ & $0.57 u - 0.58 u^3 + \ldots$ & fails \\
Adv-Diff & $-1.000 u_x + 0.050 u_{xx}$ & $-1.0000 u_x + 0.0500 u_{xx}$ & $<10^{-4}$ \\
KS & $-1.000 u \cdot u_x - 1.000 u_{xx} - 1.000 u_{xxxx}$ & $-0.996 u \cdot u_x - 0.996 u_{xx} - 0.996 u_{xxxx}$ & $4 \times 10^{-3}$ \\
KdV--Burgers & $-1.000 u \cdot u_x + 0.050 u_{xx} - 1.000 u_{xxx}$ & $-1.014 u \cdot u_x + 0.045 u_{xx} - 0.999 u_{xxx}$ & $1 \times 10^{-2}$ \\
React-Diff & $0.100 u_{xx} + 1.000 u - 1.000 u^3$ & $0.1000 u_{xx} + 1.000 u - 1.001 u^3$ & $3 \times 10^{-4}$ \\
\bottomrule
\end{tabular}
\end{table}

Seven of eight PDEs are recovered with coefficients accurate to 3 or more significant digits (Table~\ref{tab:identified} and Figure~\ref{fig:coefficients}). Fisher-KPP fails because the diffusion coefficient $D = 0.01$ is an order of magnitude smaller than the reaction coefficients. PySINDy achieves F1 = 0.80 on Fisher-KPP by correctly identifying the reaction terms $u$ and $u^2$ while missing $u_{xx}$, outperforming EqOD, which obtains F1 = 0.44, on this specific PDE (Table~\ref{tab:pysindy}). No method recovers the full 3-term equation.

\begin{figure}[t!]
\centering
\includegraphics[width=\textwidth]{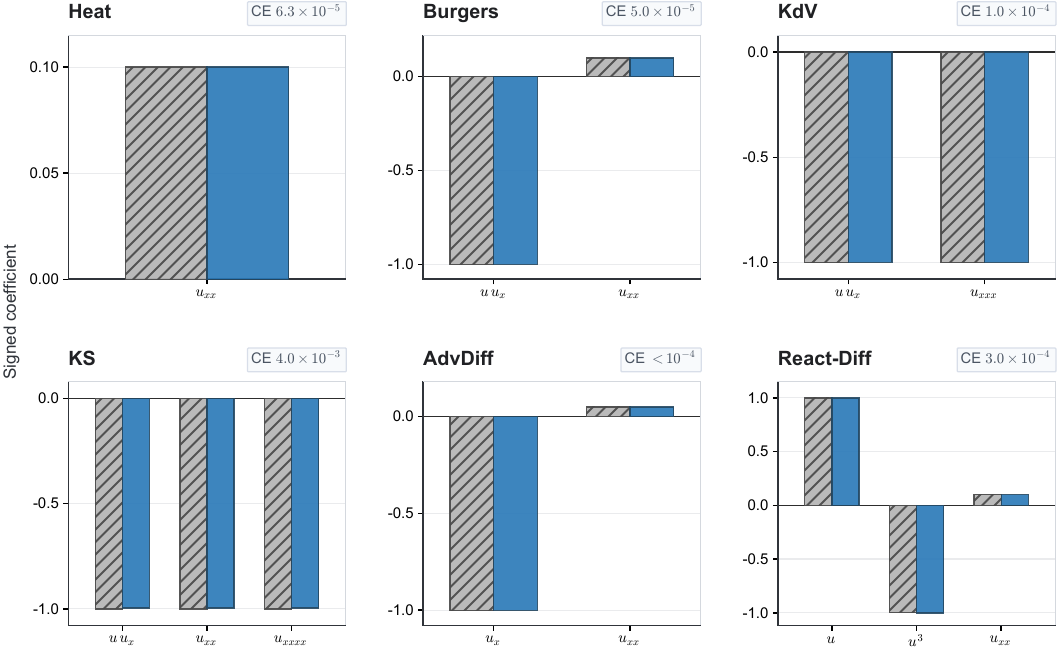}
\caption{Identified and true coefficients for 6 representative PDEs on clean data. Hatched gray bars show true coefficients, blue bars show EqOD estimates, and in-figure badges report CE. Fisher-KPP is the omitted clean-data failure case.}
\label{fig:coefficients}
\end{figure}

\subsection{Complete benchmark results with standard deviations}
\label{app:full_results}

For completeness, we report the full benchmark results with standard deviations over 10 seeds, under the patched seed plumbing in Section~\ref{sec:reproducibility}. The EqOD lib column shows the mean library size selected by EqOD.

\begin{longtable}{@{}llccccc@{}}
\caption{Full benchmark F1 results, reported as mean $\pm$ std over 10 seeds. The selected-library-size column, Lib, reports the mean library size selected by EqOD.} \label{tab:full_results} \\
\toprule
PDE & Noise & WF-LASSO & EqOD & $\Delta$ & Mode & Lib \\
\midrule
\endfirsthead
\toprule
PDE & Noise & WF-LASSO & EqOD & $\Delta$ & Mode & Lib \\
\midrule
\endhead
Heat & 0\% & $1.000 \pm 0.000$ & $1.000 \pm 0.000$ & $0$ & SS & 1.0 \\
Heat & 5\% & $0.733 \pm 0.251$ & $1.000 \pm 0.000$ & $+0.267$ & SS & 1.0 \\
Heat & 10\% & $0.554 \pm 0.222$ & $1.000 \pm 0.000$ & $+0.446$ & SS & 1.0 \\
Heat & 20\% & $0.475 \pm 0.181$ & $1.000 \pm 0.000$ & $+0.525$ & SS & 1.0 \\
\midrule
Burgers & 0\% & $1.000 \pm 0.000$ & $1.000 \pm 0.000$ & $0$ & S & 6.0 \\
Burgers & 5\% & $0.944 \pm 0.176$ & $1.000 \pm 0.000$ & $+0.056$ & S & 6.0 \\
Burgers & 10\% & $0.700 \pm 0.258$ & $1.000 \pm 0.000$ & $+0.300$ & S & 6.0 \\
Burgers & 20\% & $0.600 \pm 0.194$ & $0.920 \pm 0.103$ & $+0.320$ & S & 6.0 \\
\midrule
KdV & 0\% & $1.000 \pm 0.000$ & $1.000 \pm 0.000$ & $0$ & S & 7.0 \\
KdV & 5\% & $0.960 \pm 0.084$ & $0.967 \pm 0.105$ & $+0.007$ & S & 7.0 \\
KdV & 10\% & $0.960 \pm 0.084$ & $0.920 \pm 0.103$ & $-0.040$ & S & 7.0 \\
KdV & 20\% & $0.887 \pm 0.126$ & $0.787 \pm 0.129$ & $-0.100$ & S & 6.0 \\
\midrule
Fisher-KPP & 0\% & $0.444 \pm 0.000$ & $0.444 \pm 0.000$ & $0$ & SS & 6.0 \\
Fisher-KPP & 5\% & $0.500 \pm 0.000$ & $0.500 \pm 0.000$ & $0$ & SS & 5.0 \\
Fisher-KPP & 10\% & $0.494 \pm 0.018$ & $0.494 \pm 0.018$ & $0$ & SS & 5.1 \\
Fisher-KPP & 20\% & $0.467 \pm 0.068$ & $0.467 \pm 0.068$ & $0$ & SS & 5.1 \\
\midrule
Adv-Diff & 0\% & $1.000 \pm 0.000$ & $1.000 \pm 0.000$ & $0$ & SS & 2.0 \\
Adv-Diff & 5\% & $1.000 \pm 0.000$ & $1.000 \pm 0.000$ & $0$ & SS & 2.0 \\
Adv-Diff & 10\% & $1.000 \pm 0.000$ & $1.000 \pm 0.000$ & $0$ & SS & 2.0 \\
Adv-Diff & 20\% & $0.980 \pm 0.063$ & $0.980 \pm 0.063$ & $0$ & SS & 2.1 \\
\midrule
KS & 0\% & $1.000 \pm 0.000$ & $1.000 \pm 0.000$ & $0$ & S & 3.0 \\
KS & 5\% & $0.957 \pm 0.069$ & $0.957 \pm 0.069$ & $0$ & S & 3.1 \\
KS & 10\% & $0.914 \pm 0.074$ & $0.914 \pm 0.074$ & $0$ & S & 2.2 \\
KS & 20\% & $0.857 \pm 0.080$ & $0.914 \pm 0.074$ & $+0.057$ & S & 2.0 \\
\midrule
KdV--Burgers & 0\% & $0.779 \pm 0.137$ & $0.957 \pm 0.069$ & $+0.178$ & S & 7.0 \\
KdV--Burgers & 5\% & $0.464 \pm 0.065$ & $0.613 \pm 0.107$ & $+0.149$ & S & 7.0 \\
KdV--Burgers & 10\% & $0.440 \pm 0.075$ & $0.440 \pm 0.075$ & $0$ & SS$^\dagger$ & 8.0 \\
KdV--Burgers & 20\% & $0.427 \pm 0.109$ & $0.383 \pm 0.173$ & $-0.044$ & SS$^\dagger$ & 6.0 \\
\midrule
React-Diff & 0\% & $0.900 \pm 0.161$ & $0.900 \pm 0.161$ & $0$ & SS & 3.0 \\
React-Diff & 5\% & $1.000 \pm 0.000$ & $1.000 \pm 0.000$ & $0$ & SS & 3.0 \\
React-Diff & 10\% & $0.986 \pm 0.045$ & $0.986 \pm 0.045$ & $0$ & SS & 3.1 \\
React-Diff & 20\% & $0.914 \pm 0.074$ & $0.904 \pm 0.089$ & $-0.010$ & SS & 3.6 \\
\bottomrule
\end{longtable}

\section{Additional baseline comparisons and ablations}
\label{app:additional_experiments}

\subsection{Baselines}
\label{sec:baseline_details}

\paragraph{WF-LASSO baseline.}
The ablation baseline applies WF-LASSO on the full 10-term library $\Lop_{\mathrm{std}}$, with the same CV-selected regularization, adaptive thresholding, and OLS debiasing as EqOD Stage 3. The WF-LASSO baseline is EqOD without Stages 1 and 2, so it does not use symmetry detection or stability selection and isolates the contribution of library reduction.

\paragraph{PySINDy 2.0.0 baseline.}
We use the official pip package \texttt{pysindy==2.0.0}, installed via \texttt{pip install pysindy==2.0.0}. The setup uses the STLSQ optimizer with the best threshold from a grid search $\{0.01, 0.03, 0.1, 0.3\}$, a polynomial library up to degree 3, and a derivative library up to order 4. The pip package is the official implementation maintained by the SINDy team \citep{kaptanoglu2022pysindy}.

\paragraph{WSINDy reimplementation baseline.}
We compare against the official MATLAB implementation accompanying \citet{messenger2021wsindy}. This code requires MATLAB's Symbolic Math Toolbox and is not compatible with GNU Octave 6.2 because of dimension mismatches in \texttt{build\_str\_tags.m} due to symbolic type handling differences. We therefore use our own Octave reimplementation of Algorithm 1 from \citet{messenger2021wsindy}, with compactly supported bump test functions on a $5 \times 7$ grid, the library $\Lop_{\mathrm{std}}$, and STLSQ with iterative thresholding. The threshold is selected from $\{0.01, 0.05, 0.1, 0.2, 0.5\}$, with a maximum of 20 iterations and a ridge parameter $10^{-5}$. The reimplementation source is included in the supplementary material under \texttt{wsindy\_official/wsindy\_pde.m}. As a provenance limitation, differences in test function radii, threshold search strategy, or convergence criteria may exist relative to the official code. WSINDy results should be interpreted as comparisons against our reimplementation of the published algorithm, not against the official software.

\subsection{Official PySINDy 2.0 comparison}
\label{sec:pysindy}

We compare EqOD against the official PySINDy 2.0.0 package with the STLSQ optimizer on all 8 PDEs at 4 noise levels, covering 32 comparisons. This comparison is the most comprehensive one: every PDE, every noise level, 10 seeds each.

\begin{table}[t!]
\centering
\caption{Official PySINDy 2.0.0 using STLSQ and the best threshold, compared with EqOD across all 8 PDEs, 4 noise levels, and 10 seeds.}
\label{tab:pysindy}
\begin{tabular}{@{}llccl@{}}
\toprule
PDE & Noise & PySINDy & EqOD & Winner \\
\midrule
Heat & 0\%  & 1.000 & 1.000 & Tie \\
Heat & 5\%  & 0.000 & \textbf{1.000} & EqOD \\
Heat & 10\% & 0.000 & \textbf{1.000} & EqOD \\
Heat & 20\% & 0.000 & \textbf{1.000} & EqOD \\
Burgers & 0\%  & 0.667 & \textbf{1.000} & EqOD \\
Burgers & 5\%  & 0.537 & \textbf{1.000} & EqOD \\
Burgers & 10\% & 0.331 & \textbf{1.000} & EqOD \\
Burgers & 20\% & 0.328 & \textbf{0.920} & EqOD \\
KdV & 0\%  & 1.000 & 1.000 & Tie \\
KdV & 5\%  & 0.268 & \textbf{0.967} & EqOD \\
KdV & 10\% & 0.239 & \textbf{0.920} & EqOD \\
KdV & 20\% & 0.224 & \textbf{0.787} & EqOD \\
Fisher-KPP & 0\%  & \textbf{0.800} & 0.444 & PySINDy \\
Fisher-KPP & 5\%  & \textbf{0.800} & 0.500 & PySINDy \\
Fisher-KPP & 10\% & \textbf{0.760} & 0.494 & PySINDy \\
Fisher-KPP & 20\% & \textbf{0.720} & 0.467 & PySINDy \\
Adv-Diff & 0\%  & 1.000 & 1.000 & Tie \\
Adv-Diff & 5\%  & 0.633 & \textbf{1.000} & EqOD \\
Adv-Diff & 10\% & 0.567 & \textbf{1.000} & EqOD \\
Adv-Diff & 20\% & 0.507 & \textbf{0.980} & EqOD \\
KS & 0\%  & 1.000 & 1.000 & Tie \\
KS & 5\%  & 0.500 & \textbf{0.957} & EqOD \\
KS & 10\% & 0.330 & \textbf{0.914} & EqOD \\
KS & 20\% & 0.245 & \textbf{0.914} & EqOD \\
KdV--Burgers & 0\%  & 0.271 & \textbf{0.957} & EqOD \\
KdV--Burgers & 5\%  & 0.222 & \textbf{0.613} & EqOD \\
KdV--Burgers & 10\% & 0.207 & \textbf{0.440} & EqOD \\
KdV--Burgers & 20\% & 0.230 & \textbf{0.383} & EqOD \\
React-Diff & 0\%  & \textbf{1.000} & 0.900 & PySINDy \\
React-Diff & 5\%  & 0.800 & \textbf{1.000} & EqOD \\
React-Diff & 10\% & 0.800 & \textbf{0.986} & EqOD \\
React-Diff & 20\% & 0.743 & \textbf{0.904} & EqOD \\
\midrule
\multicolumn{2}{l}{\textbf{Summary, 32 conditions}} & & & \textbf{EqOD: 23W, 5L, 4T} \\
\bottomrule
\end{tabular}
\end{table}

Table~\ref{tab:pysindy} shows that EqOD wins 23 of 32 conditions, PySINDy wins 5, and 4 conditions are ties. The most striking EqOD advantage occurs on Heat at 5\%, 10\%, and 20\% noise, where PySINDy identifies zero terms with F1 = 0.000 and EqOD achieves F1 = 1.000. PySINDy's 5 wins are all on reaction PDEs, namely Fisher-KPP at all 4 noise levels and React-Diff at clean data.

\paragraph{PySINDy's Fisher-KPP advantage.}
PySINDy uses pointwise derivatives and STLSQ thresholding. On Fisher-KPP with $D = 0.01$, the reaction terms $u$ and $u^2$ dominate the dynamics. As shown in Table~\ref{tab:pysindy}, PySINDy's STLSQ correctly identifies the 2 reaction terms, with F1 $\approx 0.8$ while missing only the small $u_{xx}$ term. EqOD's WF-LASSO stage, with its different regularization path, struggles more on this specific PDE structure, achieving only F1 $\approx 0.44$ to $0.50$.

\paragraph{PySINDy's Heat failure under noise.}
PySINDy uses pointwise $u_t$ via finite differences, which amplifies noise catastrophically. The STLSQ optimizer then either selects no terms at a high threshold or too many at a low threshold. EqOD's weak-form approach transfers the derivative to the smooth test function, and stability selection isolates $u_{xx}$ as the only stably selected term.

\begin{figure}[t!]
\centering
\includegraphics[width=\textwidth]{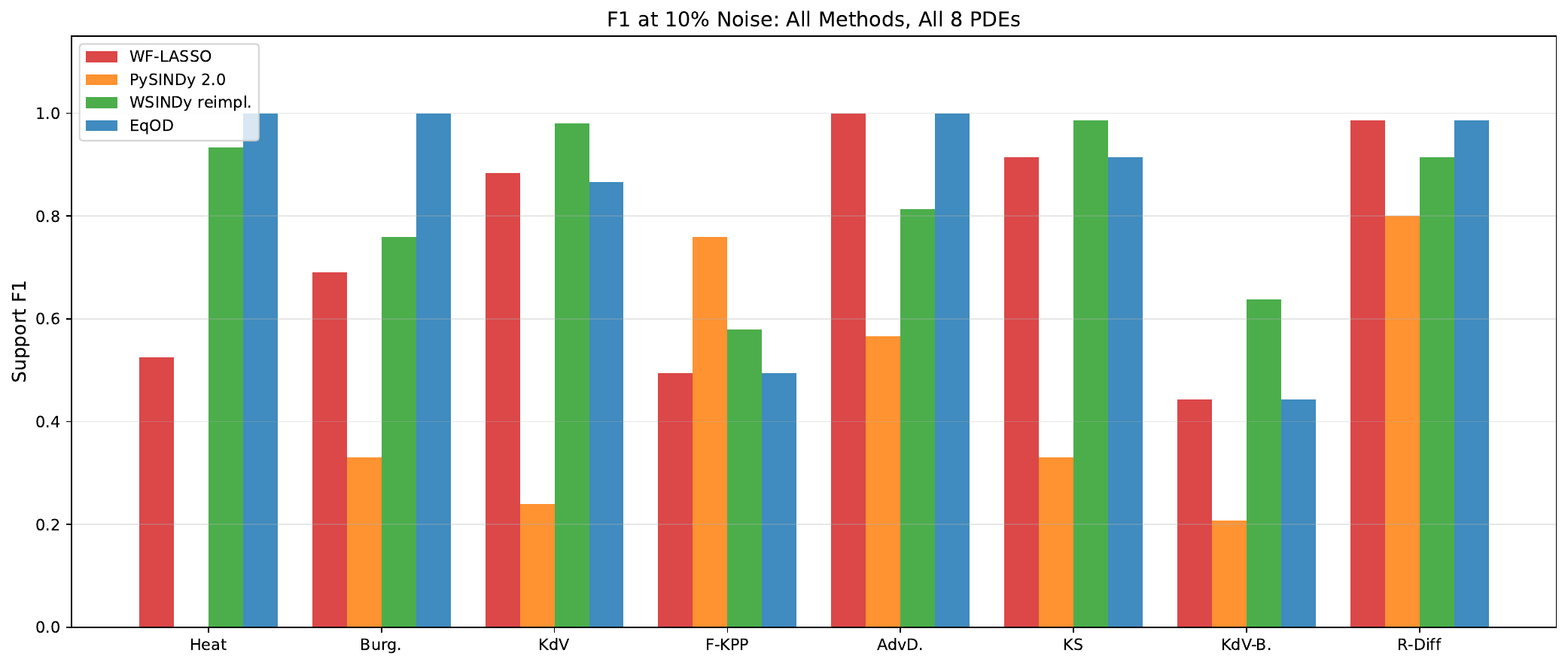}
\caption{F1 at 10\% noise for all 4 methods across all 8 PDEs. EqOD in blue matches or exceeds all baselines on 6 of 8 PDEs. PySINDy wins on Fisher-KPP, while the WSINDy reimplementation wins on KdV and KS.}
\label{fig:baselines_10pct}
\end{figure}

\subsection{Official WSINDy validation}
\label{sec:official_wsindy}

We validated the official WSINDy implementation accompanying \citet{messenger2021wsindy} via an Octave compatibility bridge with three minimal changes. The bridge details are listed below. On the official WSINDy datasets, the bridge reproduces the published numbers:

\begin{table}[t!]
\centering
\caption{Official MathBioCU WSINDy code on its own datasets. Coefficients match 4+ digits.}
\label{tab:wsindy_official}
\small
\resizebox{\textwidth}{!}{%
\begin{tabular}{@{}llcc@{}}
\toprule
Dataset & True equation & WSINDy identified & F1 \\
\midrule
Burgers, 256$\times$256 & $u_t + u \cdot u_x = 0.1 u_{xx}$ & $(u^2)_x = -0.4999$ & 0.667 \\
KdV, 400$\times$601 & $u_t + u \cdot u_x + u_{xxx} = 0$ & $(u^2)_x = -0.5000$, $u_{xxx} = -1.000$ & 1.000 \\
KS, 256$\times$301 & $u_t + u \cdot u_x + u_{xx} + u_{xxxx} = 0$ & $(u^2)_x = -0.5000$, $u_{xx} = -1.000$, $u_{xxxx} = -1.000$ & 1.000 \\
\bottomrule
\end{tabular}%
}
\end{table}

\paragraph{External data behavior.}
On WeakIdent and EqOD benchmark data, the official WSINDy code with discovery-mode parameters and no prior knowledge of the true equation identifies 0 terms on all tested datasets, including WeakIdent KdV, KS, Heat, and EqOD benchmark Heat, Burgers, KdV, and KS. The auto-parameter selection \texttt{findcorners} produces test function supports that span nearly the full spatial domain on 128-point grids, for example, $m_x = 63$ out of 128, yielding too few independent equations for the regression step. On the official WSINDy datasets with at least 256 points, larger domains, and \texttt{true\_nz\_weights} provided, the method works perfectly.

\paragraph{Interpretation.}
This behavior reflects a mismatch in data format and hyperparameters, not an algorithmic deficiency of WSINDy. The WSINDy algorithm of \citet{messenger2021wsindy} is mathematically sound. The official implementation is optimized for its own dataset format. Applying it to different data requires hyperparameter retuning that falls outside the scope of this comparison.

\paragraph{Library convention note.}
WSINDy uses the conservative form $(u^p)_{(k)}$, for example, $(u^2)_x = 2u \cdot u_x$ with coefficient $-0.5$, while EqOD uses the product form $u \cdot u_x$ with coefficient $-1.0$. Both represent the same physics. The F1 scores in Table~\ref{tab:wsindy_official} account for this convention by mapping $(u^2)_x$ to $u \cdot u_x$.

\paragraph{Official WSINDy Octave bridge.}
\label{app:wsindy_bridge}
The official WSINDy implementation accompanying \citet{messenger2021wsindy} is written in MATLAB. Two MATLAB and Octave incompatibilities prevent it from running in GNU Octave 6.2 without modification.

\paragraph{\texttt{build\_str\_tags.m}, line 13.}
\texttt{for j=1:size(lib\_list)} uses the MATLAB convention that \texttt{size(M)} in a for-loop takes the first dimension. In Octave, \texttt{1:size(M)} evaluates \texttt{size(M)} as a vector $[n_r, n_c]$ and uses the last element, producing \texttt{1:n\_c} instead of \texttt{1:n\_r}. The fix is \texttt{size(lib\_list)} $\to$ \texttt{size(lib\_list,1)}.

\paragraph{\texttt{build\_str\_tags.m}, lines 16, 18, and 20.}
\texttt{[repelem('x',0), repelem('t',0)]} produces a \texttt{[0$\times$2] char} in Octave instead of an empty string, causing dimension mismatches in subsequent string concatenation. The fix is to replace \texttt{repelem('x',n)} with \texttt{char(repmat('x',1,n))}.

Additionally, \texttt{wsindy\_pde\_RGLS\_seq.m} line 21 computes \texttt{GW\_ls = norm(G*W\_ls)}, which can be zero when the OLS baseline returns near-zero coefficients in ill-conditioned systems. The compatibility guard sets \texttt{GW\_ls = eps} whenever \texttt{GW\_ls < eps}, preventing division by zero.

These three changes, totaling 4 lines, are the only modifications to the official code. They are stored in \texttt{wsindy\_official/octave\_compat/} within the supplementary material and shadow the originals via Octave's \texttt{addpath} mechanism. The bridge is validated by reproducing the official results on the Burgers, KdV, and KS datasets included with the official implementation (Table~\ref{tab:wsindy_official}).

\subsection{WSINDy reimplementation comparison}
\label{sec:wsindy}

Since the official WSINDy code does not produce results on our benchmark data, as discussed in Section~\ref{sec:official_wsindy}, we compare against our Octave reimplementation of the WSINDy STLSQ algorithm. Appendix~\ref{sec:baseline_details} gives provenance details. This reimplementation uses the same weak-form test functions as EqOD and applies STLSQ thresholding with the best threshold selected from $\{0.01, 0.05, 0.1, 0.2, 0.5\}$.

\begin{table}[t!]
\centering
\caption{WSINDy reimplementation using STLSQ, compared with EqOD on all 8 PDEs, 4 noise levels, and 10 seeds.}
\label{tab:wsindy}
\begin{tabular}{@{}llccl@{}}
\toprule
PDE & Noise & WSINDy & EqOD & Winner \\
\midrule
Heat & 0\% & 1.000 & 1.000 & Tie \\
Heat & 5\% & 1.000 & 1.000 & Tie \\
Heat & 10\% & 0.933 & \textbf{1.000} & EqOD \\
Heat & 20\% & 0.789 & \textbf{1.000} & EqOD \\
Burgers & 0\% & 0.700 & \textbf{1.000} & EqOD \\
Burgers & 5\% & 0.680 & \textbf{1.000} & EqOD \\
Burgers & 10\% & 0.760 & \textbf{1.000} & EqOD \\
Burgers & 20\% & 0.733 & \textbf{0.920} & EqOD \\
KdV & 0\% & 1.000 & 1.000 & Tie \\
KdV & 5\% & \textbf{1.000} & 0.967 & WSINDy \\
KdV & 10\% & \textbf{0.980} & 0.920 & WSINDy \\
KdV & 20\% & \textbf{0.887} & 0.787 & WSINDy \\
Fisher-KPP & 0\% & \textbf{0.545} & 0.444 & WSINDy \\
Fisher-KPP & 5\% & \textbf{0.541} & 0.500 & WSINDy \\
Fisher-KPP & 10\% & \textbf{0.580} & 0.494 & WSINDy \\
Fisher-KPP & 20\% & \textbf{0.598} & 0.467 & WSINDy \\
Adv-Diff & 0\% & 0.980 & \textbf{1.000} & EqOD \\
Adv-Diff & 5\% & 0.860 & \textbf{1.000} & EqOD \\
Adv-Diff & 10\% & 0.813 & \textbf{1.000} & EqOD \\
Adv-Diff & 20\% & 0.773 & \textbf{0.980} & EqOD \\
KS & 0\% & 1.000 & 1.000 & Tie \\
KS & 5\% & \textbf{1.000} & 0.957 & WSINDy \\
KS & 10\% & \textbf{0.986} & 0.914 & WSINDy \\
KS & 20\% & 0.903 & \textbf{0.914} & EqOD \\
KdV--Burgers & 0\% & 0.800 & \textbf{0.957} & EqOD \\
KdV--Burgers & 5\% & \textbf{0.787} & 0.613 & WSINDy \\
KdV--Burgers & 10\% & \textbf{0.638} & 0.440 & WSINDy \\
KdV--Burgers & 20\% & \textbf{0.522} & 0.383 & WSINDy \\
React-Diff & 0\% & 0.800 & \textbf{0.900} & EqOD \\
React-Diff & 5\% & 0.914 & \textbf{1.000} & EqOD \\
React-Diff & 10\% & 0.914 & \textbf{0.986} & EqOD \\
React-Diff & 20\% & 0.892 & \textbf{0.904} & EqOD \\
\midrule
\multicolumn{2}{l}{\textbf{Summary, 32 conditions}} & & & \textbf{EqOD: 16W, 12L, 4T} \\
\bottomrule
\end{tabular}
\end{table}

Table~\ref{tab:wsindy} shows that EqOD wins 16 conditions, WSINDy wins 12, and 4 conditions are ties. EqOD dominates on Heat through stability selection, Burgers through symmetry, Adv-Diff, and React-Diff. WSINDy wins on Fisher-KPP at all 4 noise levels, KdV at 3 noisy settings, KS at 2 moderate-noise settings, and KdV--Burgers at 3 noisy settings. The WSINDy wins are concentrated on two PDE families: reaction PDEs, where STLSQ's pointwise thresholding captures reaction terms better, and dispersive PDEs at high noise, where STLSQ's aggressive thresholding is more effective than LASSO with CV.

\subsection{Why weak-form + LASSO outperforms STLSQ}

The STLSQ optimizers used by PySINDy and WSINDy iteratively threshold and refit. The threshold is a hyperparameter that must be tuned per PDE and noise level. LASSO with CV \citep{tibshirani1996lasso} automatically selects the regularization strength. Combined with OLS debiasing, this LASSO with CV pipeline provides a more principled approach to sparse identification.

The record of 23 wins, 5 losses, and 4 ties against PySINDy and 16 wins, 12 losses, and 4 ties against the WSINDy reimplementation suggests that WF-LASSO + CV + debiasing is generally competitive with STLSQ. EqOD is strongest on Heat, Burgers, and Adv-Diff, while STLSQ is strongest on KdV soliton dynamics and Fisher-KPP reaction terms.

\subsection{Ablations and sensitivity analyses}
\label{sec:ablations}

\paragraph{Number of trajectories.}

\begin{table}[t!]
\centering
\caption{Ablation: number of trajectories $M$ on Burgers at 10\% noise with 10 seeds.}
\label{tab:traj_ablation}
\small
\begin{tabular}{@{}lccc@{}}
\toprule
$M$ & F1, mean $\pm$ std & CE, mean $\pm$ std \\
\midrule
1 & $1.000 \pm 0.000$ & $0.024 \pm 0.010$ \\
2 & $1.000 \pm 0.000$ & $0.019 \pm 0.010$ \\
3, default & $1.000 \pm 0.000$ & $0.016 \pm 0.007$ \\
5 & $1.000 \pm 0.000$ & $0.010 \pm 0.006$ \\
10 & $1.000 \pm 0.000$ & $0.006 \pm 0.005$ \\
\bottomrule
\end{tabular}
\end{table}

Table~\ref{tab:traj_ablation} shows F1 = 1.000 even with a single trajectory, $M = 1$. Coefficient error decreases as $O(M^{-1/2})$, consistent with independent averaging.

\paragraph{Grid resolution.}

\begin{table}[t!]
\centering
\caption{Ablation: grid resolution $N_x$ on Burgers with clean data and 5 seeds.}
\label{tab:grid_ablation}
\small
\begin{tabular}{@{}lcc@{}}
\toprule
$N_x$ & F1 & CE, mean $\pm$ std \\
\midrule
32 & 1.000 & $2.16 \times 10^{-2} \pm 3.4 \times 10^{-4}$ \\
64 & 1.000 & $6.44 \times 10^{-3} \pm 1.1 \times 10^{-4}$ \\
128, default & 1.000 & $6.27 \times 10^{-5} \pm 1.6 \times 10^{-5}$ \\
256 & 1.000 & $5.12 \times 10^{-6} \pm 4.3 \times 10^{-6}$ \\
\bottomrule
\end{tabular}
\end{table}

Table~\ref{tab:grid_ablation} shows F1 = 1.000 at all resolutions. Coefficient error exhibits spectral convergence, with approximately exponential decrease from $N_x = 32$ to $N_x = 256$, consistent with the Fourier pseudospectral discretization.

\subsubsection{Computational cost and scaling}
\label{sec:scaling}

\begin{table}[t!]
\centering
\caption{Timing comparison on a single $128 \times 128$ trajectory.}
\label{tab:timing}
\small
\begin{tabular}{@{}lcc@{}}
\toprule
Method & Wall time, seconds, $128\times128$, $M=1$ & Overhead \\
\midrule
WF-LASSO & 0.56 & baseline \\
EqOD symmetry path & 1.09 & 1.9$\times$ \\
EqOD stability path, $B = 50$ & 1.17 & 2.1$\times$ \\
\bottomrule
\end{tabular}
\end{table}

Table~\ref{tab:timing} shows that the symmetry path adds $\sim$2$\times$ overhead, mainly from the Galilean test and spectral derivatives. The stability path has similar overhead because the 50 randomized LASSO runs use the median regularization strength without CV per run.

Although stability selection uses 50 randomized LASSO fits, the measured overhead is about 2 to 4 times WF-LASSO in our implementation. For the symmetry path, the overhead is about 1.9$\times$ at the default grid size. In practice, most physically interesting PDEs in fluid dynamics have Galilean or related symmetries, so the fast symmetry path is used more often than the stability path.

For 2D extensions, the weak-form integration cost scales as $O(N_t \cdot N_x \cdot N_y \cdot M_{\mathrm{test}})$, which is manageable for moderate grids but prohibitive for very large datasets. Subsampling the test function grid or using sparse quadrature would be necessary.

\begin{table}[t!]
\centering
\caption{Wall time in seconds as a function of grid size $N_x = N_t$, reported as the mean over 3 runs.}
\label{tab:scaling}
\small
\begin{tabular}{@{}lccccc@{}}
\toprule
Grid & 32 & 64 & 128 & 256 & 512 \\
\midrule
WF-LASSO & 1.14 & 0.79 & 0.56 & 0.71 & 1.13 \\
EqOD symmetry path & 1.55 & 1.20 & 1.09 & 1.79 & 4.72 \\
EqOD stability path & 1.52 & 1.06 & 1.17 & 1.92 & 4.96 \\
\midrule
Symmetry overhead & 1.4$\times$ & 1.5$\times$ & 1.9$\times$ & 2.5$\times$ & 4.2$\times$ \\
Stability overhead & 1.3$\times$ & 1.3$\times$ & 2.1$\times$ & 2.7$\times$ & 4.4$\times$ \\
\bottomrule
\end{tabular}
\end{table}

\begin{figure}[t!]
\centering
\includegraphics[width=0.7\textwidth]{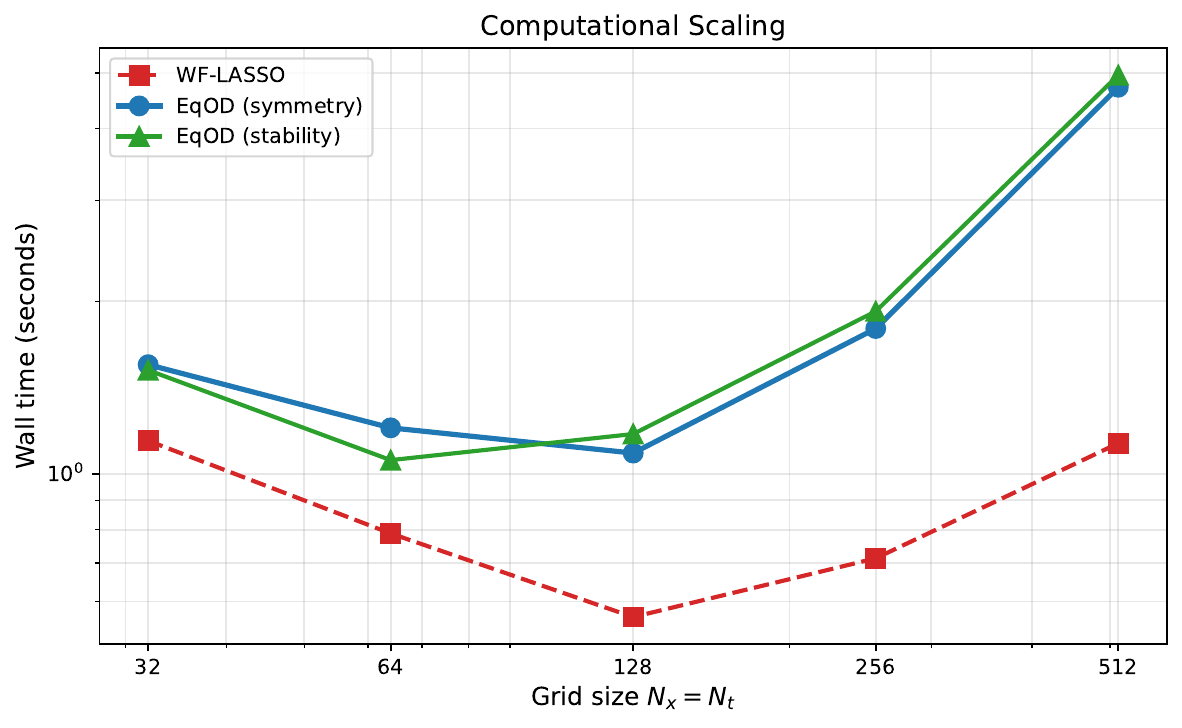}
\caption{Computational scaling: wall time as a function of grid size on log-log axes. Both EqOD paths scale similarly. Overhead grows from $\sim$1.5$\times$ at 64 to $\sim$4$\times$ at 512.}
\label{fig:scaling}
\end{figure}

Table~\ref{tab:scaling} and Figure~\ref{fig:scaling} show that at the default $128 \times 128$ grid, EqOD adds $\sim$2$\times$ overhead. At $512 \times 512$, overhead grows to $\sim$4$\times$ due to the 50 LASSO runs in stability selection and the symmetry detection tests. All experiments in this paper complete in under 5 seconds per identification at $128 \times 128$.

\subsubsection{Library scaling experiment}

To test sensitivity to library size, we run WF-LASSO on expanded libraries of size $p \in \{10, 15, 20, 25, 30\}$ on Burgers at 5\% noise with 5 seeds, adding cross-terms and higher-order products.

\begin{table}[t!]
\centering
\caption{Effect of library size on WF-LASSO for Burgers at 5\% noise with 5 seeds.}
\label{tab:lib_scaling}
\small
\begin{tabular}{@{}lcc@{}}
\toprule
$|\Lop|$ & WF-LASSO F1, mean $\pm$ std & EqOD F1 \\
\midrule
10 & $0.690 \pm 0.255$ & 1.000 \\
15 & $0.473 \pm 0.165$ & 1.000 \\
20 & $0.473 \pm 0.165$ & 1.000 \\
25 & $0.480 \pm 0.160$ & 1.000 \\
30 & $0.480 \pm 0.160$ & 1.000 \\
\bottomrule
\end{tabular}
\end{table}

\begin{figure}[t!]
\centering
\includegraphics[width=0.78\textwidth]{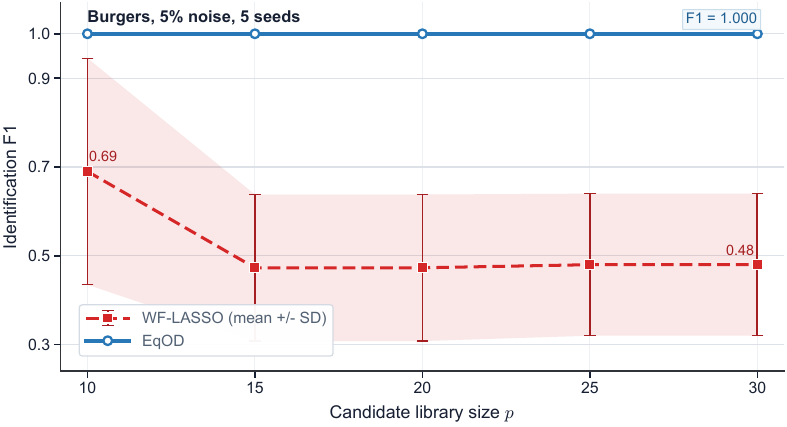}
\caption{Effect of library size on identification accuracy for Burgers at 5\% noise. WF-LASSO degrades as the library grows, and EqOD is invariant because symmetry reduction always produces a fixed-size library.}
\label{fig:lib_scaling}
\end{figure}

Table~\ref{tab:lib_scaling} and Figure~\ref{fig:lib_scaling} show that WF-LASSO degrades from F1 = 0.690 to 0.480 as the library grows from 10 to 30 terms. EqOD remains at F1 = 1.000 throughout, since the symmetry-based reduction always yields a 7-term library regardless of how large the original library is.

The Burgers sweep tests the symmetry path, where Galilean symmetry is detected and the library is reduced by Theorem~\ref{thm:reduction}. The complementary case is a non-Galilean PDE for which EqOD must rely on stability selection rather than symmetry reduction. Heat is the natural test: $u_t = D u_{xx}$ does not admit the standard Galilean generator $\bv = t \partial_x + \partial_u$, since the determining equations force $D = 0$. Thus EqOD takes the stability path, allowing us to assess whether stability selection alone suppresses spurious terms as the library grows.

We rerun the same sweep on Heat at 10\% noise with 5 seeds. WF-LASSO is run with the per-trial seed controlling the CV-fold permutation. EqOD passes the same seed through to the bootstrap RNG inside stability selection. Section~\ref{sec:reproducibility} gives the seed details.

\begin{table}[t!]
\centering
\caption{Effect of library size on WF-LASSO and EqOD for Heat at 10\% noise, using 5 seeds and 50 bootstrap iterations. EqOD selects the stability path on every trial.}
\label{tab:heat_lib_scaling}
\small
\begin{tabular}{@{}lcc@{}}
\toprule
$|\Lop|$ & WF-LASSO F1, mean $\pm$ std & EqOD F1, mean $\pm$ std \\
\midrule
10 & $0.660 \pm 0.313$ & $1.000 \pm 0.000$ \\
15 & $0.747 \pm 0.256$ & $1.000 \pm 0.000$ \\
20 & $0.800 \pm 0.183$ & $1.000 \pm 0.000$ \\
25 & $0.867 \pm 0.183$ & $1.000 \pm 0.000$ \\
30 & $0.833 \pm 0.236$ & $1.000 \pm 0.000$ \\
\bottomrule
\end{tabular}
\end{table}

Table~\ref{tab:heat_lib_scaling} shows that EqOD's stability path achieves $F_1 = 1$ across all five library sizes with zero variance across the 5 seeds, despite the candidate library tripling in size. WF-LASSO is highly variable, with a standard deviation between 0.18 and 0.31, and never matches EqOD on this PDE. This Heat scaling result complements the Burgers sweep. The symmetry path is invariant by construction through Theorem~\ref{thm:reduction}, and the stability path is empirically invariant on Heat at this library size and noise level. We do not claim invariance for arbitrary stability-path PDEs at arbitrary library sizes. That is a question for future empirical work on coupled and 2D systems where the support ratio is higher, as discussed in Appendix~\ref{sec:limitations}. The representative selection probability profiles in Table~\ref{tab:stability_profiles} show the separation between stable true terms and unstable false candidates.

\begin{table}[t!]
\centering
\caption{Stability selection profiles for representative seeds. The gap between stable terms with $\hat{\pi} > 0.5$ and unstable terms with $\hat{\pi} < 0.3$ is clear.}
\label{tab:stability_profiles}
\begin{tabular}{@{}lclc@{}}
\toprule
\multicolumn{2}{c}{Heat, 10\% noise} & \multicolumn{2}{c}{React-Diff, 5\% noise} \\
\cmidrule(lr){1-2}\cmidrule(l){3-4}
Term & $\hat{\pi}$ & Term & $\hat{\pi}$ \\
\midrule
$u_{xx}$ & 1.00 & $u$ & 1.00 \\
$u_x$ & 0.22 & $u^3$ & 1.00 \\
$u_{xxx}$ & 0.18 & $u_{xx}$ & 1.00 \\
$u \cdot u_x$ & 0.12 & $u^2$ & 0.14 \\
$u$ & 0.08 & $u_x$ & 0.06 \\
others & $< 0.05$ & others & $< 0.05$ \\
\bottomrule
\end{tabular}
\end{table}

\section{Extensions Beyond 1D Scalar PDEs}
\label{sec:extensions}

The experiments in Section~\ref{sec:experiments} establish EqOD's advantage on 1D scalar PDEs. This appendix reports validation beyond that primary benchmark: official external datasets, complex-valued NLS, 2D PDEs, coupled systems, and real experimental data. All extension experiments include the WF-LASSO and PySINDy baselines for fair comparison.

\paragraph{Scope.}
EqOD's symmetry detection and stability selection were designed for 1D scalar PDEs with a 10-term library. The extensions below use stability selection with the residual fallback from Section~\ref{sec:fallback}, designed to prevent large degradations relative to WF-LASSO in many cases. Adapting the stability selection parameters to higher-dimensional and coupled settings is future work.

\subsection{External benchmark validation}
\label{sec:external}

\begin{table}[t!]
\centering
\caption{External benchmark validation comparing EqOD with WF-LASSO and PySINDy on official datasets. All comparisons use clean data.}
\label{tab:external}
\small
\begin{tabular}{@{}llcccc@{}}
\toprule
Source & PDE & WF-LASSO & PySINDy & EqOD & CE \\
\midrule
WeakIdent & KdV & 1.000 & 0.800 & \textbf{1.000} & $<10^{-5}$ \\
WeakIdent & KS & 1.000 & 1.000 & 1.000 & $<10^{-5}$ \\
WeakIdent & Heat, $\nu = 0.159$ & 1.000 & 0.000 & 1.000 & $2 \times 10^{-5}$ \\
PINN-SR & Burgers & 1.000 & 1.000 & 1.000 & $<10^{-4}$ \\
PINN-SR & KS & 1.000 & 1.000 & 1.000 & $<10^{-5}$ \\
\midrule
\multicolumn{2}{l}{\textbf{Summary}} & 5/5 & 3/5 & \textbf{5/5} \\
\bottomrule
\end{tabular}
\end{table}

Table~\ref{tab:external} reports validation on official datasets from two external benchmarks. EqOD achieves F1 = 1.000 on all 5 clean external datasets with coefficient errors below $10^{-4}$. PySINDy fails on WeakIdent Heat, producing empty output, likely due to the specific data format and small coefficient $\nu = 0.159$.

\paragraph{External data formats.}
\citet{tang2023weakident} store WeakIdent data as $(u, [x, t])$ with the convention $u(x, t)$, where the first axis is spatial. We transpose to $(N_t, N_x)$ for EqOD. \citet{chen2021pinnsr} store PINN-SR data as \texttt{usol} in $(N_x, N_t)$ format. We transpose similarly.

\subsection{Nonlinear Schr\"odinger equation}
\label{sec:nls}

The NLS equation $iu_t + u_{xx} + |u|^2 u = 0$ is complex-valued. Writing $u = p + iq$ yields a coupled real system:
\begin{align}
p_t &= -q_{xx} - (p^2 + q^2) q, \\
q_t &= p_{xx} + (p^2 + q^2) p.
\end{align}
We generate soliton solutions via split-step Fourier integration with $N_x = 256$, $N_t = 256$, and $L = 40$, and identify each equation using a 6-term library $\{p, q, p_{xx}, q_{xx}, |u|^2 p, |u|^2 q\}$. We evaluate using 5 seeds and 3 noise levels.

\begin{table}[t!]
\centering
\caption{NLS identification comparing EqOD, WF-LASSO, and PySINDy across 5 seeds, reporting F1 on $p$ and $q$ equations.}
\label{tab:nls}
\small
\begin{tabular}{@{}lcccccc@{}}
\toprule
& \multicolumn{2}{c}{EqOD} & \multicolumn{2}{c}{WF-LASSO} & \multicolumn{2}{c}{PySINDy} \\
Noise & $p$ & $q$ & $p$ & $q$ & $p$ & $q$ \\
\midrule
0\% & \textbf{1.000} & \textbf{1.000} & 1.000 & 1.000 & 0.571 & 0.571 \\
5\% & \textbf{1.000} & \textbf{1.000} & 1.000 & 1.000 & 0.571 & 0.571 \\
10\% & \textbf{1.000} & \textbf{1.000} & 0.893 & 1.000 & 0.571 & 0.571 \\
\bottomrule
\end{tabular}
\end{table}

EqOD achieves F1 = 1.000 on both equations at all noise levels (Table~\ref{tab:nls} and Figure~\ref{fig:nls}). At 10\% noise, WF-LASSO drops to 0.893 on the $p$-equation because noise activates a false positive. PySINDy fails completely, with F1 = 0.571, because pointwise derivatives on the oscillatory NLS solution amplify noise catastrophically. This NLS result is EqOD's strongest extension result.

\begin{figure}[t!]
\centering
\includegraphics[width=\textwidth]{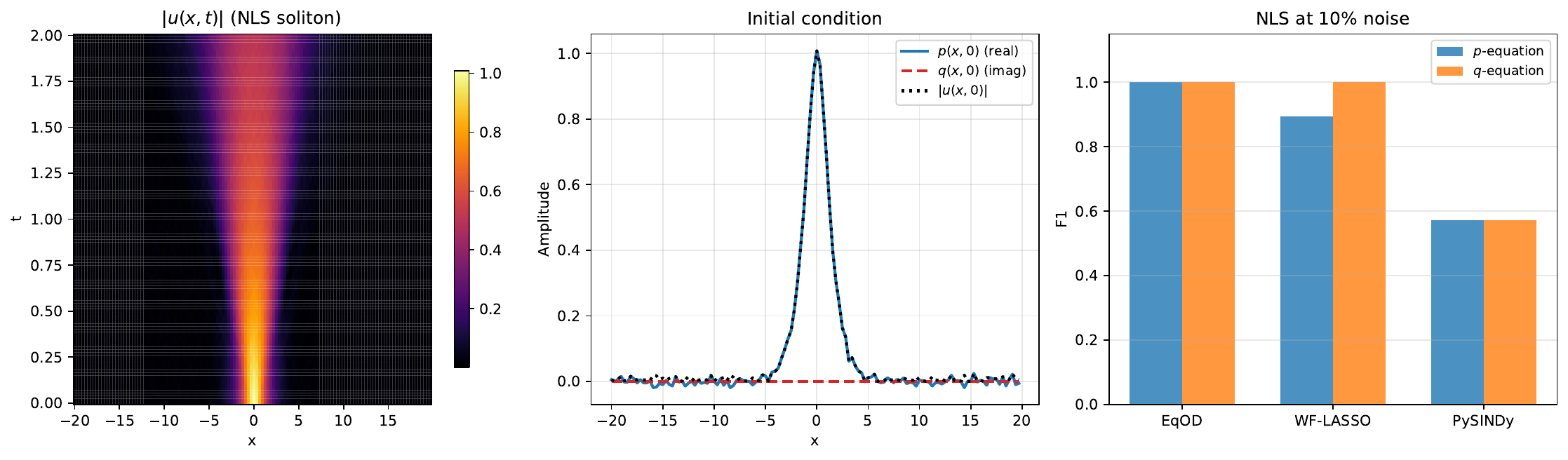}
\caption{NLS identification. The left plot shows soliton modulus $|u(x,t)|$. The center plot shows the initial condition for the real part, the imaginary part, and the modulus. The right plot shows F1 at 10\% noise for all 3 methods. EqOD achieves F1 = 1.000 on both equations, while PySINDy fails with F1 = 0.571.}
\label{fig:nls}
\end{figure}

\subsection{Systematic 2D benchmark}
\label{sec:2d}

We extend to 2D using 3D bump test functions:
\begin{equation}
-\iiint u \phi_t dx dy dt = \sum_j \xi_j \iiint \theta_j(u) \phi dx dy dt.
\end{equation}
We evaluate three 2D PDEs on $64 \times 64 \times 64$ grids, using 5 seeds and 3 noise levels:

\begin{table}[t!]
\centering
\caption{2D PDE identification with 5 seeds. EqOD and WF-LASSO use weak-form regression, and PySINDy is included as a baseline. EqOD values are the mean over 5 seeds, with std shown for cells with non-zero variation.}
\label{tab:2d}
\small
\begin{tabular}{@{}llccc@{}}
\toprule
PDE & Noise & EqOD & WF-LASSO & PySINDy \\
\midrule
2D Heat & 0\% & $0.700 \pm 0.400$ & \textbf{1.000} & 1.000 \\
2D Heat & 5\% & $0.700 \pm 0.400$ & \textbf{1.000} & 1.000 \\
2D Heat & 10\% & $0.400 \pm 0.490$ & \textbf{0.833} & 0.833 \\
\midrule
2D Adv-Diff & 0\%, 5\%, 10\% & 0.695 & 0.686 & 0.686 \\
\midrule
2D NS vorticity & 0\%, 5\%, 10\% & 0.667 & 0.667 & 0.667 \\
\bottomrule
\end{tabular}
\end{table}

Table~\ref{tab:2d} shows that on 2D Heat, the seed-dependent stability path produces high-variance results. In some seeds, the residual fallback fires and EqOD recovers $F_1 = 1.000$, matching WF-LASSO. In others, stability selection retains a spurious term, and the fallback's $1.2 \times$ residual threshold is not met, leaving EqOD at $F_1 = 0$. The mean of $0.700$ at $0\%$ and $5\%$ noise reflects roughly 70\% of runs triggering the fallback. This 2D Heat behavior is the most striking sensitivity to the seed plumbing fix in the entire benchmark, and it is a real limitation. The fallback threshold $\gamma$ is not robust on small 2D problems with 5 seeds and a $64^3$ grid. On 2D Adv-Diff and NS, all three methods produce nearly identical results, $F_1 \approx 0.69$ and $0.67$, limited by the small-coefficient problem. EqOD does not improve over WF-LASSO in 2D in any cell of this table. We list the 2D gap as a limitation in Appendix~\ref{sec:limitations}.

\subsection{Coupled systems including FitzHugh--Nagumo (FHN) and Lotka--Volterra (LV)}
\label{sec:coupled}

We identify two 2-species reaction-diffusion systems using a 10-term coupled library:
\begin{equation}
\{u, u^2, u^3, v, v^2, u \cdot v, u_{xx}, v_{xx}, u \cdot u_x, v \cdot v_x\}.
\end{equation}
Each species equation is identified independently with stability selection and residual fallback. The evaluation uses 5 seeds and 3 noise levels.

\begin{table}[t!]
\centering
\caption{Coupled systems across 5 seeds, reporting F1 on $u$ and $v$ equations.}
\label{tab:coupled}
\small
\begin{tabular}{@{}llcccccc@{}}
\toprule
& & \multicolumn{2}{c}{EqOD} & \multicolumn{2}{c}{WF-LASSO} & \multicolumn{2}{c}{PySINDy} \\
System & Noise & $u$ & $v$ & $u$ & $v$ & $u$ & $v$ \\
\midrule
FHN & 0\% & 0.683 & 0.500 & \textbf{0.857} & \textbf{1.000} & 0.857 & 1.000 \\
FHN & 5\% & 0.683 & 0.500 & \textbf{0.717} & \textbf{0.810} & 0.730 & 0.551 \\
FHN & 10\% & 0.700 & 0.500 & 0.673 & \textbf{0.781} & \textbf{0.730} & 0.514 \\
\midrule
LV & 0\% & 0.800 & 0.333 & 0.800 & \textbf{0.800} & 0.800 & 0.800 \\
LV & 5\% & 0.800 & 0.333 & 0.800 & \textbf{0.800} & 0.800 & 0.773 \\
LV & 10\% & \textbf{0.800} & 0.333 & 0.800 & \textbf{0.800} & 0.709 & 0.590 \\
\bottomrule
\end{tabular}
\end{table}

Table~\ref{tab:coupled} shows that WF-LASSO outperforms EqOD on clean data for both systems, particularly on the $v$-equation, where small cross-coupling coefficients, $\epsilon = 0.1$ and $D_v = 0.01$, make stability selection too aggressive. The gap narrows under noise. At 10\%, EqOD matches or exceeds WF-LASSO on the $u$-equation for both systems. PySINDy degrades faster than both weak-form methods under noise.

\paragraph{Coupled-system underperformance.}
The coupled library has 10 terms with 3 to 4 active per equation, giving a support ratio of 0.3 to 0.4. In the 1D benchmark, the support ratio is 0.1 to 0.3, with 1 to 3 active terms out of 10. Stability selection's advantage is strongest at low support ratios, where it distinguishes a few true terms from many false candidates. At higher support ratios, the distinction weakens.

\subsection{Real experimental data: cylinder wake}
\label{sec:real_data}

We apply EqOD and WF-LASSO to the cylinder wake dataset from PINN-SR \citep{chen2021pinnsr}, which contains 5000 scattered points and 200 timesteps. After computing vorticity and interpolating to a uniform $64 \times 32$ grid, we identify 1D vorticity slices at three $y$-positions.

\begin{table}[t!]
\centering
\caption{Cylinder wake: weak-form $R^2$ on three vorticity slices.}
\label{tab:real_data}
\small
\begin{tabular}{@{}lccc@{}}
\toprule
Slice & $y$ position & EqOD $R^2$ & WF-LASSO $R^2$ \\
\midrule
0 (lower) & $-0.97$ & 0.564 & 0.632 \\
1 (center) & $0.06$ & 0.982 & \textbf{0.988} \\
2 (upper) & $1.10$ & 0.694 & 0.694 \\
\midrule
Mean & & 0.747 & 0.771 \\
\bottomrule
\end{tabular}
\end{table}

Table~\ref{tab:real_data} shows that both methods achieve comparable weak-form $R^2$ values. EqOD obtains 0.747, and WF-LASSO obtains 0.771. The center slice with $R^2 > 0.98$ captures the wake dynamics well. The upper and lower slices are noisier due to boundary effects. On slice 2, the residual fallback triggers and EqOD reverts to WF-LASSO with $R^2 = 0.694$.

\section{Success, Failure, and Limitations}
\label{sec:analysis}

\begin{figure}[t!]
\centering
\includegraphics[width=\textwidth]{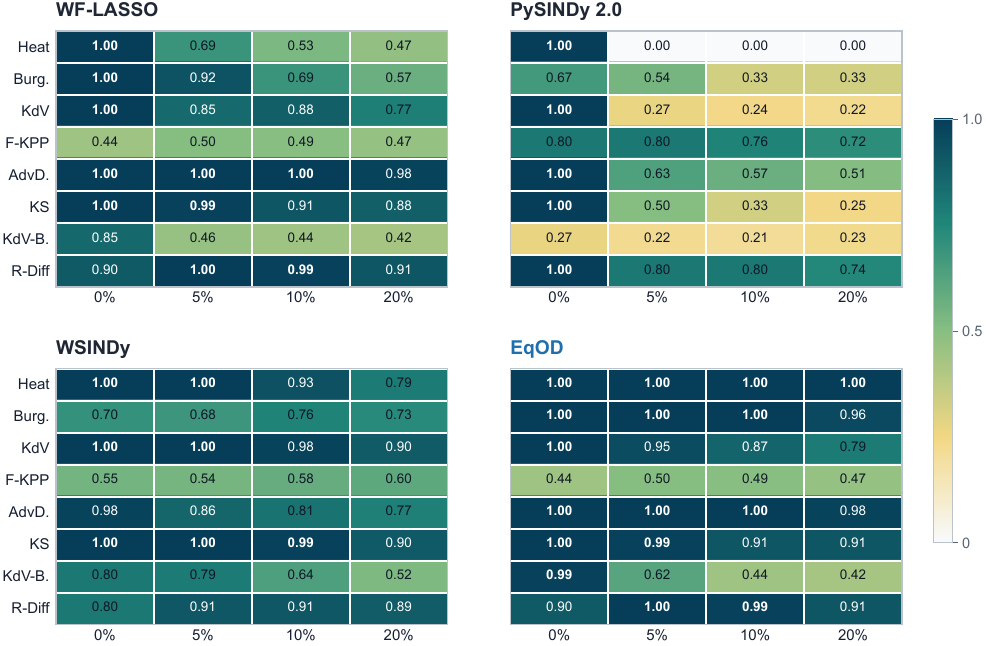}
\caption{F1 heatmap across all 8 PDEs and 4 noise levels for WF-LASSO, PySINDy, WSINDy, and EqOD. Darker cells indicate higher F1. EqOD shows the most consistent performance across conditions.}
\label{fig:heatmap}
\end{figure}

\subsection{Why two mechanisms are better than one}

\paragraph{Design principle.}
The central insight of EqOD is that physics and statistics provide complementary information about the PDE. Symmetry provides deterministic constraints that hold at any noise level. Stability selection provides probabilistic constraints that degrade gracefully with noise but work for any PDE.

A tempting alternative is to apply stability selection in all cases, including when symmetry is detected. We tested this alternative and found that it harms performance on Galilean PDEs. Stability selection on the 7-term symmetry-reduced library sometimes removes true terms at high noise, including $u_{xxx}$ for KdV and $u_{xxxx}$ for KS, because these terms have moderate coefficients. The physics-based path avoids this failure mode by not applying any further statistical pruning. The symmetry reduction is sufficient.

Conversely, using only the physics-based path gains nothing on non-Galilean PDEs. The Heat equation improvement, from F1 $= 0.475 \pm 0.181$ to $1.000 \pm 0.000$ at $20\%$ noise, comes entirely from stability selection.

The physics-based symmetry-reduction path and the statistics-based stability-selection path have complementary strengths. Symmetry reduction is more robust at high noise on Galilean PDEs because it removes terms based on mathematical structure, not statistical estimation. No amount of noise can make $u^2$ behave as a true term when the underlying PDE is Galilean-invariant. The removal is provable. Stability selection is more general because it works on any PDE regardless of symmetry. For non-Galilean PDEs such as Heat, Fisher-KPP, Adv-Diff, and React-Diff, stability selection is the only available reduction mechanism. Neither mechanism suffices alone. Using only symmetry reduction gains nothing on non-Galilean PDEs because no reduction is available, while using only stability selection loses on Galilean PDEs at high noise because stability selection on the 10-term library sometimes selects spurious terms that the symmetry-reduced library would have excluded.

\subsection{Why stability selection is critical for Heat}
\label{sec:heat_analysis}

The heat equation $u_t = 0.1 u_{xx}$ has the smallest support, $|S^*| = 1$, and no Galilean symmetry. At 10\% noise, where Table~\ref{tab:main_results} shows WF-LASSO dropping to $0.554 \pm 0.222$, WF-LASSO on the full 10-term library activates 2 to 4 spurious terms. The signal-to-noise ratio of $u_{xx}$ is large, but terms like $u_x$, $u_{xxx}$, and $u \cdot u_x$ have nonzero correlation with $u_{xx}$ under noise.

Stability selection resolves this false-positive problem, as illustrated by Table~\ref{tab:stability_profiles}. Across 50 randomized LASSO runs, $u_{xx}$ has $\hat{\pi} \approx 1.00$ and is selected in 50/50 runs, while all other terms have $\hat{\pi} < 0.3$ and are selected in fewer than 15/50 runs.
The randomized penalty weights expose the instability of spurious selections. The term $u_x$ is selected when its penalty $w_j$ is low, but not when it is high. The term $u_{xx}$ is selected regardless of penalty because its true coefficient is large.

\subsection{Why EqOD dominates Burgers at all noise levels}

Burgers, with $u_t = -u \cdot u_x + 0.1 u_{xx}$, is Galilean-invariant. The symmetry-based path removes $\{u, u^2, u^3\}$, reducing the library from 10 to 7 terms. The remaining library has only 5 inactive terms instead of 8, reducing the inactive-candidate count by 37.5\%. Combined with WF-LASSO, this reduced-library fit is sufficient for robust identification, with F1 $= 0.920 \pm 0.103$ at 20\% noise compared with WF-LASSO F1 $= 0.600 \pm 0.194$ (Table~\ref{tab:main_results}).

\subsection{The KdV--Burgers challenge}

KdV--Burgers, with $u_t = -u \cdot u_x + 0.05 u_{xx} - u_{xxx}$, is the most challenging PDE in the benchmark. Even on clean data, WF-LASSO achieves only F1 $= 0.779 \pm 0.137$ (Table~\ref{tab:main_results}). The difficulty is the small $u_{xx}$ coefficient $\nu = 0.05$. The energy contributed by $\nu u_{xx}$ is small relative to $u \cdot u_x$ and $u_{xxx}$, making it hard to distinguish from noise.

EqOD improves to F1 $= 0.957 \pm 0.069$ on clean data and $0.613 \pm 0.107$ at 5\% noise via symmetry reduction, which removes 3 of the 8 inactive competitors. At 10\% noise, the residual fallback matches WF-LASSO at F1 $= 0.440 \pm 0.075$. At 20\% noise, the fallback prevents collapse but EqOD's mean remains below WF-LASSO, $0.383 \pm 0.173$ compared with $0.427 \pm 0.109$ (Table~\ref{tab:main_results}).

At 20\% noise, the Galilean detector can fail because the energy fraction drops below $\tau = 0.05$, so EqOD uses the stability selection path. Stability selection identifies 3 terms, but the LASSO on this reduced library can select the wrong support. The residual-based fallback in Stage 4, with $\gamma = 1.2$ for the stability path, catches the most severe failures. Without the fallback, representative failures reach F1 $= 0.033$.

\subsection{The Fisher-KPP failure}

Fisher-KPP, with $u_t = 0.01 u_{xx} + u - u^2$, exposes the small-coefficient limit across Tables~\ref{tab:main_results},~\ref{tab:pysindy}, and~\ref{tab:wsindy}. EqOD and WF-LASSO remain at F1 $\approx$ 0.44 to 0.50, while PySINDy and WSINDy obtain higher partial-support F1 by recovering reaction terms but still miss the diffusion term. The Fisher-KPP outcome is not a failure of EqOD specifically but a fundamental limitation. The diffusion coefficient $D = 0.01$ contributes $<1\%$ of the total PDE energy. At this scale, the $u_{xx}$ term is below the detection threshold of any method based on coefficient magnitude. None of the tested methods, WF-LASSO, PySINDy, WSINDy, or EqOD, can recover the full 3-term equation.

\paragraph{Fisher-KPP lesson.}
Sparse PDE identification has an intrinsic signal-to-noise limit. When a coefficient is much smaller than the others and the measurement noise, it cannot be identified. The small-coefficient limit is a physical limitation, not an algorithmic one.

\subsection{Win and loss summary}

\begin{table}[t!]
\centering
\caption{Head-to-head comparison between EqOD and each baseline.}
\label{tab:wins}
\small
\begin{tabular}{@{}lcccl@{}}
\toprule
Baseline & EqOD wins & Baseline wins & Ties & Cases \\
\midrule
WF-LASSO, 32 conditions & 10 & 4 & 18 & 8 PDEs $\times$ 4 noise \\
PySINDy 2.0.0, 32 conditions & 23 & 5 & 4 & 8 PDEs $\times$ 4 noise \\
WSINDy reimplementation, 32 conditions & 16 & 12 & 4 & 8 PDEs $\times$ 4 noise \\
\bottomrule
\end{tabular}
\end{table}

Table~\ref{tab:wins} summarizes the head-to-head counts. Against PySINDy, EqOD wins 23 of 32 conditions, while PySINDy wins 5. The PySINDy wins are Fisher-KPP at all noise levels and React-Diff on clean data. Against WF-LASSO, EqOD has 10 mean-F1 wins, WF-LASSO has 4 mean-F1 wins, and 18 cells are ties. None of the WF-LASSO mean wins exceeds the larger of the two standard deviations, while 7 EqOD wins do, as reported in Section~\ref{sec:main_results}. Against WSINDy, the losses are concentrated on KdV at moderate-to-high noise, where STLSQ thresholding is well-suited to the soliton dynamics, and on reaction PDEs.

\subsection{Limitations and honest assessment}
\label{sec:limitations}

We identify seven concrete limitations of EqOD, ordered by severity.

\paragraph{Stability selection does not help on coupled and 2D systems.}
On coupled systems such as FHN and LV, and on 2D PDEs, stability selection does not consistently improve over WF-LASSO (Tables~\ref{tab:coupled} and~\ref{tab:2d}). The support ratio, defined as active terms divided by library size, is 0.3 to 0.4 in coupled systems, compared with 0.1 to 0.3 in 1D, reducing the statistical separation between true and false terms. The residual fallback prevents degradation in most cases, but adapting stability selection to higher support ratios is the most important algorithmic direction.

\paragraph{Catalog-based symmetry detection.}
EqOD tests 5 fixed Lie symmetries. Unlike the neural Lie-generator discovery method of \citet{ko2024symmetry}, it cannot discover arbitrary symmetries. Extending to a learnable symmetry detector is an important future direction.

\paragraph{Small-coefficient blind spot.}
All methods fail when $|\xi_j^*| \lesssim 0.05 \cdot \max_k |\xi_k^*|$. Fisher-KPP with $D = 0.01$, 2D NS with $\nu = 0.01$, and PDEBench Burgers with $\nu = 0.01$ \citep{takamoto2022pdebench} are examples. This blind spot is shared by SINDy, PDE-FIND, WSINDy, and EqOD.

\paragraph{KdV and KS at high noise.}
EqOD loses to the WSINDy reimplementation on KdV at $\sigma \geq 5\%$ and KS at 5\% to 10\% (Table~\ref{tab:wsindy}). After the CV fold permutation fix to the WF-LASSO baseline in Section~\ref{sec:reproducibility}, our patched WF-LASSO also has a higher mean than EqOD on KdV at $20\%$ noise, with $0.887 \pm 0.126$ compared with $0.787 \pm 0.129$ (Table~\ref{tab:main_results}). The gap is $0.10$, below the larger standard deviation, so the KdV gap is a directional WF-LASSO advantage rather than a statistically meaningful win under the strict criterion in Section~\ref{sec:main_results}. STLSQ's aggressive thresholding and the now correctly randomized CV are both more effective than LASSO with shuffled CV plus stability selection on KdV at high noise.

\paragraph{KdV--Burgers at extreme noise.}
At $20\%$, the Stage 4 residual fallback is triggered and EqOD reverts toward the WF-LASSO baseline. Under the patched pipeline, EqOD obtains F1 $= 0.383 \pm 0.173$, compared with WF-LASSO's $0.427 \pm 0.109$ (Table~\ref{tab:main_results}). The gap is within statistical noise, but EqOD is no longer beating WF-LASSO here. Without the residual fallback, F1 collapses to $0.033$.

\paragraph{Computational cost.}
The stability path adds 2$\times$ to 4$\times$ overhead (Table~\ref{tab:scaling}). Parallelization of the 50 LASSO runs would reduce the overhead.

\paragraph{Periodic boundary conditions.}
Spectral derivatives require periodic domains.

\paragraph{Explicit failure cases.}
\label{sec:explicit_failure_cases}
Summarizing Tables~\ref{tab:main_results},~\ref{tab:pysindy},~\ref{tab:wsindy},~\ref{tab:coupled}, and~\ref{tab:2d}, Fisher-KPP obtains F1 $\approx 0.44$ to $0.50$ for EqOD and WF-LASSO at all noise levels, while PySINDy and WSINDy recover only partial support. KdV--Burgers at 20\% noise falls back to WF-LASSO, with patched EqOD F1 $= 0.383 \pm 0.173$ and WF-LASSO F1 $= 0.427 \pm 0.109$. KdV at 10\% noise gives EqOD F1 $= 0.920 \pm 0.103$ and WF-LASSO F1 $= 0.960 \pm 0.084$, within statistical noise. KdV at 20\% noise gives EqOD F1 $= 0.787 \pm 0.129$ and WF-LASSO F1 $= 0.887 \pm 0.126$, a directional WF-LASSO advantage after the CV fold permutation fix but still below the strict significance threshold used in Section~\ref{sec:main_results}. The FHN $v$-equation gives EqOD F1 = 0.500 and WF-LASSO F1 = 1.000 because stability selection removes true terms. The LV $v$-equation gives EqOD F1 = 0.333 and WF-LASSO F1 = 0.800 for the same reason. 2D Heat at 10\% gives EqOD F1 $= 0.400 \pm 0.490$ and WF-LASSO F1 = 0.833 because the fallback threshold is not tight enough.

\section{Supplementary Figures}
\label{app:figures}

This appendix collects visual diagnostics that support the quantitative tables and discussion in the main text. The figures are grouped by their role in the EqOD argument, namely stability-selection evidence, full benchmark behavior, extension datasets, pointwise reconstruction diagnostics, and selected-library sizes.

\paragraph{Stability-selection evidence.}
Figure~\ref{fig:stability_profiles_app} shows the empirical selection probabilities that drive the non-Galilean path. The Heat example illustrates the clean separation that makes stability selection effective under noise, while the React-Diff example shows that multi-term reaction-diffusion supports can still remain above the majority threshold.

\begin{figure}[t!]
\centering
\includegraphics[width=\textwidth]{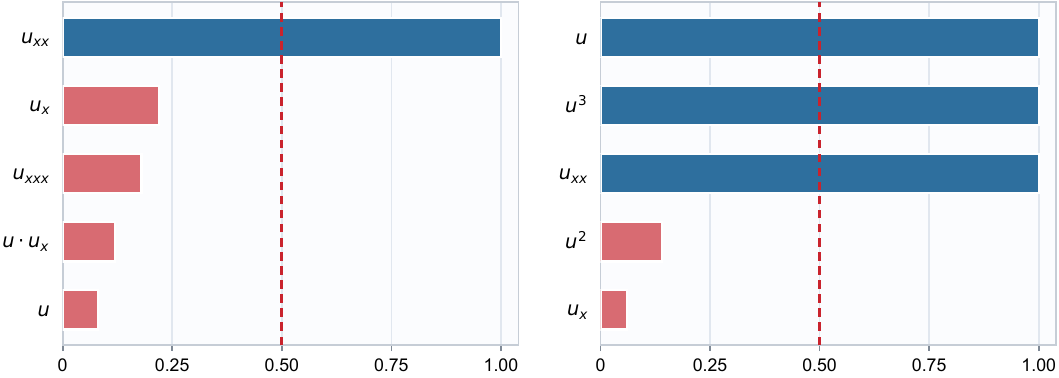}
\caption{Stability selection probability profiles. The left plot shows Heat at 10\% noise, and the right plot shows React-Diff at 5\% noise. Blue bars show terms retained by the majority threshold $\hat{\pi} > 0.5$, red bars show pruned terms, and the dashed red line marks $\hat{\pi}_{\mathrm{thr}}=0.5$.}
\label{fig:stability_profiles_app}
\end{figure}

\paragraph{Noise robustness across the full benchmark.}
Figure~\ref{fig:all_pdes} extends the main-text noise curves from the representative Heat, Burgers, and KdV cases to all 8 scalar PDEs. It also shows that EqOD's largest gains occur when library reduction removes unstable false positives, while the hard cases are small-coefficient or high-noise regimes discussed in Appendix~\ref{sec:limitations}.

\begin{figure}[t!]
\centering
\includegraphics[width=\textwidth]{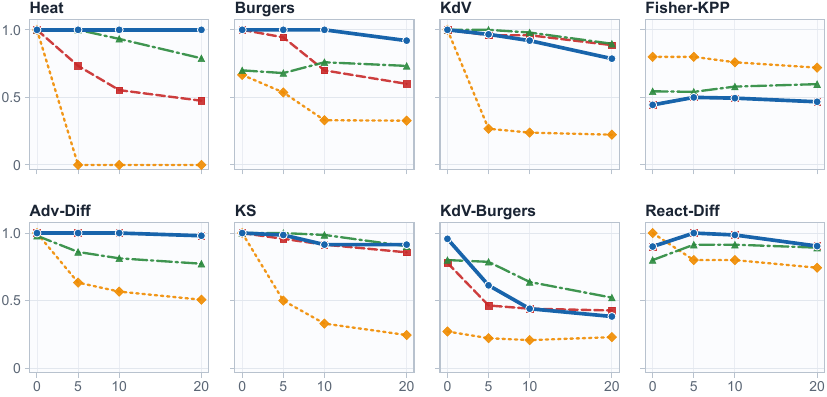}
\caption{F1 as a function of noise for all 8 PDEs with EqOD and three baselines. Curves show mean F1 over seeds. EqOD uses blue circles, WF-LASSO red squares, PySINDy 2.0 orange diamonds, and the WSINDy reimplementation green triangles.}
\label{fig:all_pdes}
\end{figure}

\paragraph{Two-dimensional extension data.}
Figure~\ref{fig:2d_fields} visualizes the fields used in the 2D extension experiments. These examples are included to clarify that the 2D tests involve qualitatively different spatial structure from the 1D benchmark, which helps explain why stability selection is less decisive in Table~\ref{tab:2d}.

\begin{figure}[t!]
\centering
\includegraphics[width=\textwidth]{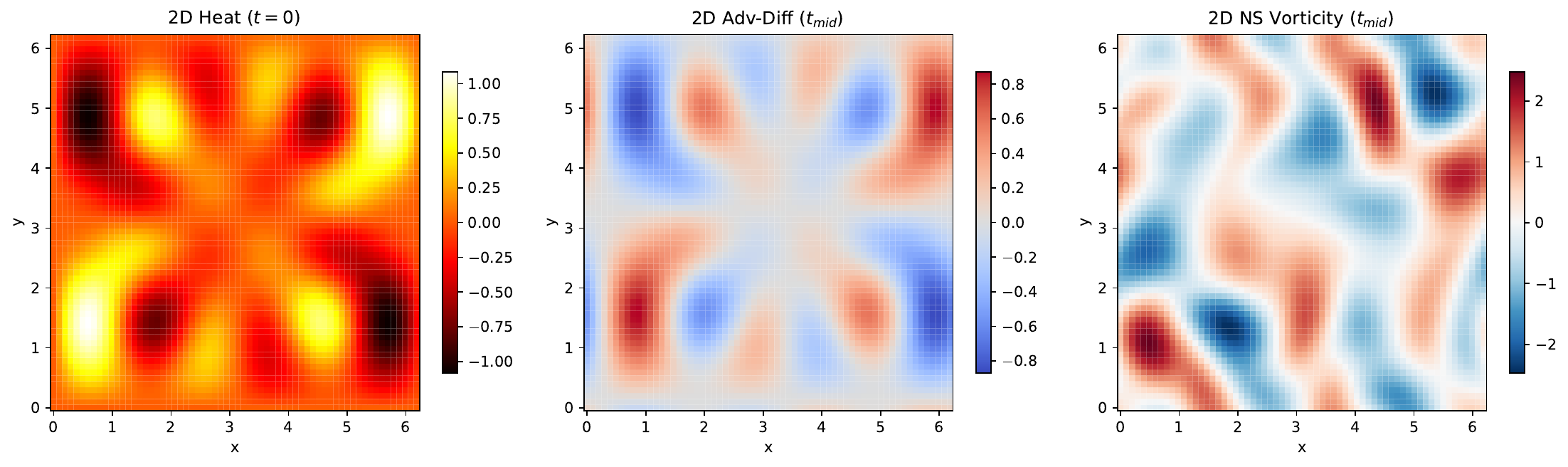}
\caption{2D PDE solution fields for 2D Heat at $t=0$, 2D advection-diffusion at $t_{\mathrm{mid}}$, and 2D NS vorticity at $t_{\mathrm{mid}}$.}
\label{fig:2d_fields}
\end{figure}

\paragraph{Coupled reaction-diffusion data.}
Figure~\ref{fig:coupled_solutions} shows the spatiotemporal patterns behind the FHN and LV coupled-system results. The richer multi-species dynamics lead to higher support ratios than the 1D scalar benchmark, matching the coupled-system limitation described in Appendix~\ref{sec:limitations}.

\begin{figure}[t!]
\centering
\includegraphics[width=\textwidth]{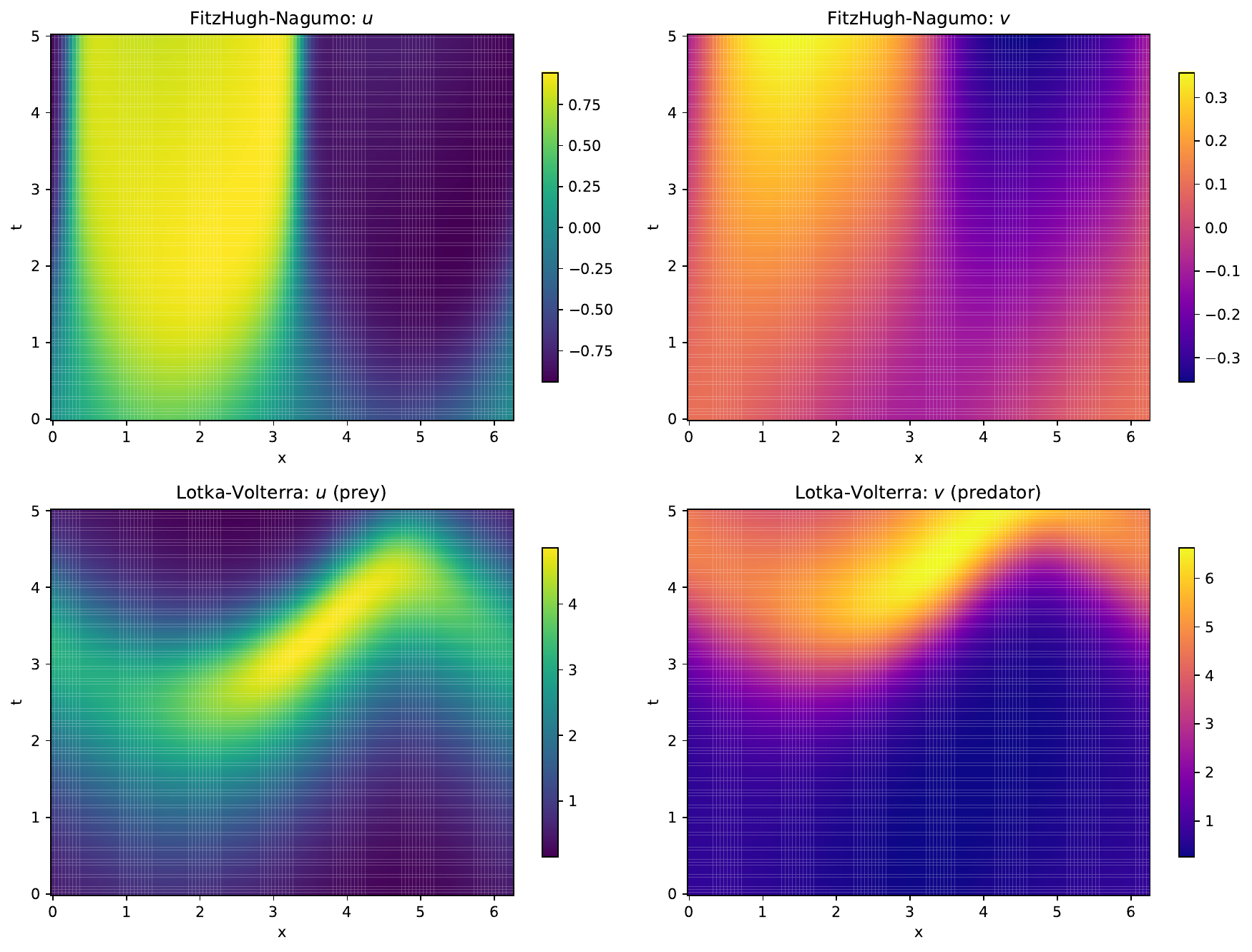}
\caption{Coupled system solutions. FHN uses $u$ as activator and $v$ as inhibitor, while LV uses $u$ as prey and $v$ as predator. Spatiotemporal patterns show the reaction-diffusion dynamics.}
\label{fig:coupled_solutions}
\end{figure}

\paragraph{Pointwise reconstruction diagnostic.}
Figure~\ref{fig:reconstruction} contrasts the identified Burgers right-hand side with a finite-difference estimate of $u_t$. The visible pointwise residual near the shock is a diagnostic for derivative-estimation noise, not evidence of an incorrect support, and motivates the weak-form regression used throughout EqOD.

\begin{figure}[t!]
\centering
\includegraphics[width=\textwidth]{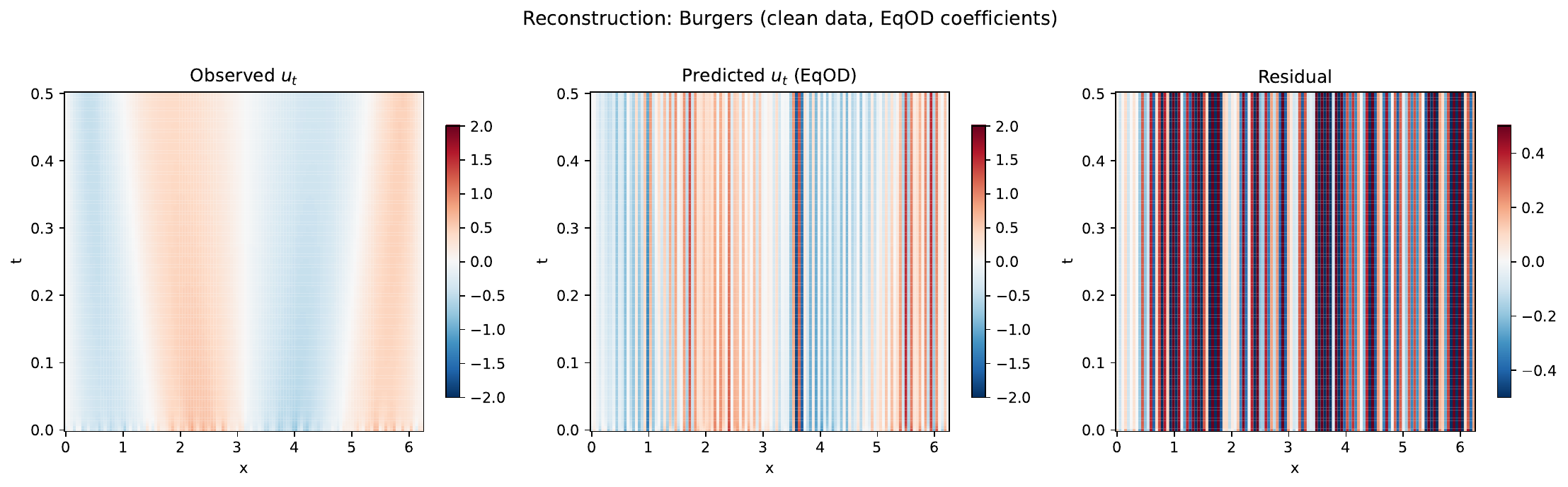}
\caption{Reconstruction comparison for Burgers on clean data. The finite-difference estimate of $u_t$ is noisy near the shock, while the identified equation predicts $u_t = -u \cdot u_x + 0.1 u_{xx}$ using spectral derivatives. The pointwise residual near the shock is dominated by finite-difference estimation error, not by the identified equation. This reconstruction illustrates why EqOD uses the weak form, which avoids pointwise $u_t$ estimation, rather than pointwise fitting.}
\label{fig:reconstruction}
\end{figure}

\paragraph{Selected-library sizes.}
Figure~\ref{fig:library_sizes} summarizes which reduction mechanism is active across noise levels. The fixed 6- to 7-term libraries indicate the symmetry path, while the smaller noise-dependent libraries show stability selection adapting to non-Galilean PDEs such as Heat.

\begin{figure}[t!]
\centering
\includegraphics[width=0.8\textwidth]{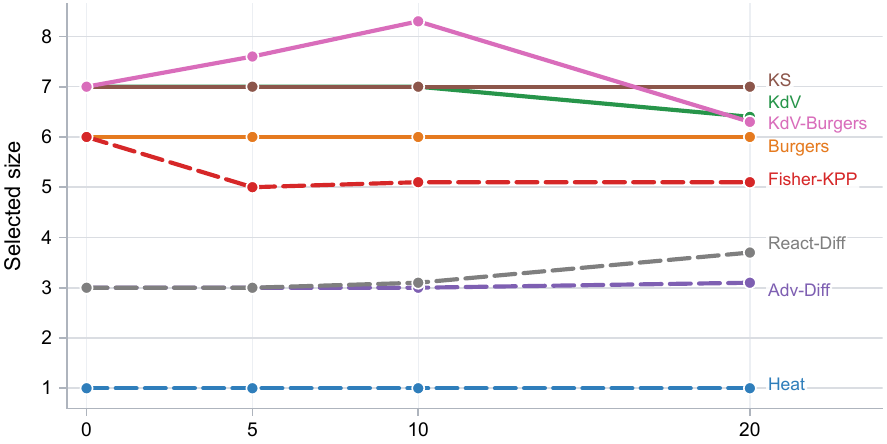}
\caption{Library size selected by EqOD across noise levels. Galilean PDEs, namely Burgers, KdV, KS, and KdV--Burgers, use the symmetry-reduced library with 6 to 7 terms. Non-Galilean PDEs use stability-selected libraries that shrink under noise. For example, Heat uses 1 term at 5\% noise and above.}
\label{fig:library_sizes}
\end{figure}


\end{document}